\def\eqref#1{equation~\ref{#1}}
\def\1{\bm{1}}
\DeclareMathAlphabet{\mathsfit}{\encodingdefault}{\sfdefault}{m}{sl}
\SetMathAlphabet{\mathsfit}{bold}{\encodingdefault}{\sfdefault}{bx}{n}
\DeclareMathOperator*{\argmin}{arg\,min}
\newtheorem*{rep@theorem}{\rep@title}
\newcommand{\newreptheorem}[2]{%
\newenvironment{rep#1}[1]{%
 \def\rep@title{#2 \ref{##1}}%
 \begin{rep@theorem}}%
 {\end{rep@theorem}}}
\newtheorem{theorem}{Theorem}[section]
\newtheorem{definition}[theorem]{Definition}
\newtheorem{lemma}[theorem]{Lemma}
\newtheorem{corollary}[theorem]{Corollary}
\newtheorem{remark}[theorem]{Remark}
\newcommand{\norm}[1]{\left\lVert#1\right\rVert}
\newcommand{\ie}{\textit{i}.\textit{e}., }
\definecolor{citationcolor}{RGB}{80, 90, 180}
\newcommand{\mc}{\mathcal}
\newcommand{\mb}{\mathbf}
\newcommand{\mbb}{\mathbb}
\title{Amortized Control of Continuous State Space Feynman-Kac Model for Irregular Time Series}
\author{Byoungwoo~Park,~
    Hyungi~Lee,~
    Juho~Lee \\
    KAIST \\
    \texttt{\{bw.park,\,lhk2708,\,juholee\}@kaist.ac.kr}
    }
\begin{document}

\maketitle

\begin{abstract}
Many real-world datasets, such as healthcare, climate, and economics, are often collected as irregular time series, which poses challenges for accurate modeling. In this paper, we propose the Amortized Control of continuous State Space Model (ACSSM) for continuous dynamical modeling of time series for irregular and discrete observations. We first present a multi-marginal Doob's $h$-transform to construct a continuous dynamical system conditioned on these irregular observations. Following this, we introduce a variational inference algorithm with a tight evidence lower bound (ELBO), leveraging stochastic optimal control (SOC) theory to approximate the intractable Doob's $h$-transform and simulate the conditioned dynamics. To improve efficiency and scalability during both training and inference, ACSSM leverages auxiliary variable to flexibly parameterize the latent dynamics and amortized control. Additionally, it incorporates a simulation-free latent dynamics framework and a transformer-based data assimilation scheme, facilitating parallel inference of the latent states and ELBO computation. Through empirical evaluations across a variety of real-world datasets, ACSSM demonstrates superior performance in tasks such as classification, regression, interpolation, and extrapolation, while maintaining computational efficiency. Code is available at
\url{https://github.com/bw-park/ACSSM}.
\end{abstract}

\section{Introduction}

State space models (SSMs) are widely used for modeling time series data~\citep{fraccaro2017disentangled, rangapuram2018deep, de2020normalizing}. However, SSMs typically assume evenly spaced observations, whereas many real-world datasets, such as healthcare~\citep{silva2012predicting}, climate~\citep{menne2015long}, are often collected as irregular time series. This poses challenges for accurate modeling. It provokes the neural differential equations~\citep{chen2018neural,rubanova2019latent, li2020scalable, kidger2020neural, zeng2023latent} for a continuous-time dynamical modeling of a time-series. Moreover, by incorporating stochastic dynamical models with SSM models, the application of continuous-dynamical state space models (CD-SSM)~\citep{jazwinski2007stochastic} with neural networks have emerged which allows for more flexible modeling~\citep{schirmer2022modeling, ansari2023neural}. However, they typically rely on the Bayesian recursion to approximate the desired latent posterior distribution (\textit{filtering/smoothing} distribution) involves sequential prediction and updating, resulting in computational costs that scale with the length of observations~\citep{sarkka2020temporal}.

Instead of relying on Bayesian recursion, we propose an Amortized Control of continuous State Space Model (ACSSM), a variational inference (VI) algorithm that directly approximates the posterior distribution using a measure-theoretic formulation, enable a simulation of continuous trajectories from such distributions. To achieve this, we introduce a \textit{Feynman-Kac} model~\citep{moral2011feynman, chopin2020introduction}, which facilitates a thorough sequential analysis. Within this formulation, we propose a multi-marginal Doob's $h$-transform to provide conditional latent dynamics described by stochastic differential equations (SDEs) which conditioned on a irregularly sampled, set of discrete observations. It extends the previous Doob's $h$-transform~\citep{chopin2023computational} by incorporating multi-marginal constraints instead of a single terminal constraint. 

Considering that the estimation of the Doob's $h$-transform is infeasible in general, we utilize the theory of stochastic optimal control (SOC)~\citep{fleming2006controlled,carmona2016lectures} to simulate the conditioned SDEs by approximating the Doob's $h$-transform. It allow us to propose a tight evidence lower bound (ELBO) for the aforementioned VI algorithm by establishing a fundamental connection between the partial differential equations (PDEs) associated with Doob's $h$-transform and SOC. The Doob's $h$-transform often referred to as the \textit{twist}-function in Sequential Monte Carlo (SMC) literature~\citep{guarniero2017iterated} to approximate the smoothing distributions. Building on this, \citep{heng2020controlled} introduced an algorithm to approximate the twisted transition kernel directly, while a recent concurrent study~\citep{lu2024guidance} extended this approach to continuous-time settings. However, both studies primarily emphasize approximation methods rather than practical applications.

In practical situations, the computation of ELBO for a VI algorithm might impractical due to the instability and high memory demands associated with gradient computation of the approximated stochastic dynamics over the entire sequence interval~\citep{liu2024generalized,park2024stochastic}. To address this issue, we propose two efficient modeling approaches:
1) We establish amortized inference by introducing an auxiliary variable to the latent space, generated by a neural network encoder-decoder. It maps the high-dimensional time-series into a suitable low-dimensional space, allowing more flexible parameterization of the latent dynamics. Moreover, amortization allows the inference of the posterior distribution for a novel time-series sequence without relying on Bayesian recursion by incorporating the learned control function.
2) We leverage the simulation-free property, which enables closed-form sampling from intermediate latent marginal distributions that can be computed in a temporally parallel way. Additionally, we explore a more flexible linear approximation of the drift function in controlled SDEs to enhance the efficiency of the proposed controlled dynamics.

We evaluated ACSSM on several time-series tasks across various real-world datasets. Our experiments show that ACSSM consistently outperforms existing baseline models in each tasks, demonstrating its effectiveness in capturing the underlying dynamics of irregular time-series. Additionally, ACSSM achieves significant computational efficiency, enabling faster training times compared to dynamics-based models that rely on numerical simulations. A summary of the key concepts of  ACSSM, along with related works, is provided in Appendix~\ref{sec:related works}. We summarize our contributions as follows:
\begin{itemize}[leftmargin=20pt]
    \item We extend the theory of Doob's $h$-transform to a multi-marginal cases. This indicates the existence of a class of conditioned SDEs that depend on future observations, where the solutions of these SDEs lead to the true posterior path measure within the framework of CD-SSM.
    \item We reformulate the simulation of conditioned SDEs as a SOC problem to approximating an impractical Doob's $h$-transform. By leveraging the connection between SOC theory and Doob's $h$-transform, we propose a variational inference algorithm with a tight ELBO.
    \item For practical real-world applications, we introduce an efficient and scalable modeling approach that enables parallelization of latent dynamic simulation and ELBO computation.
    \item We demonstrate its superior performance across various real-world irregularly sampled time-series tasks, including per-time point classification, regression, and sequence interpolation and extrapolation, all with computational efficiency.
\end{itemize}

\paragraph{Notation} Throughout this paper, we denote path measure by $\mathbb{P}^{(\cdot)}$, defined on the space of continuous functions $\Omega = C([0, T], \mbb{R}^d)$. We sometimes denote with $\mbb{P}$ the expectation as $\mathbb{E}^{t, \mb{x}}_{\mathbb{P}}[\cdot] = \mathbb{E}_{\mathbb{P}}[\cdot |\mb{X}_t = \mb{x}]$, where the stochastic processes corresponding to $\mathbb{P}^{(\cdot)}$ are represented as $\mb{X}^{(\cdot)}$ and their time-marginal distribution at time $t \in [0, T]$ is given by the push-forward measure $\mu^{(\cdot)}_t := (\mb{X}^{(\cdot)}_t)_{\#}\mathbb{P}^{(\cdot)}$ with marginal density $\mb{p}^{(\cdot)}_t$. This marginal density represents the Radon-Nikodym derivative $d\mu_t^{(\cdot)}(\mb{x}) = \mb{p}^{(\cdot)}_t(\mb{x}) d\mc{L}(\mb{x})$, where $\mc{L}$ denotes the Lebesgue measure. Additionally, for a function $\mc{V} : [0, T] \times \mbb{R}^d \to \mathbb{R}$, we define the first and second derivatives with respect to $\mb{x} \in \mbb{R}^d$ as $\nabla_{\mb{x}}\mc{V}$ and $\nabla_{\mb{xx}}\mc{V}$, respectively, and the derivative with respect to time $t \in [0, T]$ as $\partial_t \mc{V}$. For a sequence of functions $\{\mc{V}_i\}_{i \in [1:k]}$, we will denote $\mc{V}_i(t, \mb{x}) := \mc{V}_{i, t}$ and $[1:k]=\{1, \cdots, k\}$. Finally, the Kullback-Leibler (KL) divergence between two probability measures $\mu$ and $\nu$ is defined as $D_{\text{KL}}(\mu|\nu) = \int_{\mathbb{R}^d} \log \frac{d\mu}{d\nu}(\mathbf{x}) d\mu(\mathbf{x})$ when $\mu$ is absolutely continuous with respect to $\nu$, and $D_{\text{KL}}(\mu|\nu) = +\infty$ otherwise.

\section{Preliminaries}\label{sec:preliminaries}

\paragraph{Continuous-Discrete State Space Model}\label{sec:Section controlled CDSSM} Consider for a set of time-stamps (regular or irregular) $\{t_i\}_{i=0}^k $ over an interval $\mc{T} = [0, T]$, $\ie 0 = t_0 \leq \cdots \leq t_k = T$. The CD-SSM assumes a continuous-time Markov state trajectory $\mb{X}_{0:T}$ in latent space $\mbb{R}^d$ is given as a solution of the SDE:
\begin{equation}\label{eq:prior dynamics}\textit{(Prior State)}\quad 
d\mb{X}_t = b(t, \mb{X}_t)dt +  d\mb{W}_t,
\end{equation} 
where $\mb{X}_0 \sim \mu_0$ and $\{\mb{W}_t\}_{t \in [0, T]}$ is a $\mbb{R}^d$-valued Wiener process that is independent of the $\mu_0$. Since $\mb{X}_t$ is Markov process, the time-evolution of marginal distribution $\mu_t$ is governed by a transition density, which is the solution to the Fokker-Planck equation assocaited with $\mb{X}_t$. This allows us to define a path measure $\mathbb{P}$ that represent the weak solutions of the SDE in (\ref{eq:prior dynamics}) over an interval $[0, T]$\footnote{See details on Appendix~\ref{definition:path_measure}.}.

For a measurement model $g_{i}(\mb{y}_{t_i}| \mb{X}_{t_i})$, we consider the case that we have only access to the realization of the (latent) observation process at each discrete-time stamps $\{t_i\}_{i \in [1:k]}$, $\ie \mb{y}_{t_i} \sim g_{i}(\mb{y}_{t_i} | \mb{X}_{t_i}), \forall i \in [1:k]$. In this paper, our goal is to infer the \textit{classes of SDEs} which inducing the \textit{filtering/smoothing} path measure $\mathbb{P}^{\star} := \mathbb{P}^{\star}(\cdot | \mc{H}_{t_k})$, the conditional distribution over the interval $[0, T]$ for a given $\mathbb{P}$ and a set of observations up to time $t_k$, $\mc{H}_{t_k} = \{\mb{y}_{t_i} | i \leq k\}$:
\begin{align}\label{eq: posterior path measure}\textit{(Posterior Dist.)}\quad
    d\mathbb{P}^{\star}(\mb{X}_{0:T} | \mc{H}_{t_k}) &= \frac{1}{\mb{Z}(\mc{H}_{t_k})} \prod_{i=1}^K g_i(\mb{y}_{t_i} | \mb{X}_{t_i}) d\mathbb{P}(\mb{X}_{0:T})
\end{align} 
where the normalizing constant $\mb{Z}(\mc{H}_{t_k}) = \mathbb{E}_{\mathbb{P}}\left[\prod_{i=1}^K g_i(\mb{y}_{t_i} | \mb{X}_{t_i})\right]$ serve as a observations likelihood. The path measure formulation of the posterior distribution described in (\ref{eq: posterior path measure}) referred to as \textit{Feynman-Kac models}. See~\citep{moral2011feynman, chopin2020introduction} for a more comprehensive understanding.

\section{Controlled Continuous-Discrete State Space Model}
In this section, we introduce our proposed model, ACSSM. First, we present the Multi-marginal Doob's $h$-transform, outlining the continuous dynamics for $\mathbb{P}^{\star}$ in \Cref{sec:doob's h transform}. Then, in \Cref{sec:stochastic optimal control}, we frame the VI for approximating $\mathbb{P}^{\star}$ using SOC. To support scalable real-world applications, we discuss efficient modeling and amortized inference in Section~\ref{sec:efficient modeling}\footnote{Proofs and detailed derivations are provided in Appendix~\ref{sec:proofs and derivations}.}.

\subsection{Multi Marginal Doob's \texorpdfstring{$h$}{h}-transform}\label{sec:doob's h transform}
Before applying VI to approximate the posterior distribution $\mathbb{P}^{\star}$ in SOC, we first show that \textit{a class of SDEs exists} whose solutions induce a path measure equivalent to $\mathbb{P}^{\star}$ in~(\ref{eq: posterior path measure}). This formulation provides a valuable insight for defining an appropriate objective function for the SOC problem in the next section. To do so, we first define a sequence of normalized potential functions $\{f_{i}\}_{i \in [1:k]}$, where each $f_{i} : \mbb{R}^d \to \mathbb{R}_{+}$, for all $i \in [1:k]$,
\begin{equation}\label{eq:normalized potential}
    f_{i}(\mb{y}_{t_i} | \mb{x}_{t_i}) = \frac{g_{i}(\mb{y}_{t_i} | \mb{x}_{t_i})}{\mb{L}_{i}(g_{i})}, 
\end{equation}
where $\mb{L}_{i}(g_{i}) = \int_{\mbb{R}^d} g_{t_i}(\mb{y}_{t_i} | \mb{x}_{t_i}) d\mathbb{P}(\mb{x}_{0:T})$, for all $i \in [1:k]$ is the normalization constant. Then, we can observe that the potential functions $\{f_i\}_{i \in [1:k]}$ defined in~(\ref{eq:normalized potential}) satisfying the normalizing property  $\ie \mathbb{E}_{\mathbb{P}}[\prod_{i=1}^k f_i(\mb{y}_{t_i} | \mb{x}_{t_i})] = 1$ and $d\mathbb{P}^{\star}(\mb{x}_{0:T} | \mc{H}_{t_k}) = \prod_{i=1}^k f_{i}(\mb{y}_{t_i}|\mb{x}_{t_i}) d\mathbb{P}(\mb{x}_{0:T})$ from~(\ref{eq: posterior path measure}). Now, with the choice of reference measure $\mathbb{P}$ induced by Markov process in~(\ref{eq:prior dynamics}), we can define the conditional SDEs conditioned on $\mc{H}_{t_k}$ which inducing the desired path measure $\mathbb{P}^{\star}$. Note that this is an extension of the original Doob's $h$-transform~\citep{doob1957conditional}, incorporating multiple marginal constraints. Below, we summarize the relevant result.
\begin{theorem}[Multi-marginal Doob's $h$-transform]\label{theorem:doob's h transform} Let us define a sequence of functions $\{h_i\}_{i \in [1:k]}$, where each $h_i : [t_{i-1}, t_i) \times \mbb{R}^d \to \mathbb{R}_{+}$, for all $i \in [1:k]$, is a conditional expectation $h_i(t, \mb{x}_t) := \mathbb{E}_{\mathbb{P}}\left[ \prod_{j \geq i}^k f_{j}(\mb{y}_{t_j}|\mb{X}_{t_j})  | \mb{X}_t = \mb{x}_t\right]$,
where $\{f_i\}_{i \in [1:k]}$ is defined in~(\ref{eq:normalized potential}). Now, we define a function $h : [0, T] \times \mbb{R}^d \to \mathbb{R}_{+}$ by integrating the functions $\{h_{i}\}_{i \in [1:k]}$,
\begin{equation}\label{eq:h function}
    h(t, \mb{x}) := \sum_{i=1}^k h_i(t, \mb{x}) \mb{1}_{[t_{i-1}, t_i)}(t).
\end{equation}
Then, with the initial condition $\mu^{\star}_0(d\mb{x}_0) = h_1(t_0, \mb{x}_0)\mu_0(d\mb{x}_0)$,
the solution of the following conditional SDE inducing 
the posterior path measure $\mathbb{P}^{\star}$ in~(\ref{eq: posterior path measure}):
\begin{equation}\label{eq:conditioned SDE}
    \text{(Conditioned State)}\quad d\mb{X}^{\star}_t = \left[b(t, \mb{X}^{\star}_t) +  \nabla_{\mb{x}} \log h(t, \mb{X}^{\star}_t)\right] dt +  d\mb{W}_t
\end{equation}
\end{theorem}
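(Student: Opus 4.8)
The plan is to exhibit the Radon--Nikodym derivative of the path measure $\mathbb{Q}$ induced by the conditioned SDE in~(\ref{eq:conditioned SDE}) with respect to the prior $\mathbb{P}$, and to verify that it coincides with $\prod_{i=1}^k f_i(\mb{y}_{t_i}\mid\mb{X}_{t_i})$, which by~(\ref{eq: posterior path measure}) is exactly $d\mathbb{P}^{\star}/d\mathbb{P}$. Since the drift of~(\ref{eq:conditioned SDE}) is that of the prior plus the gradient term $\nabla_{\mb{x}}\log h$, this is a Girsanov-type change of measure combined with the initial reweighting $d\mu^{\star}_0/d\mu_0 = h_1(t_0,\cdot)$; I would organize the whole computation interval by interval on each $[t_{i-1},t_i)$, where $h\equiv h_i$.

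First I would establish that on each interval $[t_{i-1}, t_i)$ the process $h_i(t, \mb{X}_t)$ is a $\mathbb{P}$-martingale. This follows from the Markov property of $\mb{X}$ together with the tower property, because for $t < t_i$ every factor $f_j(\mb{y}_{t_j}\mid\mb{X}_{t_j})$ with $j \geq i$ depends only on the future of $\mb{X}$, so $h_i(t,\mb{X}_t) = \mathbb{E}_{\mathbb{P}}[\prod_{j\geq i} f_j \mid \mc{F}_t]$. Assuming enough regularity of $b$ and of the potentials so that $h_i \in C^{1,2}$ and is strictly positive, the martingale property forces $h_i$ to solve the backward Kolmogorov equation $\partial_t h_i + b\cdot\nabla_{\mb{x}} h_i + \tfrac12\Tr(\nabla_{\mb{xx}} h_i) = 0$ on $[t_{i-1},t_i)$. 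Applying It\^o's formula to $\log h_i(t,\mb{X}_t)$ and using this PDE to cancel the finite-variation part yields
\begin{equation*}
\frac{h_i(t_i^-, \mb{X}_{t_i})}{h_i(t_{i-1}, \mb{X}_{t_{i-1}})} = \exp\!\left(\int_{t_{i-1}}^{t_i}\nabla_{\mb{x}}\log h_i\cdot d\mb{W}_s - \tfrac12\int_{t_{i-1}}^{t_i}|\nabla_{\mb{x}}\log h_i|^2\,ds\right),
\end{equation*}
which is precisely the Girsanov exponential for adding the drift $\nabla_{\mb{x}}\log h_i$ on that interval.

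Next I would record the matching relation at the observation times. Taking $t\uparrow t_i$ in the definition of $h_i$ and factoring out the now-measurable factor $f_i$ gives $h_i(t_i^-, \mb{x}) = f_i(\mb{y}_{t_i}\mid\mb{x})\, h_{i+1}(t_i, \mb{x})$, with the convention $h_{k+1}\equiv 1$. Multiplying the per-interval Girsanov factors over $i=1,\dots,k$ and substituting this relation, the $h$-terms telescope: the numerator $\prod_i h_{i+1}(t_i,\mb{X}_{t_i})$ and the denominator $\prod_i h_i(t_{i-1},\mb{X}_{t_{i-1}})$ cancel up to the single surviving factor $h_1(t_0,\mb{X}_0)^{-1}$, so that
\begin{equation*}
\frac{d\mathbb{Q}}{d\mathbb{P}}(\mb{X}_{0:T}) = \frac{d\mu^{\star}_0}{d\mu_0}(\mb{X}_0)\cdot\frac{\prod_{i=1}^k f_i(\mb{y}_{t_i}\mid\mb{X}_{t_i})}{h_1(t_0,\mb{X}_0)} = \prod_{i=1}^k f_i(\mb{y}_{t_i}\mid\mb{X}_{t_i}),
\end{equation*}
where the last equality uses the prescribed initial condition $d\mu^{\star}_0/d\mu_0 = h_1(t_0,\cdot)$. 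This is the defining density of $\mathbb{P}^{\star}$, hence $\mathbb{Q} = \mathbb{P}^{\star}$.

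I expect the main obstacle to be the rigorous justification of the Girsanov change of measure rather than the algebra: I must ensure the exponential local martingale above is a genuine martingale (e.g.\ via a Novikov or Kazamaki condition on $\nabla_{\mb{x}}\log h_i$, or through a localization/truncation argument), and that $h_i$ is strictly positive and regular enough for both the PDE and It\^o's formula to hold --- these are exactly the points where integrability assumptions on the potentials $g_i$ and growth conditions on $b$ must enter. A secondary care point is the piecewise-in-time structure: the drift $\nabla_{\mb{x}}\log h$ is discontinuous at the observation times $\{t_i\}$, so the Girsanov argument must be run separately on each $[t_{i-1},t_i)$ and stitched together through the jump relation $h_i(t_i^-,\cdot)=f_i\,h_{i+1}(t_i,\cdot)$, which is what makes the telescoping and the cancellation of normalizers consistent.
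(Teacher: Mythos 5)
Your proposal is correct and reaches the same telescoping identity as the paper, but it gets there by a genuinely different route for the key analytical step. The paper \emph{defines} the conditioned transition kernel on each interval as $\mb{P}^{h_i}_i(\mb{x}_t,d\mb{x}_{t_i}) = \frac{h_i(t_i,\mb{x}_{t_i})}{h_i(t,\mb{x}_t)}\mb{P}_i(\mb{x}_t,d\mb{x}_{t_i})$, then computes the infinitesimal generator of this kernel from the limit definition (a fairly long argument via integration by parts for semimartingales, quadratic variation, H\"older and dominated convergence) to conclude that the generator is $\mc{A}_t + (\nabla_{\mb{x}}\cdot)^{\top}\nabla_{\mb{x}}\log h_i$, i.e.\ that the conditioned kernel is realized by the SDE with drift $b+\nabla_{\mb{x}}\log h_i$; only then does it write the path measure as a product of kernels and telescope. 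You instead start from the SDE, invoke Girsanov to express $d\mathbb{Q}/d\mathbb{P}$ as a product of stochastic exponentials, and identify each exponential with the ratio $h_i(t_i^-,\mb{X}_{t_i})/h_i(t_{i-1},\mb{X}_{t_{i-1}})$ by applying It\^o's formula to $\log h_i$ and cancelling the drift with the backward Kolmogorov equation (which the paper only derives later, in Lemma~\ref{lemma:hopf-cole transformation}, via Feynman--Kac). Both arguments then share the jump relation $h_i(t_i,\cdot)=f_i\,h_{i+1}(t_i,\cdot)$, the convention $h_{k+1}\equiv 1$, the telescoping cancellation, and the initial reweighting $d\mu_0^{\star}/d\mu_0 = h_1(t_0,\cdot)$. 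Your route is shorter and directly targeted at the statement as phrased (it verifies that the \emph{given} SDE induces $\mathbb{P}^{\star}$), at the cost of having to justify that the stochastic exponential is a true martingale (Novikov/localization) and that $h_i$ is positive and $C^{1,2}$ --- concerns you correctly flag; the paper's generator computation is heavier but constructs the conditioned dynamics from the kernel rather than assuming its form, and sidesteps an explicit Novikov-type verification for $\nabla_{\mb{x}}\log h_i$ by working at the level of transition kernels.
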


Theorem~\ref{theorem:doob's h transform} demonstrates that we can obtain sample trajectories from $\mathbb{P}^{\star}$ in~(\ref{eq: posterior path measure}) by simulating the dynamics in~(\ref{eq:conditioned SDE}). However, estimating the functions $\{h_i\}_{i \in [1:k]}$ requires both the estimation of the sequence of normalization constants $\{\mb{L}_i\}_{i \in [1:k]}$ and the computation of conditional expectations, which is infeasible in general. For these reasons, we propose a VI algorithm to approximate the functions $\{h_i\}_{i \in [1:k]}$ and derive the variational bound for the VI by exploiting the theory of SOC.

\subsection{Stochastic Optimal Control}\label{sec:stochastic optimal control}
The SOC~\citep{fleming2006controlled, carmona2016lectures} is a mathematical framework that deals with the problem of finding control policies in order to achieve certain object. In this paper, we define following \textit{control-affine} SDE, adjusting the prior dynamics in~(\ref{eq:prior dynamics}) with a Markov control $\alpha : [0, T] \times \mbb{R}^d \to \mbb{R}^d$:
\begin{equation}\label{eq:controlled dynamics}
\textit{(Controlled State)}\quad d\mb{X}^{\alpha}_t = \left[b(t, \mb{X}^{\alpha}_t) + \alpha(t, \mb{X}^{\alpha}_t)\right]dt + d\tilde{\mb{W}}_t,
\end{equation}
where $\mb{X}^{\alpha}_0 \sim \mu_0$. We refer to the SDE in~(\ref{eq:controlled dynamics}) as \textit{controlled} SDE. We can expect that for a well-defined function set $\alpha \in \mbb{A}$, the class of controlled SDE~(\ref{eq:controlled dynamics}) encompass the SDE in~(\ref{eq:conditioned SDE}). This implies that the desired path measure $\mathbb{P}^{\star}$ can be achieved through the SOC formulation. In general, the goal of SOC is to find the \textit{optimal control} policy $\alpha^{\star}$ that minimizes a given arbitrary cost function $\mathcal{J}(t,\mathbf{x}_t, \alpha)$ $\ie \alpha^{\star}(t, \mb{x}_t) = \argmin_{\alpha \in \mbb{A}} \mathcal{J}(t, \mb{x}_t, \alpha)$ and determine the \textit{value function} $\mathcal{V}(t,\mathbf{x}_t) = \min_{\alpha \in \mbb{A}}\mathcal{J}(t,\mathbf{x}_t,\alpha)$,
where $\mathcal{V}(t, \mathbf{x}_t) := \mathcal{J}(t, \mathbf{x}_t,\alpha^{\star}) \leq \mathcal{J}(t, \mathbf{x}_t,\alpha)$ holds for any $\alpha \in \mbb{A}$.  Below, we demonstrate how, with a carefully chosen cost function, the theory of SOC establishes a connection between two classes of SDEs (\ref{eq:conditioned SDE}) and (\ref{eq:controlled dynamics}). This connection enables the development of a variational inference algorithm with a \textit{tight} evidence lower bound (ELBO). To this end, we consider the following cost function:
\begin{equation}\label{eq:cost function}
\mc{J}(t, \mb{x}_{t}, \alpha) = 
\mathbb{E}_{\mathbb{P}^{\alpha}}\left[\int_t^T \frac{1}{2}\norm{\alpha(s, \mb{X}^{\alpha}_s)}^2 ds - \sum_{i: \{t \leq t_i\}} \log f_i(\mb{y}_{t_i} | \mb{X}^{\alpha}_{t_i}) | \mb{X}^{\alpha}_t = \mb{x}_t  \right].
\end{equation}
Then, the value function $\mc{V}$ for~(\ref{eq:cost function}) satisfies the dynamic programming principle~\citep{carmona2016lectures}: 
\begin{theorem}[Dynamic Programming Principle]\label{theorem:dynamic programming principle} Let us consider a sequence of left continuous functions $\{\mc{V}_i\}_{i\in[1:k+1]}$, where each $\mc{V}_i \in C^{1,2}([t_{i-1}, t_i) \times \mbb{R}^d)$
\begin{equation}
    \mc{V}_i(t, \mb{x}_t) := \min_{\alpha \in \mbb{A}} \mathbb{E}_{\mathbb{P}^{\alpha}} \left[\int_{t_{i-1}}^{t_i}  \frac{1}{2}\norm{\alpha_s}^2 ds - \log f_{i}(\mb{y}_{t_i} | \mb{X}^{\alpha}_{t_i}) + \mc{V}_{i+1}(t_{i}, \mb{X}^{\alpha}_{t_i}) | \mb{X}_{t} = \mb{x}_t \right],
\end{equation}
for all $i \in [1:k]$ and $\mc{V}_{k+1}=0$. Then, for any $0 \leq t \leq u \leq T$, the value function $\mc{V}$ for the cost function in~(\ref{eq:cost function}) satisfying the recursion defined as follows:
\begin{equation}\label{eq:value function}
    \mc{V}(t, \mb{x}_t) = \min_{\alpha \in \mbb{A}}\mbb{E}_{\mathbb{P}^{\alpha}}\left[ \int_t^{t_{I(u)}}\frac{1}{2}\norm{\alpha_s}^2 ds - \hspace{-6mm} \sum_{i: \{t \leq t_i \leq t_{I(u)}\}}  \hspace{-6mm} \log f_i + \mc{V}_{I(u)+1}(t_{I(u)}, \mb{X}^{\alpha}_{t_{I(u)}})   | \mb{X}^{\alpha}_t = \mb{x}_t\right],
\end{equation}
with the indexing function $I(u)= \max \{i \in [1:k]| t_i \leq u\}$ and $f_i=f_i(\mb{y}_{t_i} | \mb{X}^{\alpha}_{t_i})$
\end{theorem}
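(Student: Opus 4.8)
The plan is to prove this as the classical dynamic programming (Bellman) recursion, adapted to a cost that is additive across the observation grid. The whole statement reduces to a single identity, namely that the global value function $\mc{V}$ agrees on each subinterval with the recursively defined $\mc{V}_i$, i.e.\ $\mc{V}(t,\mb{x})=\mc{V}_i(t,\mb{x})$ for $t\in[t_{i-1},t_i)$. Once this is established, the displayed recursion in~(\ref{eq:value function}) follows by unrolling the definition of $\mc{V}_i$ from $i=I(u)$ downward and collecting the running-cost integrals and the $-\log f_i$ terms over $\{i: t\le t_i\le t_{I(u)}\}$. I would prove the identity by backward induction on $i$, with base case $i=k$ (where $t_k=T$ and $\mc{V}_{k+1}=0$, so $\mc{V}_k$ is literally the single-interval problem and coincides with $\mc{V}$ on $[t_{k-1},t_k)$, whose only relevant observation is $t_k$), and the inductive step supplied by the DPP across the single observation time $t_i$.

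The two ingredients are the additive decomposition of the cost functional and the Markov/flow structure of the controlled diffusion~(\ref{eq:controlled dynamics}). Fix $t\le u$ and set $j=I(u)$. For any $\alpha\in\mbb{A}$, I would split $\int_t^T=\int_t^{t_j}+\int_{t_j}^T$ and $\sum_{i:\,t\le t_i}=\sum_{i:\,t\le t_i\le t_j}+\sum_{i:\,t_j<t_i}$, noting that each $-\log f_i(\mb{y}_{t_i}\mid\mb{X}^{\alpha}_{t_i})$ is $\mathcal{F}_{t_i}$-measurable and integrable thanks to the normalization of $f_i$. The cost-to-go from $(t,\mb{x})$ then decomposes as the cost accrued on $[t,t_j]$ plus the cost-to-go from $(t_j,\mb{X}^{\alpha}_{t_j})$ over $[t_j,T]$.

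For the inequality ``$\ge$'', I would condition on $\mb{X}^{\alpha}_{t_j}$ and use the Markov property: the conditional expectation of the $[t_j,T]$-part, for $\alpha$ restricted to $[t_j,T]$, is bounded below by $\mc{V}(t_j,\mb{X}^{\alpha}_{t_j})=\mc{V}_{j+1}(t_j,\mb{X}^{\alpha}_{t_j})$ by the induction hypothesis, so every admissible $\alpha$ is bounded below by the right-hand side of~(\ref{eq:value function}); taking the infimum over $\alpha$ gives $\mc{V}(t,\mb{x})\ge$ RHS. For ``$\le$'', given $\eps>0$ I would take an $\eps$-optimal control on $[t,t_j]$ and, for each terminal state $\mb{z}$, an $\eps$-optimal continuation from $(t_j,\mb{z})$, and then concatenate them into one admissible control via the flow property of $\mbb{A}$; the resulting total cost is within $\mc{O}(\eps)$ of the RHS, and letting $\eps\to0$ yields $\mc{V}(t,\mb{x})\le$ RHS.

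I expect the ``$\le$'' direction to be the main obstacle: pasting the near-optimal continuations requires that $\mbb{A}$ be closed under concatenation and that the continuation be chosen measurably in the intermediate state $\mb{z}$, which rests on a measurable selection argument. I would also need to verify that the conditioning via the Markov property is legitimate for the admissible class (e.g.\ Markov/feedback controls), and that the regularity asserted for $\{\mc{V}_i\}$ (namely $C^{1,2}$ on $[t_{i-1},t_i)$ together with left-continuity at the grid points) is compatible with these manipulations, so that the value function is well defined and the interchange of $\min$ and $\mbb{E}$ is justified.
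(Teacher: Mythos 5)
Your proposal is correct and follows essentially the same route as the paper: the classical two-sided dynamic programming argument, with one inequality obtained from an $\epsilon$-optimal control together with the Markov property and the recursive decomposition of the cost across the observation grid, and the other from concatenating a control on $[t,t_{I(u)})$ with a near-optimal continuation on $[t_{I(u)},T]$. The only differences are presentational — you frame it as a backward induction establishing $\mc{V}=\mc{V}_i$ on each subinterval and explicitly flag the measurable-selection issue in the pasting step, which the paper's proof elides — but the substance of the argument is the same.
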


The value functions presented in Theorem~\ref{theorem:dynamic programming principle} suggest that the objective of the optimal control policy \(\alpha\) for the interval \([t_{i-1}, t_i)\) is not just to minimize the negative log-potential \(-\log f_i(\mb{y}_{t_i} | \cdot)\) for the immediate observation \(\mb{y}_{t_i}\). Instead, it also involves considering future costs \(\{\mc{V}_{j}\}_{j \in [i+1:k]}\) and the corresponding future observations \(\{\mb{y}_{t_j}\}_{j \in [i+1:k]}\), since \(\mc{V}_i\) follows a recursive structure. Since our goal is to approximate \(\mathbb{P}^{\star}\)~ in (\ref{eq:conditioned SDE}), it is natural that the optimal control policy $\alpha^{\star}$ should reflect the future observations $\{\mb{y}_{t_j}\}_{j \in [i+1:k]}$, as the $h$-function inherently does. Next, we will derive the explicit form of the optimal control policy.
\begin{theorem}[Verification Theorem]\label{theorem:verification theorem} Suppose there exist a sequence of left continuous functions $\mc{V}_i(t, \mb{x}) \in C^{1,2}([t_{i-1}, t_i), \mbb{R}^d)$, for all $i \in [1:k]$,  satisfying the following Hamiltonian-Jacobi-Bellman (HJB) equation:
    \begin{align}\label{eq:HJB PDE_1}
        & \partial_t \mc{V}_{i, t} + \mc{A}_t\mc{V}_{i, t} + \min_{\alpha \in \mbb{A}}\left[ (\nabla_{\mb{x}}\mc{V}_{i, t})^\top \alpha_{i, t} + \frac{1}{2}\norm{\alpha_{i, t}}^2 \right] = 0, \quad t_{i-1} \leq t < t_{i}\\ 
        & \mc{V}_i(t_i, \mb{x}) = -\log f_i(\mb{y}_{t_i} | \mb{x}) + \mc{V}_{i+1}(t_i, \mb{x}), \quad t = t_i, \quad \forall i \in [1:k], \label{eq:HJB PDE_2}
    \end{align}
where a minimum is attained by $\alpha^{\star}_i(t, \mb{x}) = \nabla_{\mb{x}}\mc{V}_i(t, \mb{x})$. Now, define a function $\alpha : [0, T] \times \mbb{R}^d \to \mbb{R}^d$ by integrating the optimal controls $\{\alpha_i\}_{i \in [1:k]}$,
\begin{equation}\label{eq:optimal control}
    \alpha^{\star}(t, \mb{x}) := \sum_{i=1}^k \alpha^{\star}_i(t, \mb{x})\mb{1}_{[t_{i-1}, t_i)}(t)
\end{equation}
Then, $\mc{V}(t, \mb{x}_t) = \mc{J}(t, \mb{x}_t, \alpha^{\star}) \leq \mc{J}(t, \mb{x}_t, \alpha)$ holds for any $(t, \mb{x}_t) \in [0, T] \times \mbb{R}^d$ and $\alpha \in \mbb{A}$. 
\end{theorem}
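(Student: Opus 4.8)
The plan is to run a standard \emph{verification} argument. I take the candidate value function to be the concatenation $\mc{V}(s,\mb{x})=\sum_{i=1}^k\mc{V}_i(s,\mb{x})\mb{1}_{[t_{i-1},t_i)}(s)$ assembled from the HJB solutions, fix an arbitrary admissible $\alpha\in\mbb{A}$ with controlled process $\mb{X}^\alpha$ solving~(\ref{eq:controlled dynamics}), and aim to show $\mc{V}(t,\mb{x}_t)\le\mc{J}(t,\mb{x}_t,\alpha)$ with equality when $\alpha=\alpha^\star$. Fixing $t\in[t_{m-1},t_m)$, I first apply It\^o's formula to $\mc{V}_i(s,\mb{X}^\alpha_s)$ on each subinterval $[t_{i-1}\vee t,\,t_i)$. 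Because the prior generator is $\mc{A}_s\phi=b(s,\cdot)^\top\nabla_{\mb{x}}\phi+\tfrac12\Tr(\nabla_{\mb{xx}}\phi)$ and the control only shifts the drift, the generator of $\mb{X}^\alpha$ is $\mc{A}_s+\alpha_s^\top\nabla_{\mb{x}}$; hence the drift of $\mc{V}_i(s,\mb{X}^\alpha_s)$ is $\partial_s\mc{V}_{i,s}+\mc{A}_s\mc{V}_{i,s}+\alpha_s^\top\nabla_{\mb{x}}\mc{V}_{i,s}$ and its martingale part is $(\nabla_{\mb{x}}\mc{V}_{i,s})^\top d\tilde{\mb{W}}_s$.

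Next I eliminate $\partial_s\mc{V}_{i,s}+\mc{A}_s\mc{V}_{i,s}$ through the HJB equation~(\ref{eq:HJB PDE_1}) and complete the square. Denoting by $\alpha^\star_{i,s}$ the pointwise minimizer of $a\mapsto(\nabla_{\mb{x}}\mc{V}_{i,s})^\top a+\tfrac12\norm{a}^2$ (the control stated in the theorem), a one-line algebraic identity gives
\[
\partial_s\mc{V}_{i,s}+\mc{A}_s\mc{V}_{i,s}+\alpha_s^\top\nabla_{\mb{x}}\mc{V}_{i,s}=\tfrac12\norm{\alpha_s-\alpha^\star_{i,s}}^2-\tfrac12\norm{\alpha_s}^2,
\]
so that the drift of $\mc{V}_i(s,\mb{X}^\alpha_s)$ is bounded below by $-\tfrac12\norm{\alpha_s}^2$, with equality iff $\alpha_s=\alpha^\star_{i,s}$. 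Integrating this identity over $[t_{i-1}\vee t,\,t_i)$ and substituting the boundary condition~(\ref{eq:HJB PDE_2}), which rewrites the endpoint as $\mc{V}_i(t_i,\cdot)=-\log f_i(\mb{y}_{t_i}|\cdot)+\mc{V}_{i+1}(t_i,\cdot)$, I telescope across $i=m,\dots,k$: the $\mc{V}_{i+1}(t_i,\cdot)$ term closing one interval is exactly the initial value opening the next, and $\mc{V}_{k+1}=0$ terminates the sum at $T=t_k$. This yields the pathwise identity $\mc{V}(t,\mb{X}^\alpha_t)=\int_t^T\big(\tfrac12\norm{\alpha_s}^2-\tfrac12\norm{\alpha_s-\alpha^\star_{i,s}}^2\big)ds-\sum_{i:\,t\le t_i}\log f_i(\mb{y}_{t_i}|\mb{X}^\alpha_{t_i})-\int_t^T(\nabla_{\mb{x}}\mc{V})^\top d\tilde{\mb{W}}_s$.

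Finally I take the conditional expectation $\E_{\mbb{P}^\alpha}[\,\cdot\mid\mb{X}^\alpha_t=\mb{x}_t]$. Once the stochastic-integral term is shown to have zero mean, the nonnegative penalty $\tfrac12\norm{\alpha_s-\alpha^\star_{i,s}}^2$ can only decrease the right-hand side, giving $\mc{V}(t,\mb{x}_t)\le\mc{J}(t,\mb{x}_t,\alpha)$; and taking $\alpha=\alpha^\star$ from~(\ref{eq:optimal control}) makes the penalty vanish on every interval, giving $\mc{V}(t,\mb{x}_t)=\mc{J}(t,\mb{x}_t,\alpha^\star)$, which is the assertion. The main obstacle I anticipate is precisely the justification that $\int_t^T(\nabla_{\mb{x}}\mc{V})^\top d\tilde{\mb{W}}_s$ is a true martingale rather than merely a local one: this requires localizing with stopping times $\tau_n\uparrow T$, controlling the growth of $\mc{V}_i$ and $\nabla_{\mb{x}}\mc{V}_i$ together with moment bounds on $\mb{X}^\alpha$, and passing to the limit via dominated/monotone convergence (using admissibility of $\alpha$ to guarantee $\E\int_t^T\norm{\alpha_s}^2ds<\infty$). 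Secondary technical points are confirming that $\alpha^\star$ is itself admissible so that $\mc{J}(t,\mb{x}_t,\alpha^\star)$ is finite and attained, and handling the left-continuous, piecewise bookkeeping at the observation times $t_i$ so that the telescoping is exact.
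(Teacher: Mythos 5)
Your proposal is correct and follows essentially the same route as the paper's proof: apply It\^o's formula to $\mc{V}_i$ on each subinterval, use the HJB equation to identify the drift, observe that the excess over the pointwise minimum is nonnegative (your completed square $\tfrac12\norm{\alpha_s-\alpha^\star_{i,s}}^2$ is exactly the paper's ``current minus min'' term), and chain the intervals together via the boundary condition — the paper does this by invoking the DPP recursion after the one-interval bound, whereas you telescope directly, which is an equivalent bookkeeping choice. Your explicit attention to the localization needed to make the stochastic integral a true martingale is a point the paper passes over silently under its standing Lipschitz/linear-growth assumptions.
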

Note that the optimal control~(\ref{eq:optimal control}) for the cost function~(\ref{eq:cost function}) share a similar structure as in~(\ref{eq:h function}). Since the theory of PDEs establishes a fundamental link between various classes of PDEs and SDEs~\citep{richter2022robust, berner2024an}, it allows us to reveal the inherent connection between (\ref{eq:optimal control}) and (\ref{eq:h function}), thereby enable us to simulate the conditional SDE~(\ref{eq:conditioned SDE}) in an alternative way.
\begin{lemma}[Hopf-Cole Transformation]
\label{lemma:hopf-cole transformation} The $h$ function satisfying the following linear PDE:
    \begin{align}\label{eq:h PDE_1}
        & \partial_t h_{i, t} + \mc{A}_t h_{i,t} = 0, \quad t_{i-1} \leq t < t_{i}\\
        & h_i(t_i, \mb{x}) = f_i(\mb{y}_{t_i} | \mb{x})h_{i+1}(t_i, \mb{x}), \quad t = t_i, \quad \forall i \in [1:k].\label{eq:h PDE_2}
    \end{align}
Moreover, for a logarithm transformation $\mc{V} = -\log h$, $\mc{V}$ satisfying the HJB equation in~(\ref{eq:HJB PDE_1}-\ref{eq:HJB PDE_2}). 
\end{lemma}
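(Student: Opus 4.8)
The plan is to prove the two claims in sequence: first that each $h_i$ solves the backward Kolmogorov equation~(\ref{eq:h PDE_1}) with the multiplicative jump condition~(\ref{eq:h PDE_2}), and then that the pointwise logarithmic change of variables $\mc{V}_i = -\log h_i$ converts this linear system into the HJB system~(\ref{eq:HJB PDE_1}--\ref{eq:HJB PDE_2}). For the first claim, the key observation is that on each subinterval $[t_{i-1}, t_i)$ the random variable $\prod_{j \geq i}^k f_j(\mb{y}_{t_j} | \mb{X}_{t_j})$ depends only on the process at times $\geq t_i$, and hence does not ``see'' the dynamics on the open interval. Using the Markov property of the prior process $\mb{X}$ under $\mathbb{P}$ together with the tower property, I would show that $t \mapsto h_i(t, \mb{X}_t)$ is a $\mathbb{P}$-martingale on $[t_{i-1}, t_i)$: for $t \leq s < t_i$, conditioning on $\mb{X}_s$ and collapsing the nested expectation gives $\mathbb{E}_{\mathbb{P}}[h_i(s, \mb{X}_s) | \mb{X}_t] = h_i(t, \mb{X}_t)$. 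Applying It\^o's formula to $h_i(t, \mb{X}_t)$ along $d\mb{X}_t = b\,dt + d\mb{W}_t$ yields $dh_i(t, \mb{X}_t) = (\partial_t h_{i,t} + \mc{A}_t h_{i,t})\,dt + (\nabla_{\mb{x}} h_{i,t})^\top d\mb{W}_t$, and the martingale property forces the drift to vanish, which is exactly~(\ref{eq:h PDE_1}). The jump condition~(\ref{eq:h PDE_2}) then follows by evaluating at $t = t_i$: since $f_i(\mb{y}_{t_i} | \mb{X}_{t_i})$ is $\mb{X}_{t_i}$-measurable it factors out of the conditional expectation, and the remaining factor $\mathbb{E}_{\mathbb{P}}[\prod_{j \geq i+1}^k f_j | \mb{X}_{t_i} = \mb{x}]$ is precisely $h_{i+1}(t_i, \mb{x})$, with the convention $h_{k+1} \equiv 1$ matching the empty product and $\mc{V}_{k+1}=0$.

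For the second claim, I would substitute $h_i = e^{-\mc{V}_i}$ into the linear PDE. Differentiating gives $\partial_t h_{i,t} = -(\partial_t \mc{V}_{i,t})\, e^{-\mc{V}_{i,t}}$, $\nabla_{\mb{x}} h_{i,t} = -(\nabla_{\mb{x}} \mc{V}_{i,t})\, e^{-\mc{V}_{i,t}}$, and $\nabla_{\mb{xx}} h_{i,t} = \big(\nabla_{\mb{x}}\mc{V}_{i,t}(\nabla_{\mb{x}}\mc{V}_{i,t})^\top - \nabla_{\mb{xx}}\mc{V}_{i,t}\big)\, e^{-\mc{V}_{i,t}}$. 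Inserting these into $\partial_t h_{i,t} + \mc{A}_t h_{i,t} = 0$ and dividing by the strictly positive factor $e^{-\mc{V}_{i,t}}$ (positivity of $h_i$ holds because each $f_j > 0$) produces the quadratic term and yields $\partial_t \mc{V}_{i,t} + \mc{A}_t \mc{V}_{i,t} - \tfrac{1}{2}\norm{\nabla_{\mb{x}}\mc{V}_{i,t}}^2 = 0$. To match the HJB form, I would compute the inner minimization explicitly: $\min_{\alpha}[(\nabla_{\mb{x}}\mc{V}_{i,t})^\top \alpha + \tfrac{1}{2}\norm{\alpha}^2]$ is a strictly convex quadratic in $\alpha$, minimized at $\alpha = -\nabla_{\mb{x}}\mc{V}_{i,t}$ with value $-\tfrac{1}{2}\norm{\nabla_{\mb{x}}\mc{V}_{i,t}}^2$, so the two expressions coincide and~(\ref{eq:HJB PDE_1}) holds. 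Finally, taking $-\log$ of the multiplicative jump condition~(\ref{eq:h PDE_2}) converts the product into the additive terminal condition $\mc{V}_i(t_i, \mb{x}) = -\log f_i(\mb{y}_{t_i} | \mb{x}) + \mc{V}_{i+1}(t_i, \mb{x})$, which is~(\ref{eq:HJB PDE_2}).

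The main obstacle is the rigorous probabilistic-to-PDE step in the first claim rather than the algebra of the second: one must justify that the multi-marginal conditional expectation defining $h_i$ is regular enough (the $C^{1,2}$ smoothness assumed in the statement) to license It\^o's formula, and must carefully track the piecewise structure so that the martingale argument on the open interval and the factorization at the endpoint $t_i$ fit together consistently across all $i \in [1:k]$. By contrast, the Hopf--Cole computation is mechanical once positivity of $h_i$ is noted, so I expect the bulk of the care to go into the first half.
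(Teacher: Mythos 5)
Your proposal is correct and follows essentially the same route as the paper: the paper derives the linear PDE by invoking its Feynman--Kac lemma with zero potential, which is itself proved by exactly the martingale-plus-It\^o argument you spell out directly, and the jump condition and the Hopf--Cole substitution $h_i = e^{-\mc{V}_i}$ (with division by the positive factor and identification of $-\tfrac{1}{2}\norm{\nabla_{\mb{x}}\mc{V}_{i,t}}^2$ as the minimized Hamiltonian) are carried out just as you describe. Your explicit outer-product form of the Hessian and the minimizer $\alpha = -\nabla_{\mb{x}}\mc{V}_{i,t}$ are, if anything, slightly more careful than the paper's own writing.
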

According to Lemma~\ref{lemma:hopf-cole transformation}, the solution of linear PDE in~(\ref{eq:h PDE_1}-\ref{eq:h PDE_2}) is negative exponential to the solution of the HJB equation in~(\ref{eq:HJB PDE_1}-\ref{eq:HJB PDE_2}).  Therefore, it leads to the following corollary:
\begin{corollary}[Optimal Control]\label{corollary:optimal control}
For optimal control $\alpha^{\star}$ induced by the cost function~(\ref{eq:cost function}) with dynamics (\ref{eq:controlled dynamics}), it satisfies $\alpha^{\star} = \nabla_{\mb{x}} \log h$. In other words, we can simulate the conditional SDEs in (\ref{eq:conditioned SDE}) by simulating  the controlled SDE (\ref{eq:controlled dynamics}) with optimal control $\alpha^{\star}$.
\end{corollary}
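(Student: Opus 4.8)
The plan is to obtain Corollary~\ref{corollary:optimal control} as a direct synthesis of the Verification Theorem (Theorem~\ref{theorem:verification theorem}) and the Hopf-Cole Lemma (Lemma~\ref{lemma:hopf-cole transformation}), together with the Multi-marginal Doob's $h$-transform (Theorem~\ref{theorem:doob's h transform}). No fresh dynamical analysis is required: essentially all the heavy lifting has already been done in these three results, and the corollary is the algebraic identification $\alpha^{\star} = \nabla_{\mb{x}}\log h$ followed by a check that the resulting SDE coincides with the conditioned SDE~(\ref{eq:conditioned SDE}).

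First I would make the pointwise minimization in the HJB equation~(\ref{eq:HJB PDE_1}) explicit. On each interval $[t_{i-1}, t_i)$ the inner problem is $\min_{\alpha}\left[(\nabla_{\mb{x}}\mc{V}_{i, t})^\top \alpha + \tfrac{1}{2}\norm{\alpha}^2\right]$, a strictly convex quadratic in $\alpha$ whose unique minimizer is characterized by the first-order condition $\nabla_{\mb{x}}\mc{V}_{i, t} + \alpha = 0$, i.e. $\alpha^{\star}_{i, t} = -\nabla_{\mb{x}}\mc{V}_{i, t}$. I would then substitute the Hopf-Cole identity $\mc{V}_i = -\log h_i$ supplied by Lemma~\ref{lemma:hopf-cole transformation}, which yields $\nabla_{\mb{x}}\mc{V}_{i, t} = -\nabla_{\mb{x}}\log h_{i, t}$ and hence $\alpha^{\star}_{i, t} = \nabla_{\mb{x}}\log h_{i, t}$ on each interval.

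Next I would glue these pieces using the partition of $[0, T]$. Assembling the optimal control via~(\ref{eq:optimal control}) and comparing with the definition of $h$ in~(\ref{eq:h function}) gives
\[
\alpha^{\star}(t, \mb{x}) = \sum_{i=1}^k \nabla_{\mb{x}}\log h_{i, t}\,\mb{1}_{[t_{i-1}, t_i)}(t) = \nabla_{\mb{x}}\log h(t, \mb{x}),
\]
since $\log h$ reduces to $\log h_i$ on $[t_{i-1}, t_i)$ and the spatial gradient is taken at fixed $t$. Substituting $\alpha^{\star} = \nabla_{\mb{x}}\log h$ into the controlled dynamics~(\ref{eq:controlled dynamics}) reproduces exactly the conditioned SDE~(\ref{eq:conditioned SDE}), so by Theorem~\ref{theorem:doob's h transform} its solution induces the posterior path measure $\mathbb{P}^{\star}$, which is the claimed equivalence.

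The main obstacle is not the one-interval computation but justifying the \emph{global} identification $\mc{V} = -\log h$ across all jump times $\{t_i\}$ together with the regularity needed to differentiate. I would verify that the Hopf-Cole transform sends the jump condition~(\ref{eq:h PDE_2}), $h_i(t_i, \cdot) = f_i\, h_{i+1}(t_i, \cdot)$, to the HJB boundary condition~(\ref{eq:HJB PDE_2}), $\mc{V}_i(t_i, \cdot) = -\log f_i + \mc{V}_{i+1}(t_i, \cdot)$, so that the piecewise value functions are consistent with the backward recursion; this is precisely what makes the gluing in~(\ref{eq:optimal control}) legitimate rather than a collection of unrelated interval problems. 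I would also confirm admissibility $\alpha^{\star}\in\mbb{A}$ (integrability of $\nabla_{\mb{x}}\log h$) and the $C^{1,2}$ regularity of $\mc{V}$, so that Theorem~\ref{theorem:verification theorem} genuinely applies and the controlled SDE admits a well-posed solution whose law is $\mathbb{P}^{\star}$.
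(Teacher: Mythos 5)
Your proposal is correct and follows essentially the same route as the paper: the paper's proof likewise combines the minimizer $\alpha^{\star}_i = -\nabla_{\mb{x}}\mc{V}_i$ from the Verification Theorem with the Hopf--Cole identity $\mc{V}_i = -\log h_i$ and glues across the partition via~(\ref{eq:optimal control}) and~(\ref{eq:h function}). The only caution concerns your final clause that the controlled solution ``induces $\mathbb{P}^{\star}$'': as the paper emphasizes immediately after the corollary, the drift identification alone does not give the posterior path measure because the controlled SDE starts from $\mu_0$ while the conditioned SDE requires $\mu_0^{\star}$, a discrepancy resolved only later by Theorem~\ref{theorem:tight variational bound}.
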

Corollary~\ref{corollary:optimal control} states that the Markov process induced by $\alpha^{\star}$ and $\nabla_{\mb{x}} \log h$ is equivalent under same initial condition. However, comparing the $\mathbb{P}^{\alpha}$ induced by the controlled dynamics in~(\ref{eq:controlled dynamics}) with an initial condition $\mu_0$, the conditioned dynamics~(\ref{eq:conditioned SDE}) has an intial condition $\mu_0^\star$ to inducing the desired path measure $\mathbb{P}^{\star}$. In other words, although we find the optimal control $\alpha^{\star}$, the constant discrepancy between $\mu_0$ and $\mu^{\star}_0$ still remain, thereby keeping $\mathbb{P}^{\star}$ and $\mathbb{P}^{\alpha^{\star}}$ misaligned. Fortunately, a surrogate cost function can be derived from the variational representation under the KL-divergence, allowing us to find the optimal control while minimizing the discrepancy.

\begin{theorem}[Tight Variational Bound]\label{theorem:tight variational bound} Let us assume that the path measure $\mathbb{P}^{\alpha}$ induced by (\ref{eq:controlled dynamics}) for any $\alpha \in \mbb{A}$ satisfies $D_{\text{KL}}(\mathbb{P}^{\alpha} | \mathbb{P}^{\star}) < \infty$. Then, for a cost function $\mc{J}$ in (\ref{eq:cost function}) and $\mu^{\star}_0$ in (\ref{eq:conditioned SDE}), it holds:
    \begin{align}\label{eq:KL rearrange}
        D_{\text{KL}}(\mathbb{P}^{\alpha} | \mathbb{P}^{\star}) = D_{\text{KL}}(\mu_0 | \mu^{\star}_0) + \mathbb{E}_{\mb{x}_0 \sim \mu_0}\left[ \mc{J}(0, \mb{x}_0, \alpha )\right] = \mc{L}(\alpha) + \log \mb{Z}(\mc{H}_{t_k}) \geq 0 ,
    \end{align}
\vspace{-2mm}
where the objective function $\mc{L}(\alpha)$ (a negative ELBO) is given by:
    \begin{equation}\label{eq:objective function}
        \mc{L}(\alpha) = \mathbb{E}_{\mathbb{P}^{\alpha}}\left[\int_0^T \frac{1}{2}\norm{\alpha(s, \mb{X}^{\alpha}_s)}^2 ds - \sum_i^k \log g_i(\mb{y}_{t_i} | \mb{X}^{\alpha}_{t_i})\right] \geq -\log \mb{Z}(\mc{H}_{t_k}).
    \end{equation}
Moreover, assume that $\mc{L}(\alpha)$ has a global minimum $\alpha^{\star} = \argmin_{\alpha \in \mbb{A}}\mc{L}(\alpha)$. Then  the equality holds in (\ref{eq:objective function}) $\ie \mc{L}(\alpha^{\star}) = -\log \mb{Z}(\mc{H}_{t_k})$ and $\mu^{\star}_0 = \mu_0$ almost everywhere with respect to $\mu_0$.
\end{theorem}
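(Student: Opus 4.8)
The plan is to evaluate the relative entropy by writing down the Radon--Nikodym derivative $d\mathbb{P}^{\alpha}/d\mathbb{P}^{\star}$ on path space and taking its expectation under $\mathbb{P}^{\alpha}$. Since the controlled SDE~(\ref{eq:controlled dynamics}) and the prior SDE~(\ref{eq:prior dynamics}) start from the same law $\mu_0$ and differ only through the drift $\alpha$, Girsanov's theorem gives $\log\frac{d\mathbb{P}^{\alpha}}{d\mathbb{P}} = \int_0^T \alpha(s,\mb{X}^{\alpha}_s)^{\top} d\tilde{\mb{W}}_s + \tfrac{1}{2}\int_0^T \norm{\alpha(s,\mb{X}^{\alpha}_s)}^2\,ds$ along trajectories of $\mb{X}^{\alpha}$. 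Multiplying by the Feynman--Kac reweighting $\frac{d\mathbb{P}}{d\mathbb{P}^{\star}} = \mb{Z}(\mc{H}_{t_k}) / \prod_i g_i(\mb{y}_{t_i} | \mb{X}_{t_i})$ read off from~(\ref{eq: posterior path measure}) and then taking $\mathbb{E}_{\mathbb{P}^{\alpha}}$, the stochastic integral is a mean-zero $\mathbb{P}^{\alpha}$-martingale under the standing hypothesis $D_{\text{KL}}(\mathbb{P}^{\alpha} | \mathbb{P}^{\star}) < \infty$ together with finite control energy, so it drops out and leaves exactly $\mc{L}(\alpha) + \log\mb{Z}(\mc{H}_{t_k})$. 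This establishes the rightmost equality in~(\ref{eq:KL rearrange}); since relative entropy is non-negative, it immediately yields the ELBO $\mc{L}(\alpha) \ge -\log\mb{Z}(\mc{H}_{t_k})$ of~(\ref{eq:objective function}).

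To recover the middle decomposition in~(\ref{eq:KL rearrange}), I would instead disintegrate both measures over their time-$0$ marginals, $\mathbb{P}^{\alpha} = \mu_0 \otimes \mathbb{P}^{\alpha}(\cdot|\mb{x}_0)$ and $\mathbb{P}^{\star} = \mu^{\star}_0 \otimes \mathbb{P}^{\star}(\cdot|\mb{x}_0)$, and apply the chain rule for relative entropy, which separates the initial term $D_{\text{KL}}(\mu_0 | \mu^{\star}_0)$ from the averaged conditional term $\mathbb{E}_{\mb{x}_0 \sim \mu_0}[ D_{\text{KL}}(\mathbb{P}^{\alpha}(\cdot|\mb{x}_0) | \mathbb{P}^{\star}(\cdot|\mb{x}_0)) ]$. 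The conditional term is matched to $\mathbb{E}_{\mb{x}_0 \sim \mu_0}[\mc{J}(0,\mb{x}_0,\alpha)]$ by integrating the definition~(\ref{eq:cost function}) against $\mu_0$ and using the Doob form of the conditioned measure from Theorem~\ref{theorem:doob's h transform} (drift $b+\nabla_{\mb{x}}\log h$), after accounting for the $h_1(t_0,\cdot)$-normalization of $\mathbb{P}^{\star}(\cdot|\mb{x}_0)$. The normalization~(\ref{eq:normalized potential}) and the bookkeeping identity $\mb{Z}(\mc{H}_{t_k}) = \prod_i \mb{L}_i(g_i)$, a consequence of the normalizing property $\mathbb{E}_{\mathbb{P}}[\prod_i f_i] = 1$, are exactly what convert the potentials $f_i$ into $g_i$ and reconcile this route with the $g_i$-based computation of the first paragraph.

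For the final tightness statement I would use $\mc{L}(\alpha) = D_{\text{KL}}(\mathbb{P}^{\alpha} | \mathbb{P}^{\star}) - \log\mb{Z}(\mc{H}_{t_k})$, so that a global minimizer $\alpha^{\star}$ of $\mc{L}$ is precisely a minimizer of $D_{\text{KL}}(\mathbb{P}^{\alpha} | \mathbb{P}^{\star})$ over $\mbb{A}$. By Corollary~\ref{corollary:optimal control} the choice $\alpha^{\star} = \nabla_{\mb{x}}\log h$ aligns the controlled drift with the conditioned drift of~(\ref{eq:conditioned SDE}), so the averaged conditional relative entropy vanishes and the only residual contribution is the initial discrepancy $D_{\text{KL}}(\mu_0 | \mu^{\star}_0)$. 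Hence attaining the value $\mc{L}(\alpha^{\star}) = -\log\mb{Z}(\mc{H}_{t_k})$ is equivalent to $D_{\text{KL}}(\mathbb{P}^{\alpha^{\star}} | \mathbb{P}^{\star}) = 0$, i.e.\ $\mathbb{P}^{\alpha^{\star}} = \mathbb{P}^{\star}$; by Gibbs' inequality this equality of path measures forces the initial marginals to coincide, $\mu^{\star}_0 = \mu_0$ almost everywhere with respect to $\mu_0$.

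I expect the main obstacle to be the rigorous justification of the Girsanov change of measure and the vanishing of the stochastic-integral term: one must ensure, via a Novikov- or finite-energy-type condition on $\alpha$ along $\mb{X}^{\alpha}$, that $d\mathbb{P}^{\alpha}/d\mathbb{P}$ is a genuine probability density and that $\int_0^T \alpha^{\top} d\tilde{\mb{W}}_s$ is a true martingale rather than merely a local one. The accompanying delicate point is the disintegration bookkeeping: the conditional posterior $\mathbb{P}^{\star}(\cdot|\mb{x}_0)$ must be normalized by $h_1(t_0,\cdot)$, and keeping this normalization straight is what prevents the initial mismatch $D_{\text{KL}}(\mu_0 | \mu^{\star}_0)$ from being counted twice when passing between the two equivalent expressions in~(\ref{eq:KL rearrange}).
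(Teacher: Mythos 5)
Your proposal is correct in substance and follows the same overall architecture as the paper's proof: Girsanov for $\log\frac{d\mathbb{P}^{\alpha}}{d\mathbb{P}}$, the Feynman--Kac identity for $\frac{d\mathbb{P}}{d\mathbb{P}^{\star}}$, disintegration over the time-$0$ marginals, and a Donsker--Varadhan/Gibbs argument for tightness. The one genuinely different move is in your first paragraph: you obtain $D_{\text{KL}}(\mathbb{P}^{\alpha}\,|\,\mathbb{P}^{\star}) = \mc{L}(\alpha) + \log\mb{Z}(\mc{H}_{t_k})$ by a single global factorization $\frac{d\mathbb{P}^{\alpha}}{d\mathbb{P}^{\star}} = \frac{d\mathbb{P}^{\alpha}}{d\mathbb{P}}\cdot\frac{d\mathbb{P}}{d\mathbb{P}^{\star}}$ on path space, whereas the paper first disintegrates and then evaluates the conditional term using the \emph{unconditional} density $\prod_i f_i$ for $\frac{d\mathbb{P}}{d\mathbb{P}^{\star}}$. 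Your route is cleaner and, as you correctly flag at the end, sidesteps a bookkeeping trap: the conditional posterior $\mathbb{P}^{\star}(\cdot\,|\,\mb{x}_0)$ carries the extra normalization $h_1(t_0,\mb{x}_0)$, and omitting it double-counts the initial discrepancy --- which is why the paper's displayed chain ends at $D_{\text{KL}}(\mu_0\,|\,\mu^{\star}_0) + \mc{L}(\alpha) + \log\mb{Z}(\mc{H}_{t_k})$ while the theorem statement reads $\mc{L}(\alpha) + \log\mb{Z}(\mc{H}_{t_k})$; your direct computation gives the internally consistent answer. For tightness you route through Corollary~\ref{corollary:optimal control} (drift alignment, so only the initial discrepancy survives) and Gibbs' inequality, while the paper uses the identity $D_{\text{KL}}(\mu_0\,|\,\mu^{\star}_0) = \mathbb{E}_{\mu_0}[\mc{V}(0,\cdot)] = \mc{L}(\alpha^{\star}) + \log\mb{Z}(\mc{H}_{t_k})$ together with the Donsker--Varadhan principle; these are equivalent, and both stop at the same place: they establish that attainment of the bound is \emph{equivalent} to $\mu^{\star}_0 = \mu_0$ and that $D_{\text{KL}}(\mu_0\,|\,\mu^{\star}_0) \geq 0$ (by Jensen, since $\mathbb{E}_{\mu_0}[h_1(0,\cdot)]=1$), but neither supplies an independent reason why this quantity must actually vanish for the constrained class of controls all started from $\mu_0$. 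So you reproduce the paper's argument at the same level of rigor, with a more careful treatment of the one step where the paper is loose.
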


Theorem~\ref{theorem:tight variational bound} suggests that the optimal control $\alpha^{\star} = \argmin_{\alpha} \mathcal{L}(\alpha)$ provides the tight variational bound for the likelihood functions $\{g_i\}_{i \in [1:k]}$ and the prior path measure $\mathbb{P}$ induced by~(\ref{eq:prior dynamics}). Furthermore, $\alpha^{\star}$ ensures that $\mu_0^{\star} = \mu_0$, indicating that simulating the controlled SDE in~(\ref{eq:controlled dynamics}) with $\alpha^{\star}$ and initial condition $\mu_0$ generates trajectories from the posterior path measure $\mathbb{P}^{\star}$ in~(\ref{eq: posterior path measure}). In practice, the optimal control $\alpha^{\star}$ can be approximated using a highly flexible neural network, which serves as a function approximator ($\ie \alpha := \alpha(\cdot;\theta)$), optimized through gradient descent-based optimization~\citep{li2020scalable, zhang2022path, vargas2023bayesian}. 

However, applying gradient descent-based optimization necessitates computing gradients through the simulated diffusion process over the interval $[0, T]$ to estimate the objective function (\ref{eq:objective function}) for neural network training. This approach can become slow, unstable, and memory-intensive as the time horizon or dimension of latent space increases~\citep{iakovlev2023latent, park2024stochastic}. It contrasts with the philosophy of many recent generative models~\citep{ho2020denoising, song2020score}, which aim to decompose the generative process and solve the sub-problems jointly. Additionally, for inference, it requires numerical simulations such as Euler-Maruyama solvers~\citep{kloeden2013numerical}, which can also be time-consuming for a long time series. It motivated us to propose an efficient and scalable modeling approach for real-world applications described in the next section.

\begin{figure}[!t]
\centering
\includegraphics[width=1.\textwidth,]{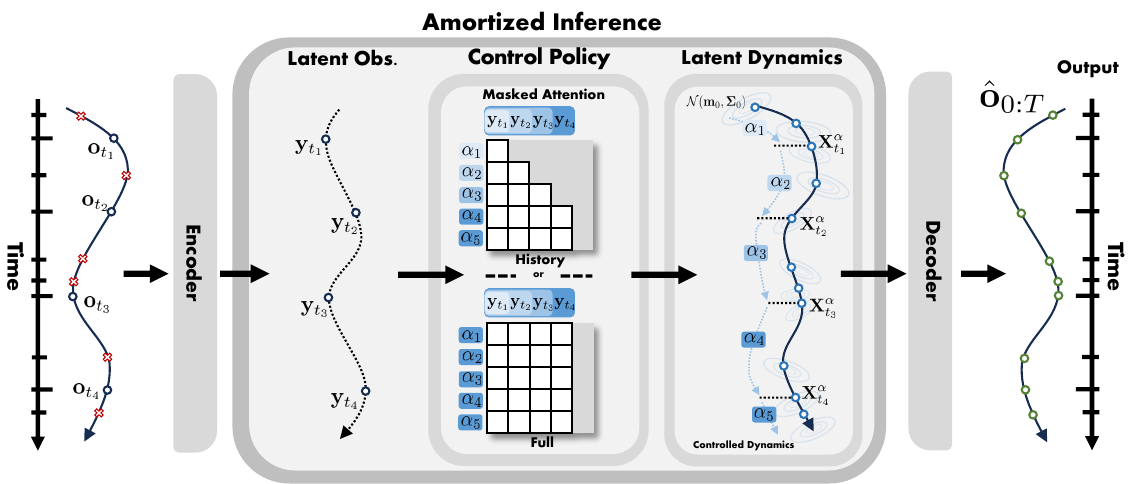}
\caption{\textbf{Conceptual illustration}. Given the observed time stamps $\mc{T} = \{t_i\}_{i \in [1:4]}$ and the unseen time stamps $\textcolor{red}{\mc{T}_u}$ ($\color{red}{\times}$ in figure), the encoder maps the input time series $\{\mb{o}_{t}\}_{t \in \mc{T}}$ into auxiliary variables $\{\mb{y}_t\}_{t \in \mc{T}}$. These auxiliary variables are then utilized to compute the control policies $\{\alpha\}_{i \in [1:5]}$ through a masked attention mechanism that relies on two different assimilation schemes. The computed policies $\{\alpha\}_{i \in [1:5]}$ control the prior dynamics $\mathbb{P}$ over the interval $[0, T]$ to approximate the posterior $\mathbb{P}^{\star}$ in the latent space. Finally, the sample path $\textcolor[RGB]{46, 117, 182}{\mb{X}^{\alpha}_{0:T}} \sim \mathbb{P}^{\alpha}$ are decoded to generate predictions across the complete time stamps $\textcolor[RGB]{124, 159, 102}{\mc{T}'} = \mc{T} \cup \mc{T}_u$ ($\textcolor[RGB]{124, 159, 102}{\circ}$ in figure), over the entire interval $[0, T]$.}
\vspace{-5mm}
\end{figure}

\subsection{Efficient Modeling of the Latent System}\label{sec:efficient modeling}

The linear approximation of the general drift function provides a \textit{simulation-free} property for the dynamics and significantly enhances scalability while ensuring high-quality performance~\citep{deng2024variational}. Motivate by this property, we investigate a class of linear SDEs to improve the efficiency of the proposed controlled dynamics, ensuring superior performance compared to other baselines. We introduce the following affine linear SDEs:
\begin{equation}\label{eq: prior linear dynamics}
    d\mb{X}_t = \left[-\mb{A} \mb{X}_t + \alpha \right]dt + d\mb{W}_t, \quad \text{where}\quad \mb{X}_0 \sim \mc{N}(\mb{m}_0, \mb{\Sigma}_0),
\end{equation}
and a matrix $\mb{A} \in \mathbb{R}^{d \times d}$ and a vector $\alpha \in \mathbb{R}^d$. The solutions for $\mb{X}_t$ in (\ref{eq: prior linear dynamics}) has a closed-form Gaussian distribution for any $t \in [0, T]$, where the mean $\mb{m}_t$ and covariance $\mb{\Sigma}_t$ can be explicitly computed by solving the ODEs~\citep{särkkä2019applied}:
\begin{align}\label{eq:gaussian moments ODEs eq_1}
    & \mb{m}_t = e^{-\mb{A} t} \mb{m}_0 - \mb{A}^{-1}(e^{-\mb{A} t} - \mb{I})\alpha \\
    & \mb{\Sigma}_t = e^{-\mb{A} t} \mb{\Sigma}_0 e^{-\mb{A}^{\top} t} + \int_0^t e^{-\mb{A} (t-s)} e^{-\mb{A}^{\top} (t-s)} ds. \label{eq:gaussian moments ODEs eq_2}
\end{align}
However, calculating the moments \( \mb{m}_t \) and \( \mb{\Sigma}_t \) in (\ref{eq:gaussian moments ODEs eq_1}) involves computing matrix exponentials, inversions, and performing numerical integration. These operations can be computationally intensive, especially for large matrices or when high precision is required. These computations can be simplifies by restricting the matrix $\mb{A}$ to be a diagonal or semi-positive definite (SPD).

\begin{remark}[Diagonalization]\label{remark:diagonal transform} Since SPD matrix $\mb{A}$ admits the eigen-decomposition $\mb{A} = \mb{E}\mb{D} \mb{E}^{\top}$ with $\mb{E} \in \mathbb{R}^{d \times d}$ and $\mb{D} \in \text{diag}(\mathbb{R}^d)$, the process $\mb{X}_t$ expressed in a standard basis can be transformed to a $\hat{\mb{X}}_t$ which have diagonalized drift function. In the space spanned by the eigen-basis $\mb{E}$, the dynamics in (\ref{eq:prior dynamics}) can be rewritten into: 
\begin{equation}\label{eq:projected prior linear dynamics}
    d\hat{\mb{X}}_t = \left[-\mb{D}\hat{\mb{X}}_t + \hat{\alpha}\right]dt + d\hat{\mb{W}}_t, \quad \text{where}\quad \hat{\mb{X}}_0 \sim \mc{N}(\hat{\mb{m}}_0, \hat{\mb{\Sigma}_0}),
\end{equation}
$\hat{\mb{X}}_t = \mb{E}^\top \mb{X}_t$, $\hat{\alpha} = \mb{E}^\top \alpha$, $\hat{\mb{W}}_t = \mb{E}^\top \mb{W}_t$, $\hat{\mb{m}}_t = \mb{E}^\top \mb{m}_t$ and $\hat{\mb{\Sigma}}_t = \mb{E}^\top \mb{\Sigma}_t \mb{E}$. Note that $\hat{\mb{W}}_t \stackrel{d}{=} \mb{E}^\top \mb{W}_t$ for any $t \in [0, T]$ due to the orthonormality of $\mb{E}$, so $\hat{\mb{W}}_t$ can be regarded as a standard Wiener process. Because of $\mb{D} = \text{diag}(\boldsymbol{\lambda})$, where $\boldsymbol{\lambda}=\{\lambda_1, \cdots, \lambda_d\}$ and each $\lambda_i \geq 0$ for all $i \in [1:d]$,  we can obtain the state distributions of $\mb{X}_t$ for any $t \in [0, T]$ by solving ODEs in (\ref{eq:gaussian moments ODEs eq_1}-\ref{eq:gaussian moments ODEs eq_2}) analytically, without the need for numerical computation. The results are then transforming back to the standard basis $\ie \mb{m}_t = \mb{E}\hat{\mb{m}}_t, \mb{\Sigma}_t = \mb{E}\hat{\mb{\Sigma}}_t \mb{E}^{\top}$.
\end{remark}

\paragraph{Locally Linear Approximation} To leverage the advantages of linear SDEs in (\ref{eq: prior linear dynamics}) which offer simulation-free property, we aim to linearize the drift function in (\ref{eq:controlled dynamics}). However, the naïve formulation described in (\ref{eq: prior linear dynamics}) may limit the expressiveness of the latent dynamics for real-world applications. Hence, we introduce a parameterization strategy inspired by~\citep{becker2019recurrent,NEURIPS2021_54b2b21a}, which leverage neural networks to enhance the flexibility by fully incorporating an attentive structure with a given observations $\mb{y}_{0:T}$ while maintaining a linear formulation:
\begin{align}\label{eq:linearized controlled dynamics}
    & d\mb{X}^{\alpha}_t = \left[-\mb{A}_t\mb{X}_t + \alpha_t \right]dt + d\tilde{\mb{W}}_t,
\end{align}
where the approximated drift function is constructed as affine formulation with following components:
\begin{equation}\label{eq:control functions}
    \mb{A}_t = \sum_{l=1}^L w^{(l)}_{\theta}(\mb{z}_t) \mb{A}^{(l)}, \quad \alpha_t = \mb{B}_{\theta} \mb{z}_t.
\end{equation}
\vspace{-4mm}

\begin{wrapfigure}[15]{r}{0.30\textwidth}
    \vspace{-5mm}
    \centering
    \caption{Two types of information assimilation schemes.}
    \vspace{-2mm}
    \setlength{\tabcolsep}{3.0pt}
\includegraphics[width=0.25\textwidth,]{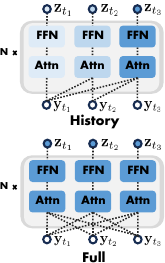}
\label{fig:attention type} 
\vspace{-4mm}
\end{wrapfigure}

The matrix $\mb{A}_t$ is given by a convex combination of $L$ trainable base matrices $\{\mb{A}^{(l)}\}_{l \in [1:L]}$, where the weights $\mb{w}_{\theta} = \text{softmax}(\mb{f}_{\theta}(\mb{z}_t))$ are produced by the neural network $\mb{f}_{\theta}$. Additionally, $\mb{B}_{\theta} \in \mathbb{R}^{d \times d}$ is a trainable matrix. The latent variable $\mb{z}_t$ is produced by the transformer $\mb{T}_{\theta}$, which encodes the given observations $\mb{y}_{0:T
}$, depending on the task-specific information assimilation scheme. 

Figure~\ref{fig:attention type} illustrates two assimilation schemes using masked attention mechanism\footnote{See Appendix~\ref{sec:masking scheme} for more details.}: the history assimilation scheme, the transformer $\mb{T}_{\theta}$  encodes information up to the current time $t$ and outputs $\mb{z}_t$, $\ie \mb{z}_t = \mb{T}_{\theta}(\mc{H}_t)$, and the full assimilation scheme, the transformer $\mb{T}_{\theta}$ encodes information over the entire interval $[0, T]$ and outputs $\mb{z}_t$, $\ie \mb{z}_t = \mb{T}_{\theta}(\mc{H}_T)$. Note that this general formulation brought from the control formulation enables more flexible use of information encoded from observations. In contrast, previous Kalman-filtering based CD-SSM method~\citep{schirmer2022modeling} relies on recurrent updates, which limits them to typically using historical information only.

Furthermore, since observations are updated only at discrete time steps $\{t_i\}_{i \in [1:k]}$, the latent variables $\mb{z}_t$ remain constant within any interval $t \in [t_{i-1}, t_i)$ for all $i \in [1:k]$, making $\mb{A}_i$ and $\alpha_i$ constant as well. As a result, the dynamics (\ref{eq:linearized controlled dynamics}) remain linear over local intervals. This structure enables us to derive a closed-form solution for the intermediate latent states.

\begin{theorem}[Simulation-free estimation]\label{theorem:simulation free estimation} Let us consider sequences of SPD matrices $\{\mb{A}_i\}_{i \in [1:k]}$ that admit the eigen-decomposition $\mb{A}_i = \mb{E} \mb{D}_i \mb{E}^{\top}$ with $\mb{E} \in \mbb{R}^{d \times d}$ and $\mb{D}_i \in \text{diag}(\mbb{R}^d) \succeq 0$ for all $i \in [1:k]$, control vectors $\{\alpha_i\}_{i \in [1:k]}$ and following control-affine SDEs for all $i \in [1:k]$:
    \begin{equation}\label{eq:Simulation-free eq_1}
        d\mb{X}_t  = \left[-\mb{A}_i\mb{X}_t + \alpha_i \right]dt + \sigma d\mb{W}_t, \quad t \in [t_{i-1}, t_i).
    \end{equation}
Then, with $\mb{X}_0 \sim \mc{N}(\mb{m}_0, \mb{\Sigma}_0)$, the solution of~(\ref{eq:Simulation-free eq_1}) is a Gaussian process  $\mathcal{N}(\mb{m}_{t_i}, \mb{\Sigma}_{t_i})$ with:
\begin{align*}
    & \mb{m}_{t_i} = \mb{E}\left(e^{-\sum_{j=1}^{i} (t_{j} - t_{j-1}) \mb{D}_j} \hat{\mb{m}}_{t_0} - \sum_{k=1}^{i} e^{-\sum_{j=k}^{i} (t_j - t_{j-1}) \mb{D}_j} \mb{D}^{-1}_k \left( \mb{I} - e^{(t_{k} - t_{k-1}) \mb{D}_k}\right) \hat{\alpha}_k \right), \\
 & \mb{\Sigma}_{t_i} =\mb{E}\left(e^{-2\sum_{j=1}^{i} (t_{j} - t_{j-1}) \mb{D}_j} \hat{\mb{\Sigma}}_{t_0} - \frac{1}{2}\sum_{k=1}^{i} e^{-2\sum_{j=k}^{i} (t_j - t_{j-1}) \mb{D}_j} \mb{D}^{-1}_k \left( \mb{I} - e^{2(t_{k} - t_{k-1}) \mb{D}_k}\right) \right) \mb{E}^{\top},
\end{align*}
where $\hat{\mb{m}}_{t_i} = \mb{E}^{\top} \mb{m}_{t_i}, \hat{\mb{\Sigma}}_{t_i} = \mb{E}^{\top} \mb{\Sigma}_{t_i} \mb{E}$ and $\hat{\alpha}_i = \mb{E}^{\top} \alpha_i$ for all $i \in [1:k]$.
\end{theorem}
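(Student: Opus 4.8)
The plan is to reduce the coupled $d$-dimensional system to $d$ decoupled scalar Ornstein--Uhlenbeck processes, solve each in closed form on a single interval, and then chain the per-interval transitions across $[t_0, t_i)$. First I would invoke the standard fact that an affine SDE with additive noise and a Gaussian initial law propagates Gaussianity: on each interval $[t_{i-1}, t_i)$ the moments $\mb{m}_t = \mbb{E}[\mb{X}_t]$ and $\mb{\Sigma}_t = \Cov(\mb{X}_t)$ obey the linear ODEs $\dot{\mb{m}}_t = -\mb{A}_i \mb{m}_t + \alpha_i$ and $\dot{\mb{\Sigma}}_t = -\mb{A}_i \mb{\Sigma}_t - \mb{\Sigma}_t \mb{A}_i^\top + \sigma^2 \mb{I}$ (using $\mb{A}_i^\top = \mb{A}_i$ since $\mb{A}_i$ is SPD), so it suffices to integrate these two ODEs and confirm the law stays Gaussian.

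Second, following \Cref{remark:diagonal transform}, I would diagonalize using the common eigenbasis $\mb{E}$. Setting $\hat{\mb{X}}_t = \mb{E}^\top \mb{X}_t$ and using orthonormality of $\mb{E}$ so that $\hat{\mb{W}}_t = \mb{E}^\top \mb{W}_t$ is again a standard Wiener process, the SDE in (\ref{eq:Simulation-free eq_1}) becomes $d\hat{\mb{X}}_t = [-\mb{D}_i \hat{\mb{X}}_t + \hat{\alpha}_i]dt + \sigma\, d\hat{\mb{W}}_t$ with diagonal $\mb{D}_i$. Because $\mb{D}_i$ is diagonal this is a system of $d$ independent scalar OU equations, each of which I can integrate exactly over $[t_{i-1}, t_i)$ of length $\Delta_i := t_i - t_{i-1}$, yielding the single-step transitions $\hat{\mb{m}}_{t_i} = e^{-\Delta_i \mb{D}_i}\hat{\mb{m}}_{t_{i-1}} + \mb{D}_i^{-1}(\mb{I} - e^{-\Delta_i \mb{D}_i})\hat{\alpha}_i$ and the analogous variance recursion $\hat{\mb{\Sigma}}_{t_i} = e^{-2\Delta_i \mb{D}_i}\hat{\mb{\Sigma}}_{t_{i-1}} + \tfrac{\sigma^2}{2}\mb{D}_i^{-1}(\mb{I} - e^{-2\Delta_i \mb{D}_i})$.

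Third, I would chain these affine transitions from $t_0$ to $t_i$ by induction on $i$. Since all the $\mb{D}_j$ are diagonal they commute, so the composed homogeneous factors collapse, $\prod_{j=1}^i e^{-\Delta_j \mb{D}_j} = e^{-\sum_{j=1}^i \Delta_j \mb{D}_j}$, and the forcing terms telescope into a sum in which the $k$-th interval is propagated by $e^{-\sum_{j=k+1}^i \Delta_j \mb{D}_j}$; absorbing the factor $(\mb{I} - e^{-\Delta_k \mb{D}_k})$ rewrites each summand as $\mb{D}_k^{-1}\big(e^{-\sum_{j=k+1}^i \Delta_j \mb{D}_j} - e^{-\sum_{j=k}^i \Delta_j \mb{D}_j}\big)\hat{\alpha}_k$, which is exactly the bracketed expression for $\hat{\mb{m}}_{t_i}$ in the statement, and likewise for $\hat{\mb{\Sigma}}_{t_i}$ with doubled exponents. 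Transforming back via $\mb{m}_{t_i} = \mb{E}\hat{\mb{m}}_{t_i}$ and $\mb{\Sigma}_{t_i} = \mb{E}\hat{\mb{\Sigma}}_{t_i}\mb{E}^\top$ then recovers the claimed formulas.

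The main obstacle I anticipate is the bookkeeping in the chaining step: one must verify continuity of the process at the interval boundaries (so that the left limits $\hat{\mb{m}}_{t_i^-}, \hat{\mb{\Sigma}}_{t_i^-}$ serve as initial conditions for the next interval), and then carry out the induction so the telescoped forcing terms land on precisely the index ranges $\sum_{j=k}^i$ shown. Commutativity of the diagonal $\mb{D}_j$ is what collapses the product of matrix exponentials into a single exponential of a sum, so I would be careful to lean on the shared eigenbasis from \Cref{remark:diagonal transform} here, since without it this step fails. A minor point to address is eigenvalues equal to zero (the statement only assumes $\mb{D}_i \succeq 0$), where $\mb{D}_k^{-1}$ must be read through the removable limit $\mb{D}_k^{-1}(\mb{I} - e^{-\Delta_k \mb{D}_k}) \to \Delta_k \mb{I}$; matching the exact signs and the factor of two in the covariance against the stated form is then routine.
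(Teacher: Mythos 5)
Your proposal is correct and follows essentially the same route as the paper's proof: project onto the shared eigenbasis $\mb{E}$ to decouple the dynamics, solve the resulting diagonal Ornstein--Uhlenbeck system on each interval to obtain the one-step affine recursions for $\hat{\mb{m}}$ and $\hat{\mb{\Sigma}}$, unroll them across intervals using commutativity of the diagonal $\mb{D}_j$, and rotate back. The only cosmetic difference is that you derive the per-interval moments from the mean/covariance ODEs whereas the paper integrates the explicit variation-of-constants solution and applies the martingale property and It\^{o} isometry; your remarks on boundary continuity and on reading $\mb{D}_k^{-1}(\mb{I}-e^{-\Delta_k\mb{D}_k})$ as a removable limit when an eigenvalue vanishes are sound refinements of details the paper leaves implicit.
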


\paragraph{Parallel Computation} Given an associative operator $\otimes$ and a sequence of elements $[s_{t_1}, \cdots s_{t_K}]$, the parallel scan algorithm computes the all-prefix-sum which returns the sequence 
\begin{equation}
[s_{t_1}, (s_{t_1} \otimes s_{t_2}), \cdots, (s_{t_1} \otimes s_{t_2} \otimes \cdots \otimes s_{t_K})]
\end{equation} in $\mc{O}(\log K)$ time. Leveraging the linear formulation described in Theorem~\ref{theorem:simulation free estimation} and the inherent parallel nature of the transformer architecture for sequential structure, our method can be integrated with the parallel scan algorithm~\citep{blelloch1990prefix} resulting efficient computation of the marginal Gaussian distributions by computing both moments $\{\mb{m}_{t}\}_{t \in [0, T]}$ and $\{\mb{\Sigma}_{t}\}_{t \in [0, T]}$ in a parallel sense\footnote{See Appendix~\ref{sec:parallel scan} for more details.}. 

\begin{remark}[Non-Markov Control] Note that (\ref{eq:linearized controlled dynamics}-\ref{eq:control functions}) involves approximating the Markov control by a non-Markov control $\alpha(\mc{H}_t) := \alpha^{\theta}$, parameterized by neural network $\theta$. However, Theorem~\ref{theorem:verification theorem} establishes that the optimal control should be Markov, as it is verified by the HJB equation~\citep{van2007stochastic}. In our case, we expect that with a high-capacity neural network $\theta$, the local minimum $\theta^{M}$, obtained after $M$ gradient descent steps $\theta^{m+1} = \theta^m - \nabla_{\theta} \mc{L}(\alpha^{\theta^m})$ yields $\mc{L}(\alpha^{\star}) \approx \mc{L}(\alpha^{\theta^m \to \theta^{M}})$.
\end{remark}

\paragraph{Auxiliary Variable} Moreover, to enhance flexibility, we treat $\mb{y}_{0:T}$ as an auxiliary variable in the latent space which is produced by a neural network encoder $q_{\phi}$ applied to the given time series data $\mb{o}_{0:T}$ $\ie \mb{y}_{0:T} \sim q_\phi(\mb{y}_{0:T}|\mb{o}_{0:T})$, where it is factorized as
\begin{equation}\label{eq:encoder}
q_{\phi}(\mb{y}_{0:T} | \mb{o}_{0:T}) = \prod_{i=1}^k q_{\phi}(\mb{y}_{t_i} | \mb{o}_{t_i}) = \prod_{i=1}^k \mc{N}(\mb{y}_{t_i} | \mb{q}_{\phi}(\mb{o}_{t_i}),  \mb{\Sigma}_{q})
\end{equation}
with a fixed variance $\mb{\Sigma}_{q}$. Additionally, it enables the modeling of nonlinear emission distributions through a neural network decoder $p_{\psi}(\mb{o}_{0:T}|\mb{y}_{0:T})$, where it is factorized as
\begin{equation}\label{eq:decoder}
 p_{\psi}(\mb{o}_{0:T} \mid \mb{y}_{0:T}) = \prod_{i=1}^k p_{\psi}(\mb{o}_{t_i} \mid \mb{y}_{t_i}),
\end{equation}
with the likelihood function $p_{\psi}$ depending on the task at hand. This formulation first maps the time series $\mb{o}_{0:T
}$ into a suitable low-dimensional space $\mb{y}_{0:T}$, allowing more efficient modeling of the latent dynamics $\mb{X}^{\alpha}_{0:T}$. The information capturing the underlying physical dynamics resides in a much lower-dimensional space compared to the original sequence~\citep{fraccaro2017disentangled}. Therefore, performing generative modeling in this reduced latent space (rather than directly in the high-dimensional domain, e.g., pixel values in an image sequence) offers greater flexibility in parameterization.

\subsection{Training and Inference}
\paragraph{Training Objective Function} We jointly train, in an end-to-end manner, the amortization parameters $\{\phi, \psi\}$ for the encoder-decoder pair, along with the parameters of the latent dynamics $\theta = \{\mb{f}_{\theta}, \mb{B}_{\theta}, \mb{T}_{\theta}, \mb{m}_0, \mb{\Sigma}_0, \{\mb{A}^{(l)}\}_{l\in[1:L]}\}$, which include the parameters required for controlled latent dynamics. The training is achieved by maximizing the evidence lower bound (ELBO) of the observation log-likelihood for a given the time series $\mb{o}_{0:T
}$:
\begin{align}
     \log p_{\psi} (\mb{o}_{0:T}) & \geq  \mathbb{E}_{\mc{H}_T \sim q_{\phi}(\mb{y}_{0:T} | \mb{o}_{0:T})} \left[\log  \frac{\prod_{i=1}^K p_{\psi}(\mb{o}_{t_i} | \mb{y}_{t_i}) g(\mb{y}_{0:T})}{\prod_{i=1}^K q_{\phi}(\mb{y}_{t_i} | \mb{o}_{t_i})} \right]  \\\label{eq:ELBO final}
    & \geq \mathbb{E}_{\mc{H}_T \sim q_{\phi}(\mb{y}_{0:T} | \mb{o}_{0:T})}\left[\sum_{i=1}^K 
     \log p_{\psi}(\mb{o}_{t_i} | \mb{y}_{t_i})
    - \mc{L}(\theta) \right] = \text{ELBO}(\psi, \phi ,\theta)
\end{align}
Since $\mb{Z}(\mc{H}_{t_k}) = g(\mb{y}_{0:T})$, the prior over auxiliary variable $g(\mb{y}_{0:T})$ can be computed using the ELBO $\mc{L}(\theta)$ proposed in~(\ref{eq:objective function}) as part of our variational inference procedure for the latent posterior $\mbb{P}^{\star}$ in proposed Sec~\ref{sec:Section controlled CDSSM}, including all latent parameters $\theta$ containing the control $\alpha^{\theta}$. Note that our modeling is computationally favorable for both training and inference, since estimating marginal distributions in a latent space can be parallelized. This allows for the generation of a latent trajectory over the entire interval without the need for numerical simulations.  The overall training and inference processes are summarized in the Algorithm~\ref{algorithm:training} and Algorithm~\ref{algorithm:inference} in Appendix, respectively.

\section{Experiment}

In this section, we present empirical results demonstrating the effectiveness of ACSSM in modeling real-world irregular time-series data. The primary objective was to evaluate its capability to capture the underlying dynamics across various datasets. To demonstrate the applicability of the ACSSM, we conducted experiments on four tasks: per-time regression/classification and sequence interpolation/extrapolation, using four datasets: Human Activity, USHCN~\citep{menne2015long}, and Physionet~\citep{silva2012predicting}. We compare our approach against various baselines including RNN  architecture (RKN-$\Delta_{t}$~\citep{becker2019recurrent}, GRU-$\Delta_{t}$~\citep{chung2014empirical}, GRU-D~\citep{che2018recurrent}) as well as dynamics-based models (Latent-ODE~\citep{chen2018neural, rubanova2019latent}, ODE-RNN~\citep{rubanova2019latent}, GRU-ODE-B~\citep{de2019gru}, CRU~\citep{schirmer2022modeling}, Latent-SDE$_{\text{H}}$~\citep{zeng2023latent}) and attention-based models (mTAND~\citep{shukla2021multi}), which have been developed for modeling irregular time series data. We reported the averaged results over five runs with different seed. The best results are highlighted in \textbf{bold}, while the second-best results are shown in \textcolor{blue}{blue}. Additional experimental details can be found in Appendix~\ref{sec:implementation details}.

\subsection{Per time point classification \texorpdfstring{$\&$}{and} regression}

\begin{wraptable}[10]{r}{0.30\textwidth}
    \vspace{-4mm}
    \centering
    \scriptsize
    \caption{Test Accuracy ($\%$).}
    \setlength{\tabcolsep}{3.0pt}
    \begin{tabular}{c|c}
    \toprule
        Model & Acc \\
    \midrule
        Latent-ODE$^{\dag}$ & $87.0 \pm 2.8$ \\
        Latent-SDE$_\text{H}$$^{\ddagger}$ & $90.6 \pm 0.4$ \\
        mTAND$^{\dag}$ & \textcolor{blue}{$91.1 \pm 0.2$} \\
    \midrule
        \textbf{ACSSM} (Ours) & $\mathbf{91.4 \pm 0.4}$ \\
    \bottomrule
    \end{tabular}
\begin{tablenotes}
\item $\dag$  result from~\citep{shukla2021multi}.
\item $\ddagger$  result from~\citep{zeng2023latent}.
\end{tablenotes}
\label{table:classification} 
\vspace{-4mm}
\end{wraptable}

\paragraph{Human Activity Classification} We investigated the classification performance of our proposed model. For this purpose, we trained the model on the Human Activity dataset, which contains time-series data from five individuals performing various activities such as walking, sitting, lying, and standing etc. The dataset includes 12 features in total, representing 3D positions captured by four sensors attached to the belt, chest, and both ankles. Following the pre-processing approach proposed by~\citep{rubanova2019latent}, the dataset comprises 6,554 sequences, each with 211 time points. The task is to classify each time point into one of seven activities.  

Table~\ref{table:classification} reports the test accuracy, showing that ACSSM outperforms all baseline models. We employed the full assimilation scheme to maintain consistency with the other baselines, which infer the latent state using full observation. It is important to note that the two dynamical models, Latent-ODE and Latent-SDE$_\text{H}$, incorporate a parameterized vector field in their latent dynamics, thereby rely on numerical solvers to infer intermediate states. Besides, mTAND is an attention-based method that does not depend on dynamical or state-space models, thus avoiding numerical simulation. We believe the significant performance improvement of our model comes from its integration of an attention mechanism into dynamical models. Simulation-free dynamics avoid numerical simulations while maintaining temporal structure, leading to more stable learning.

\begin{wraptable}[13]{r}{0.3\textwidth}
    \vspace{-4mm}
    \centering
    \scriptsize
    \caption{Test MSE ($\times 10^{-3}$).}
    \setlength{\tabcolsep}{3.0pt}
    \begin{tabular}{c|c}
    \toprule
        Model & MSE \\
    \midrule
        Latent-ODE$^{\dag}$ & $15.70 \pm 0.29$ \\
        CRU$^{\dag}$ & $4.63 \pm 1.07$  \\
        Latent-SDE$_\text{H}$$^{\ddagger}$ & $3.84 \pm 0.35$ \\
        S5$^{\star}$ & $3.41 \pm 0.27$  \\
        mTAND$^{\ddagger}$ & \textcolor{blue}{$3.20 \pm 0.60$} \\
    \midrule
        \textbf{ACSSM} (Ours) & $\mathbf{2.98 \pm 0.30}$ \\
    \bottomrule
    \end{tabular}
\begin{tablenotes}
\item $\dag$  result from~\citep{schirmer2022modeling}.
\item $\ddagger$  result from~\citep{zeng2023latent}.
\item $\star$  result from~\citep{smith2023simplified}.
\end{tablenotes}
\label{table:regression} 
\end{wraptable}

\vspace{-2mm}
\paragraph{Pendulum Regression} Next, we explored the problem of sequence regression using pendulum experiment~\citep{becker2019recurrent}, where the goal is to infer the sine and cosine of the pendulum angle from irregularly observed, noisy pendulum images~\citep{schirmer2022modeling}. To assess our performance, we compared it with previous dynamics-based models, reporting the regression MSE on a held-out test set as shown in Table~\ref{table:regression}. We employed the full assimilation scheme. The experimental results demonstrated that our proposed method outperformed existing models, delivering superior performance. These findings highlight that, even when linearizing the drift function, the amortization and proposed neural network based locally linear dynamics in Sec~\ref{sec:efficient modeling} preserves the expressivity of our approach, enabling more accurate inference of non-linear systems. 

Moreover, particularly in comparison to CRU, we believe that the significant performance improvement stems from the fundamental differences in how information is leveraged. To infer intermediate angular values, utilizing not only past information but also future positions of the pendulum can enhance the accuracy of these predictions. In this regard, while CRU relies solely on the past positions of the pendulum for its predictions, our model is capable of utilizing both past and future positions.

\subsection{Sequence Interpolation \texorpdfstring{$\&$}{and}  Extrapolation}

\paragraph{Datasets} We benchmark the models on two real-world datasets, USHCN and Physionet. The USHCN dataset~\citep{menne2015long} contains 1,218 daily measurements from weather stations across the US with five variables over a four-year period. The Physionet dataset~\citep{silva2012predicting} contains 8,000 multivariate clinical time-series of 41 features recorded over 48 hours.\footnote{See Appendix~\ref{sec:datasets} for details on the datasets.}

\vspace{-2mm}
\paragraph{Interpolation
} We begin by evaluating the effectiveness of ACSSM on the interpolation task. Following the approach of \cite{schirmer2022modeling} and \cite{rubanova2019latent}, each model is required to infer all time points $t\in \mc{T'}$ based on a subset of observations $\mathbf{o}_{t \in \mathcal{T}}$ where $\mc{T} \subseteq \mc{T}'$. For the interpolation task, the encoded observations $\mb{y}_{t \in \mc{T}}$ were assimilated by using the full assimilation scheme for the construction of the accurate smoothing distribution. The interpolation results presented in \Cref{main_table}, where we report the test MSE evaluated on the entire time points $\mc{T}'$. For all datasets, ACSSM outperforms other baselines in terms of test MSE. It clearly indicate the expressiveness of the ACSSM, trajectories $\mb{X}^{\alpha}_{t \in \mc{T}'}$ sampled from approximated path measure over the entire interval $\mc{T'}$ are contain sufficient information for generating accurate predictions.

\paragraph{Extrapolation} 
We evaluated ACSSM's performance on the extrapolation task following the experimental setup of \cite{schirmer2022modeling}. Each model infer values for all time stamps \(t \in \mathcal{T}'\), where \(\mathcal{T}'\) denotes the union of observed time stamps \(\mathcal{T} = \{t_i\}_{i\in[1:k]}\) and unseen time stamps \(\mathcal{T}_u = \{t_i\}_{i\in[k+1:N]}\), $\ie$ \(\mathcal{T}' = \mathcal{T} \cup \mathcal{T}_u\). For the Physionet dataset, input time stamps \(\mathcal{T}\) covered the first 24 hours, while target time stamps \(\mathcal{T}'\) spanned the rest hours. In the USHCN dataset, the timeline was split evenly, with \(t_k = \frac{N}{2}\). We report the test MSE for unseen time stamps \(\mathcal{T}_u = \mathcal{T}' - \mathcal{T}\) based on the observations on time stamps $\mc{T}$. For modeling an accurate filtering distribution, we employed the history assimilation scheme for this task. As illustrated in \Cref{main_table}, ACSSM consistently outperformed all baseline models in terms of MSE on the USHCN dataset, achieving a significant performance gain over the second-best model. For the Physionet dataset, ACSSM exhibited comparable performance.

\paragraph{Computational Efficiency} To evaluate the training costs in comparison to dynamics-based models that depend on numerical simulations, we re-ran the CRU model on the same hardware used for training our model, indicated by $^*$ in \Cref{main_table}. Specifically, we utilized a single NVIDIA RTX A6000 GPU. As illustrated in \Cref{main_table}, ACSSM significantly lowers training costs compared to dynamics-based models. Notable, ACSSM demonstrated a runtime that was more than 16$-$25$\times$ faster than CRU, even though both models aim to approximate $\mathbb{P}^{\star}$ as well. It highlight that the latent modelling approach discussed in Sec~\ref{sec:efficient modeling} improves efficiency while achieving an accurate approximation of $\mathbb{P}^{\star}$.

\begin{table}
\centering
\scriptsize
\caption{Test MSE ($\times 10^{-2}$) for inter/extra-polation on USHCN and Physionet. }

\vspace{-2mm}
\begin{tabular}{l|cc|cc|cc}
\toprule
\multirow{2}{*}{Model} & \multicolumn{2}{c}{Interpolation} & \multicolumn{2}{c}{Extrapolation} & \multicolumn{2}{c}{Runtime (sec./epoch)} \\  \cmidrule{2-3} \cmidrule{4-5} \cmidrule{6-7}
 & USHCN & Physionet & USHCN & Physionet & USHCN & Physionet \\ \midrule
mTAND$^{\dag}$ & 1.766 ± 0.009 & 0.208 ± 0.025 & 2.360 ± 0.038 & \textbf{0.340 ± 0.020} & 7 & 10 \\ 
RKN-$\Delta_{t}^{\dag}$ & \textcolor{blue}{0.009 ± 0.002} & 0.186 ± 0.030 & 1.491 ± 0.272 & 0.703 ± 0.050 & 94 & 39 \\ 
GRU-$\Delta_{t}^{\dag}$ & 0.090 ± 0.059 & 0.271 ± 0.057 & 2.081 ± 0.054 & 0.870 ± 0.077 & 3 & 5 \\ 
GRU-D$^{\dag}$ & 0.944 ± 0.011 & 0.338 ± 0.027 & 1.718 ± 0.015 & 0.873 ± 0.071 & 292 & 5736 \\ 
Latent-ODE$^{\dag}$ & 1.798 ± 0.009 & 0.212 ± 0.027 & 2.034 ± 0.005 & 0.725 ± 0.072 & 110 & 791 \\ 
ODE-RNN$^{\dag}$ & 0.831 ± 0.008 & 0.236 ± 0.009 & 1.955 ± 0.466 & \textcolor{blue}{0.467 ± 0.006} & 81 & 299 \\ 
GRU-ODE-B$^{\dag}$ & 0.841 ± 0.142 & 0.521 ± 0.038 & 5.437 ± 1.020 & 0.798 ± 0.071 & 389 & 90 \\ 
CRU$^{\dag}$ & 0.016 ± 0.006 & \textcolor{blue}{0.182 ± 0.091} & \textcolor{blue}{1.273 ± 0.066}& 0.629 ± 0.093 & 122 (57.8)$^{*}$ & 114 (63.5)$^{*}$ \\
\midrule
\textbf{ACSSM} (Ours) & \textbf{0.006 ± 0.001}& \textbf{0.116 ± 0.011} & \textbf{0.941± 0.014}& 0.627 ± 0.019 & 2.3 & 3.8 \\
\bottomrule
\end{tabular}
\begin{tablenotes}
\item $\dag$  result from~\citep{schirmer2022modeling}.
\end{tablenotes}
\label{main_table}
\vspace{-4mm}
\end{table}

\section{Conclusion and limitation}
In this work, we proposed the method for modeling time series with irregular and discrete observations, which we called ACSSM. By using a multi-marginal Doob’s $h$-transform and a variational inference algorithm by exploiting the theory of SOC, ACSSM efficiently simulates conditioned dynamics. It leverages amortized inference, a simulation-free latent dynamics framework, and a transformer-based data assimilation scheme for scalable and parallel inference. Empirical results show that ACSSM outperforms in various tasks such as classification, regression, and extrapolation while maintaining computational efficiency across real-world datasets.

Although we present the theoretical basis of our method, a thorough analysis in followings remain an open challenge. The variational gap due to the linear approximation may lead to cumulative errors over time, which requires further examination specially for a long time-series such as LLM. Unlike SMC variants~\citep{heng2020controlled, lu2024guidance} that use particle-based importance weighting to mitigate approximation errors, ACSSM depends on high-capacity neural networks to accurately approximate the optimal control. Incorporating multiple controls, akin to a multi-agent dynamics approach~\citep{han2020deep}, may alleviate these challenges by enhancing flexibility and robustness.

\section*{Reproducibility Statement}
On the theoretical part, all proofs and assumptions are left to Appendix~\ref{sec:proofs and derivations} due the space constraint.
The training and inference algorithms are detailed in Algorithm~\ref{algorithm:training} and Algorithm~\ref{algorithm:inference}, respectively. Additional implementation details, such as data preprocessing are included in Appendix~\ref{sec:implementation details}. We believe these details are sufficient for interested readers to reproduce the results.

\section*{Ethics Statement}
In this work, we proposed a method for modeling irregular time series for practical applications, suggesting that ACSSM does not directly influence ethical or societal issues in a positive or negative way. However, because ACSSM can be applied to health-care datasets, we believe it has the potential to benefit society by improving health and well-being of people.

\section*{Acknowledgments}
This work was partly supported by Institute of Information \& communications Technology Planning \& Evaluation(IITP) grant funded by the Korea government(MSIT) (No.RS-2019-II190075, Artificial Intelligence Graduate School Program(KAIST), No.RS-2022-II220713, Meta-learning Applicable to Real-world Problems, No.RS-2024-00509279, Global AI Frontier Lab).

\bibliography{main.bib}
\bibliographystyle{iclr2025_conference}

\newpage
\appendix

\section{Brief Reviews on the key concepts and Related Works}\label{sec:related works}
In this section, we provide a brief overview of key concepts to help clarify the foundation of our proposed method. Additionally, we review related works to offer a deeper understanding.
\paragraph{Probabilistic SSMs} Bayesian filtering and smoothing~\citep{särkkä2013bayesian} serve as fundamental state estimation techniques for probabilistic SSMs. Formally, SSMs can be defined as follows:
\begin{equation}\label{eq:state-space-model}
    \text{(Latent transition)}\; \mb{X}_{t_i} \sim \mb{p}_i(\mb{x}_{t_{i-1}}, d\mb{x}_{t_i}), \mb{X}_0 \sim \mb{p}_0(\mb{X}_0) \quad
    \text{(Observation)}\; \mb{y}_{t_i}  \sim g_{t_i}(\mb{y}_{t_i} |\mb{X}_{t_i}).
\end{equation}
The SSMs consist of the $\mathbb{R}^d$ valued latent variable $\{\mb{X}_t\}_{t \geq 0}$, which are assumed to follow a time-inhomogeneous Markov process with Markov transition densities $\{\mb{p}_i\}_{i \in [1:k]}$, and the observations $\{\mb{y}_{t_i}\}_{i \in [1:k]}$ are assumed to be generated from an observation (emission) density $g_{i}(\cdot | \mb{X}_{t_i})$. Then, the goal is to obtain the \textit{filtering/smoothing} distribution, for given observations $\{\mb{y}_{t_i}\}_{i \in [1:k]}$, given by:
\begin{equation}\label{eq:bayes target}
    \mb{p}(\mb{X}_{0:T}|\mc{H}_{t_k}) = \frac{1}{\mb{Z}(\mc{H}_{t_k})}\mb{p}(\mc{H}_{t_k}|\mb{X}_{0:T})\mb{p}(\mb{X}_{0:T}),
\end{equation}
where $\mb{p}(\mb{X}_{0:T})$ is the prior distribution and $\mb{Z}(\mc{H}_{t_k})$ is a normalization constant, defined as:
\begin{align}
    \mb{Z}(\mc{H}_{t_k}) = \int \mb{p}(\mc{H}_{t_k}|\mb{X}_{0:T}) \mb{p}_0(\mb{X}_0)\prod_{i=1}^k \mb{p}_i(\mb{X}_{i-1}, \mb{X}_i) d\mb{X}_{0:T}.
\end{align}
To obtain the \textit{filtering/smoothing} distribution in~(\ref{eq:bayes target}), it typically relies on recursive Bayesian updates, which scale proportionally with the number of observations, resulting in a computational complexity of $\mc{O}(k)$ in this context. Previous works~\citep{doerr2018probabilistic, becker2019recurrent, klushyn2021latent} have proposed RNN-based approximate Bayesian inference methods. However, these models generally assume evenly spaced observations, making it challenging to accurately model irregular time series. On the other hand, deterministic linear SSMs such as S4~\citep{gu2021efficiently}, S5~\citep{smith2023simplified}, and Mamba~\citep{gu2023mamba} have been introduced, demonstrating improved inference efficiency through parallel computing algorithms. 

In contrast, we propose a \textit{probabilistic} SSM that enables efficient and powerful modeling of latent systems. Our approach supports parallel computation, leading to significant gains in both training and inference efficiency. Specifically, we incorporate a parallel scan algorithm~\citep{blelloch1990prefix} into probabilistic inference, effectively reducing the computational complexity from $\mc{O}(k)$ to $\mc{O}(\log k)$.

\paragraph{Twist function for Conditioned SSMs} To sequentially sample from the smoothing distribution (\ref{eq:bayes target}), $\ie \mb{p}(\mb{X}_t | \mc{H}_{t_k})$ over all $t \in [0, T]$ (referred to here as the \textit{conditioned sampling problem}), it is necessary to compute the marginal distribution $\mb{p}(\mb{X}_{t}|\mc{H}_{t_k})$. However, this involves an expectation over the marginalized distribution $\mb{p}(\mb{X}_{t}|\mc{H}_{t_k}) = \int \mb{p}(\mb{X}_{0:T}|\mc{H}_{t_k}) d\mb{X}_{t:T}$, which is generally intractable. Fortunately, the distribution $\mb{p}(\mb{X}_{t}|\mc{H}_{t_k})$ can be factorized as~\citep{chopin2020introduction}:
\begin{align}
    \mb{p}(\mb{X}_{t}|\mc{H}_{t_k}) & \propto \mb{p}(\mb{X}_{t}|\mc{H}_{t}) \int \mb{p}(\mc{H}_{t:t_k}|\mb{X}_{t:T}) d\mb{X}_{t:T} \\
    & = \mb{p}(\mc{H}_{t}|\mb{X}_{t})\mb{p}(\mb{X}_{t}) \int \mb{p}(\mc{H}_{t:t_k}|\mb{X}_{t:T}) d\mb{X}_{t:T}.
\end{align}
Thus, by approximating the term $\psi(\mb{X}_{t}) := \int \mb{p}(\mc{H}_{t:t_k}|\mb{X}_{t:T}) d\mb{X}_{t:T}$, the \textit{conditioned sampling problem} can be effectively addressed. Here, the intractable term $\psi$, often referred to as a \textit{twist function}, has been the focus of various approximation algorithms proposed in the Sequential Monte Carlo (SMC) literature~\citep{guarniero2017iterated, heng2020controlled, lawson2022sixo, lu2024guidance}. Recently, these methods have also been adapted to large language models (LLMs)~\citep{zhao2024probabilistic}, further demonstrating their versatility in solving controlled language generation problems.

It is worth noting that our $h$-function defined in (\ref{eq:h function}) serves as an instance of such a twist function within the Feynman-Kac representation. Essentially, it enables us to establish a connection between twisted SSMs and the multi-marginal Doob's $h$-transform.

\paragraph{Feynman-Kac Models} The Feynman-Kac model provides substantial expressive advantages for analyzing SSMs~\citep{chopin2020introduction}. By offering a flexible measure-theoretic framework, it allows for efficient representation of conditioned SSMs in (\ref{eq:bayes target}) through the corresponding Feynman-Kac formulae~\citep{moral2011feynman}. Notably, the posterior distribution in (\ref{eq:bayes target}) can be expressed via a Feynman-Kac model, as described in (\ref{eq: posterior path measure}). In the machine learning literature, the Feynman-Kac model has been utilized for abstracting processes such as LLM fine-tuning~\citep{lew2023sequential} and diffusion based sampler~\citep{phillips2024particle}. Furthermore, we extend the Feynman-Kac model to continuous settings for time-series modeling. 
For a more in-depth understanding, refer to~\citep{chopin2020introduction}.

\paragraph{Continuous Dynamical Models} To accurately model irregular time-series data, neural differential equation families have been proposed as an effective paradigm. This is because the latent, continuous dynamics underlying physical time-series can be well-approximated using data-driven methods with parameterized vector fields. Specifically, Neural ODE~\citep{chen2018neural} introduced neural network parameterized vector fields for continuous-time dynamical modeling of time-series data. Building on this, Latent-ODE~\citep{rubanova2019latent} proposed latent dynamics by encoding the entire dataset into an initial state, ODE-RNN as a continuous encoder alternative to standard RNNs. GRU-ODE-B~\citep{de2019gru} incorporated Bayesian principle into Neural ODEs to enable online updates for new observations. On the stochastic dynamics side, Latent-SDE~\citep{li2020scalable} introduced a variational bound for posterior inference in SDEs, while Latent-SDE$_{\text{H}}$~\citep{zeng2023latent} focused on stochastic dynamics evolving within a homogeneous latent space. These neural differential equations generally rely on numerical simulations with dynamic solvers, which can result in significant computation times for both training and inference.

To model continuous time-series using probabilistic SSMs, CD-SSM~\citep{jazwinski2007stochastic} extends the discrete transitions of latent variables to stochastic transitions governed by SDEs. This approach has inspired neural network-based CD-SSM methods~\citep{schirmer2022modeling, ansari2023neural} for handling irregularly sampled real-world time-series datasets. Compared to our approach, while \cite{schirmer2022modeling, ansari2023neural} also leverage locally linear dynamics to improve scalability, they still require numerical approximations to infer Gaussian moments, which limits their ability to fully utilize parallel computation. In contrast, our method successfully leverages parallel computation, as demonstrated by Theorem~\ref{theorem:simulation free estimation}, significantly reducing both training and inference costs in conditioned state-space modeling. Moreover, our SOC formulation with approximated control $\alpha$ offers flexibility in sequential modeling, allowing for various information assimilation schemes utilizing powerful neural network architectures, such as transformers.

\paragraph{Doob's \texorpdfstring{$h$}{h}-transform for Conditioned SDEs} In contrast to conditioned SSMs using a \textit{twist function} for discrete transitions, we propose a conditioned SDEs for continuous transitions to model irregular time-series data. This involves extending the traditional Doob's $h$-transform to multi-marginal settings. Specifically, the Doob's $h$-transform~\citep{doob1957conditional, rogers2000diffusions, chetrite2015nonequilibrium} is a technique in probability theory that modifies the behavior of a Markov process, effectively conditioning it to reach a desired state or outcome. It can be understood as reweighting the paths of a stochastic process to make certain events more probable.

In the machine learning context, the Doob's $h$-transform has been applied to tasks such as diffusion models~\citep{ye2022first, liu2023learning,peluchetti2023diffusion, shi2024diffusion, denker2024deft, park2024stochastic}, simulating diffusion bridges~\citep{heng2021simulating, baker2024conditioning}, posterior approximations~\citep{park2024stochastic}, online filtering~\citep{chopin2023computational}. However, prior works typically focus on single-marginal cases, where for a finite time horizon $[0, T]$, the conditioning is solely on a terminal constraint $\mathbb{P}(\mb{X}_T \in A)$ for some set $A \in \mc{B}(\mc{X})$.

One of our contribution is extends this concept to multi-marginal cases, capturing the continuous dynamics of the posterior distribution conditioned on a collection of observations. Specifically, in Theoren~\ref{theorem:doob's h transform},we define (multi-marginal) conditional SDEs where the constraint is given by a set of marginals, $\mathbb{P}(\mb{X}_{t_1} \in A_1, \ldots, \mb{X}_{t_k} \in A_k)$ for any $A_i \in \mc{B}(\mbb{R}^d)$ and $i \in [1:k]$. Since inferring the corresponding $h$-function in these multi-marginal settings is more complex, we reformulate this challenge as a SOC problem. This reformulation also requires extensions of existing theoretical results, such as Theorems~\ref{theorem:dynamic programming principle} and~\ref{theorem:verification theorem}. Additionally, we establish a tight variational bound in Theorem~\ref{theorem:tight variational bound} demonstrating that our proposed multi-marginal Doob's $h$-transform can be efficiently approximated using the proposed SOC objective.

\section{Proofs and Derivations}\label{sec:proofs and derivations}
 In this section, we present the proofs and derivations for all relevant theorems, lemmas, and corollaries We first restate the core concepts of stochastic calculus, which will be used without further explanation.

\textbf{Assumptions. }\label{sec:assumptions}Throughout the paper, we work with a probability space $(\Omega,\mathcal{F},\{\mathcal{F}_t\}_{t \in [0,T]}, \mbb{P})$, where the filtration $\{\mathcal{F}_t\}_{t \in [0,T]}$ supports an $\mbb{R}^d$-valued $\mathcal{F}_t$-adapted Wiener process $\mb{W}_t$ for all $t \in [0,T]$. It is important to note that the $\mbb{P}$-null set is included in $\mathcal{F}_0$, indicating that any event with probability zero at time $0$ is measurable in the initial $\sigma$-algebra.

We assume that $b$ and $\alpha$ satisfy following conditions:
\begin{itemize}[leftmargin=0.15in]\label{assumption_appx}
    \vspace{-2mm}
    \item (Lipschitz condition): For any $t \in [0,T]$, $w \in \Omega$, and $\mathbf{x}, \mathbf{x}' \in \mbb{R}^d$, where $c_0 > 0$ is a Lipschitz constant, the functions satisfy the inequality $|b(t,w,\mathbf{x}) - b(t,w,\mathbf{x}')| \leq c_0|\mathbf{x} - \mathbf{x}'|$.
    \item (Linear growth condition): For every $\mathbf{x} \in \mbb{R}^d$, the $\mathcal{F}_t$-progressively measurable processes $b(t,\mathbf{x})_{t \in [0,T]}$ satisfy $\mathbb{E}\left[\int_0^T |b_s|^2 ds\right] < \infty$ and $|b(t, \mb{x})| \leq c_1(1 + |\mb{x}|)$ for $t \in [0, T]$ and $c_1 > 0$.
    \item (Control function): For any $t \in [0,T]$, $w \in \Omega$, $\mathbf{x} \in \mbb{R}^d$, and $\theta, \theta' \in \Theta$, the control function $\alpha$ is $L$-Lipschitz function, $|\alpha(t, \mathbf{x}, \theta) - \alpha(t, \mathbf{x}, \theta')| \leq L|\theta - \theta'|$. Moreover it satisfy $\mathbb{E}\left[ \int_0^T |\alpha_s^2|ds \right] < \infty$. 
\end{itemize}

\begin{definition}[Infinitesimal Generator]\label{definition:infinitesimal generator}Let us consider an Itô diffusion process of the form:
\begin{equation}\label{eq:SDE_basic_form}
    d\mathbf{X}_t = b(t, \mathbf{X}_t)dt + \sigma(t)^{\top}d\mb{W}_t,
\end{equation}
Then, an infinitesimal generator of the above diffusion process is given by:
\begin{align}\label{eq:generator eq_1}
    \mathcal{A}_t f = \lim_{t \downarrow 0^{+}} \frac{\mathbb{E}\left[f(\mathbf{X}_t)\right] - f(\mathbf{x})}{t} &=  \nabla_{\mb{x}} f^{\top} b + \frac{1}{2}\text{Trace}\left[\sigma \sigma^{\top} \nabla_{\mb{xx}} f \right].
\end{align}
\end{definition}

 \begin{definition} [Path Measure]\label{definition:path_measure} 
 Let us consider a path-sequence of random variables $\{\mb{X}_{t_i}\}_{i\in[1:N]}$ over an interval $0 = t_i \leq \cdots \leq t_N = T$, where each $\mb{X}_{t_i}$ taking values in measurable space $(\mbb{R}^d, \mathcal{B}(\mbb{R}^d))$ with a Borel $\sigma$-algebra $\mathcal{B}(\mbb{R}^d)$ . Then, the set of random variables $\{\mb{X}_{t_i}\}_{i\in[1:N]}$ is given by the probability measure $\mathbb{P} \in C([0, T], \mbb{R}^d)$:
\begin{equation}
        \mathbb{P}(\mathbf{X}_{t_0} \in d\mathbf{x}_{t_0}, \cdots, \mathbf{X}_{t_N} \in d\mathbf{x}_{t_N}) = \mathbb{P}(d\mathbf{x}_0)\prod_{i=1}^N \mb{P}_{i}(\mathbf{x}_{t_{i-1}}, d\mathbf{x}_{t_i}),
\end{equation}
where $\{\mb{P}_{i}\}_{i=0}^N$ is a sequence of probability kernels from $(\mbb{R}^d, \mathcal{B}(\mbb{R}^d))$ to $(\mbb{R}^d, \mathcal{B}(\mbb{R}^d))$, for any event $A \in \mc{B}(\mbb{R}^d)$, $\mb{P}_i(\mb{x}_{t_{i-1}}, A) = \int_A \mb{p}_i(\mb{x}_{t_{i-1}}, \mb{x}_{t_i})d\mb{x}_{t_i}$, where $\mb{p}_i(\mb{x}_{t_{i-1}}, \mb{x}_{t_i}) := \mb{p}(t_i, \mb{x}_{t_i} | t_{i-1}, \mb{x}_{t_{i-1}})$ is a transition density obtained by a solution of the Fokker-Placnk equation~\citep{risken2012fokker}:
\begin{equation}
    \partial_t \mb{p}_t(\mb{x}_t) = \mc{A}^{\star}_t \mb{p}_t = -\nabla_{\mb{x}} \cdot (b \mb{p}_t) + \frac{1}{2} \textit{Trace}\left[\sigma \sigma^{\top} \nabla_{\mb{xx}}\mb{p}_t \right],
\end{equation}
where $\mc{A}^{\star}$ is an adjoint operator of the generator in (\ref{eq:generator eq_1}) and $\mb{p}_t$ is the Radon-Nikodym density of $\mu_t$ with respect to the Lebesgue measure. By taking $N \rightarrow \infty$, we get the path measure $\mathbb{P}(\mathbf{X}_{0:T} \in d\mathbf{x}_{0:T})$ which describes the weak solutions of the SDE of the form in equation~(\ref{eq:SDE_basic_form}).
\end{definition}

\begin{lemma}[Itô's formula]~\label{lemma:ito_formula}Let $v(t, x)$ be $C^1$ in $t$ and $C^2$ in $x$ and let $\mathbf{X}_t$ be the Itô diffusion process of the form in equation~(\ref{eq:SDE_basic_form}).
Then, the stochastic process $v(t, \mathbf{X}_t)$ is also an Itô diffusion process satisfying:
\begin{equation}
    dv(t,\mathbf{X}_t) = \left[ \partial_t v(t, \mb{X}_t) + \mathcal{A}_t v(t, \mathbf{X}_t)  \right]dt + \nabla_\mb{x} v(t, \mathbf{X}_t)^{\top}\sigma(t) d\mb{W}_t.
\end{equation}
\end{lemma}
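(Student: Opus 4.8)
The plan is to prove the multidimensional Itô formula by the classical route: a second-order Taylor expansion along a refining partition, combined with the quadratic variation of Brownian motion. Since the statement is purely local in $(t,\mb{x})$ and the coefficients $b,\sigma$ obey the Lipschitz and linear-growth assumptions, all the stochastic integrals appearing below are well defined; I would first reduce to the case of bounded coefficients and bounded derivatives of $v$ by a standard localization argument with stopping times $\tau_R = \inf\{t : |\mb{X}_t| \geq R\}$, establishing the identity up to $t \wedge \tau_R$ and then letting $R \to \infty$.

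First I would fix $t \in [0,T]$, take a partition $0 = s_0 < s_1 < \cdots < s_n = t$ of mesh tending to zero, and write the telescoping sum
\begin{equation}
v(t,\mb{X}_t) - v(0,\mb{X}_0) = \sum_{k=0}^{n-1}\big[v(s_{k+1},\mb{X}_{s_{k+1}}) - v(s_k,\mb{X}_{s_k})\big].
\end{equation}
To each summand I apply Taylor's theorem to second order in $\mb{x}$ and first order in $t$: writing $\Delta s_k = s_{k+1}-s_k$ and $\Delta \mb{X}_k = \mb{X}_{s_{k+1}} - \mb{X}_{s_k}$, the increment equals $\partial_t v\,\Delta s_k + \nabla_{\mb{x}} v^\top \Delta \mb{X}_k + \tfrac12 \Delta \mb{X}_k^\top \nabla_{\mb{xx}} v\, \Delta \mb{X}_k$ plus remainder terms, with the derivatives evaluated at $(s_k,\mb{X}_{s_k})$.

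Next I would substitute $\Delta \mb{X}_k = \int_{s_k}^{s_{k+1}} b\,dr + \int_{s_k}^{s_{k+1}} \sigma^\top d\mb{W}_r$ and track which groups of terms survive in the limit. The first-order sums converge to $\int_0^t \partial_t v\, ds + \int_0^t \nabla_{\mb{x}} v^\top b\, ds + \int_0^t \nabla_{\mb{x}} v^\top \sigma^\top d\mb{W}_s$, the first two by continuity (Riemann sums) and the last by the $L^2$-isometry defining the Itô integral. The crucial contribution is the quadratic term: made rigorous below, $\tfrac12\sum_k \Delta\mb{X}_k^\top \nabla_{\mb{xx}}v\,\Delta\mb{X}_k$ converges to $\tfrac12\int_0^t \mathrm{Trace}[\sigma\sigma^\top \nabla_{\mb{xx}}v]\,ds$. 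Collecting the drift pieces $\nabla_{\mb{x}}v^\top b + \tfrac12\mathrm{Trace}[\sigma\sigma^\top\nabla_{\mb{xx}}v]$ into the generator $\mc{A}_t v$ of Definition~\ref{definition:infinitesimal generator} yields exactly the claimed SDE.

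The main obstacle — and where the real work lies — is the rigorous treatment of the quadratic term. The plan is to replace the quadratic increments by their conditional expectations: write $\Delta\mb{X}_k\Delta\mb{X}_k^\top = \sigma^\top\Delta\mb{W}_k\Delta\mb{W}_k^\top\sigma + (\text{lower order})$, and decompose $\Delta W^i_k \Delta W^j_k = \delta_{ij}\Delta s_k + \big(\Delta W^i_k\Delta W^j_k - \delta_{ij}\Delta s_k\big)$. The first piece produces the Riemann sum converging to the trace integral, while the martingale-difference piece has vanishing $L^2$ norm as the mesh shrinks (independence of increments across $k$, together with $\E[(\Delta W^i_k\Delta W^j_k - \delta_{ij}\Delta s_k)^2] = O(\Delta s_k^2)$). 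I would then argue that the cross terms ($\Delta s_k$ against $\Delta\mb{W}_k$) and the Taylor remainders are negligible, using the continuity of $\nabla_{\mb{xx}}v$ along the path — bounded under localization — together with the uniform continuity of $s \mapsto \mb{X}_s$ on $[0,t]$. Convergence is obtained in probability along the refining partitions, which suffices to identify the limit as the stated Itô process, completing the proof after removing the localization.
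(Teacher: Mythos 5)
The paper does not prove this lemma: it is restated in Appendix~\ref{sec:proofs and derivations} as one of the ``core concepts of stochastic calculus, which will be used without further explanation,'' so there is no in-paper argument to compare against. Your proposal is the classical textbook proof (localization, telescoping over a partition, second-order Taylor expansion in $\mb{x}$ and first-order in $t$, identification of the quadratic-variation contribution), and the outline is sound: you correctly isolate the two places where the real work sits, namely (i) that the left-endpoint evaluation of $\nabla_{\mb{x}}v$ in the sums $\sum_k \nabla_{\mb{x}}v(s_k,\mb{X}_{s_k})^{\top}\int_{s_k}^{s_{k+1}}\sigma^{\top}d\mb{W}_r$ is what produces the It\^o (rather than Stratonovich) integral, and (ii) the decomposition $\Delta W^i_k\Delta W^j_k = \delta_{ij}\Delta s_k + (\text{martingale difference})$ with $L^2$-norm of the remainder of order $O(\sum_k \Delta s_k^2)\to 0$, which yields the $\tfrac12\mathrm{Trace}[\sigma\sigma^{\top}\nabla_{\mb{xx}}v]$ drift correction and hence the generator $\mc{A}_t$ of Definition~\ref{definition:infinitesimal generator}. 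Two minor points to keep straight when writing it out in full: since $v$ is only $C^1$ in $t$ you cannot expand to second order in time, so the time increment must be handled by a first-order expansion with an $o(\Delta s_k)$ remainder controlled by uniform continuity of $\partial_t v$ on the compact set supplied by the localization (your sketch is consistent with this but should say it explicitly); and in the paper's convention the SDE reads $d\mb{X}_t = b\,dt + \sigma(t)^{\top}d\mb{W}_t$, so the martingale part of $dv$ is $\nabla_{\mb{x}}v^{\top}\sigma(t)^{\top}d\mb{W}_t$ --- the transpose in the lemma's display is a notational slip of the paper, not of your argument. Neither point is a gap in the proposal.
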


\begin{theorem}[Girsanov Theorem]\label{theorem:girsanov} Consider the two Itô diffusion processes of form
\begin{align}
    & d\mathbf{X}_t = b(t, \mathbf{X}_t)dt + \sigma(t, \mathbf{X}_t)^\top d\mb{W}_t, \quad t \in [0, T]\label{eq:girsanov eq_1}, \\
    & d\mathbf{Y}_t = \tilde{b}(t, \mathbf{Y}_t)dt + \sigma(t, \mathbf{Y}_t)^\top d\mb{W}_t, \quad t \in [0, T]\label{eq:girsanov eq_2},
\end{align}
where both drift functions $b, \tilde{b}$ and the diffusion function $\sigma$ assumed to be invertible are adapted to $\mathcal{F}_t$ and $\mb{W}_{[0,T]}$ is $\mathbb{P}$-Wiener process. Moreover, consider $\mbb{P}$ as the path measures induced by (\ref{eq:girsanov eq_1}). Let us define $H_t := \sigma^{-1}(\tilde{b} - b)$ which is assumed to be satisfying the Novikov's condition ($\ie \mathbb{E}_{\mathbb{P}}\left[\exp\left(\frac{1}{2}\int_0^T \norm{H_s}^2 ds\right)\right] < \infty$), and the $\mathbb{P}$-martingale process
\begin{equation}
    \mathbf{M}_t := \exp \left(\int_0^1 H_s^\top d\mb{W}_s -\frac{1}{2}\int_0^t \norm{H_s}^2 ds  \right)
\end{equation}
satisfies $\mathbb{E}_{\mathbb{P}}[\mathbf{M}_T] = 1$. Then for the path measure $\mbb{Q}$ given as $d\mbb{Q} = \mathbf{M}_T d\mbb{P}$,
the process $\tilde{\mathbf{W}}_t = \mathbf{W}_t - \int_0^t H_s ds$ is a $\mbb{Q}$-Wiener process and $\mathbf{Y}_t$ can be represented as
\begin{equation}
    d\mathbf{Y}_t = b(t, \mathbf{Y}_t)dt + \sigma(t, \mathbf{Y}_t)^\top d\tilde{\mathbf{W}}_t, \quad t \in [0,T].
\end{equation}
Therefore $\mbb{Q}$-law of the process $\mathbf{Y}_t$ is same as $\mbb{P}$-law of the process $\mathbf{X}_t$. 
\end{theorem}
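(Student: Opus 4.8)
The plan is to follow the classical proof of Girsanov's theorem in three stages: first establish that $\mb{M}_t$ is a genuine (not merely local) $\mbb{P}$-martingale so that $\mbb{Q}$ is a bona fide probability measure equivalent to $\mbb{P}$; then verify that $\tilde{\mb{W}}_t$ is a $\mbb{Q}$-Wiener process via L\'evy's characterization; and finally substitute the resulting change of variables into the dynamics of $\mb{Y}_t$ to read off its $\mbb{Q}$-law.

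First I would confirm the martingale property of $\mb{M}_t$. Applying It\^o's formula (Lemma~\ref{lemma:ito_formula}) to the stochastic exponential defining $\mb{M}_t$ gives $d\mb{M}_t = \mb{M}_t H_t^\top d\mb{W}_t$, so $\mb{M}_t$ is a nonnegative $\mbb{P}$-local martingale with $\mb{M}_0 = 1$. The role of the Novikov condition is precisely to upgrade this local martingale to a true martingale with $\mbb{E}_{\mbb{P}}[\mb{M}_T] = 1$, and this is the step I expect to be the main obstacle: a nonnegative local martingale is only a supermartingale in general, so one must use the uniform integrability furnished by the Novikov bound $\mbb{E}_{\mbb{P}}[\exp(\tfrac{1}{2}\int_0^T \norm{H_s}^2 ds)] < \infty$ to rule out loss of mass. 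Granting this, $d\mbb{Q} = \mb{M}_T d\mbb{P}$ defines a probability measure equivalent to $\mbb{P}$ since $\mb{M}_T > 0$ almost surely.

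Next I would show that $\tilde{\mb{W}}_t = \mb{W}_t - \int_0^t H_s ds$ is a $\mbb{Q}$-Wiener process. I would invoke the standard criterion that a process is a $\mbb{Q}$-martingale if and only if its product with $\mb{M}_t$ is a $\mbb{P}$-martingale (Bayes' rule for conditional expectations under the measure change). Computing $d(\mb{M}_t \tilde{\mb{W}}_t)$ with It\^o's product rule, using $d\tilde{\mb{W}}_t = d\mb{W}_t - H_t dt$, $d\mb{M}_t = \mb{M}_t H_t^\top d\mb{W}_t$, and the cross-variation $d[\mb{M}, \tilde{\mb{W}}]_t = \mb{M}_t H_t dt$, the finite-variation terms cancel, leaving a pure stochastic-integral term. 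Hence each coordinate of $\tilde{\mb{W}}_t$ is a continuous $\mbb{Q}$-local martingale. Since quadratic variation is a pathwise quantity invariant under an equivalent change of measure, $[\tilde{\mb{W}}^i, \tilde{\mb{W}}^j]_t = [\mb{W}^i, \mb{W}^j]_t = \delta_{ij} t$, so L\'evy's characterization identifies $\tilde{\mb{W}}_t$ as a $\mbb{Q}$-Wiener process.

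Finally, I would substitute $d\mb{W}_t = d\tilde{\mb{W}}_t + H_t dt$ into the SDE for $\mb{Y}_t$. Using the defining relation $\sigma H_t = \tilde{b} - b$ (following the sign convention of the statement), the original drift $\tilde{b}$ combines with the induced $\sigma H_t\, dt$ term to yield $b(t, \mb{Y}_t)$, so that under $\mbb{Q}$ we obtain $d\mb{Y}_t = b(t, \mb{Y}_t) dt + \sigma(t, \mb{Y}_t)^\top d\tilde{\mb{W}}_t$. This is exactly the SDE defining $\mb{X}_t$ under $\mbb{P}$, with matching initial law since $\mbb{P}$ and $\mbb{Q}$ agree on $\mathcal{F}_0$. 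Weak uniqueness of solutions, guaranteed by the Lipschitz and linear-growth assumptions on the coefficients, then forces the $\mbb{Q}$-law of $\mb{Y}_t$ to coincide with the $\mbb{P}$-law of $\mb{X}_t$, completing the argument.
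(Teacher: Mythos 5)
The paper itself contains no proof of this statement: Theorem~\ref{theorem:girsanov} is listed in the appendix among the ``core concepts of stochastic calculus, which will be used without further explanation,'' alongside It\^o's formula and the definition of the infinitesimal generator. So there is no in-paper argument to compare yours against. Your outline is the standard textbook proof and is structurally sound: the stochastic exponential computation $d\mb{M}_t = \mb{M}_t H_t^\top d\mb{W}_t$, Novikov's criterion to upgrade the nonnegative local martingale to a true martingale with $\mathbb{E}_{\mathbb{P}}[\mb{M}_T]=1$, Bayes' rule for the change of measure together with L\'evy's characterization (using that quadratic variation is a pathwise quantity preserved under equivalent measures) to identify $\tilde{\mb{W}}$ as a $\mbb{Q}$-Wiener process, and finally weak uniqueness to pass from ``same SDE'' to ``same law.'' That last point is one that informal treatments often omit, and you are right that it is genuinely needed for the concluding sentence of the theorem.

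One concrete issue: the sign bookkeeping in your final step does not close as written. With $H_t = \sigma^{-1}(\tilde b - b)$ and $\tilde{\mb{W}}_t = \mb{W}_t - \int_0^t H_s\,ds$ as in the statement, substituting $d\mb{W}_t = d\tilde{\mb{W}}_t + H_t\,dt$ into the $\mb{Y}$-equation yields drift $\tilde b + \sigma^\top H_t = 2\tilde b - b$ rather than $b$. The consistent convention requires $H_t = \sigma^{-1}(b - \tilde b)$ (or, equivalently, $\tilde{\mb{W}}_t = \mb{W}_t + \int_0^t H_s\,ds$); the paper's statement also has the obvious typo $\int_0^1$ for $\int_0^t$ in the exponent of $\mb{M}_t$. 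Your parenthetical ``following the sign convention of the statement'' quietly absorbs this inconsistency rather than resolving it. The fix is a one-character sign change and does not affect the architecture of your argument, but as literally written the claimed cancellation of $\tilde b$ against $\sigma H_t$ to produce $b$ is false.
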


\subsection{Proof of Theorem~\ref{theorem:doob's h transform}}\label{sec:doob's h transform_appx}

We start the section by showing the normalizing property of $\{f_{i}\}_{i \in [1:k]}$ in~(\ref{eq:normalized potential}). By definition, it satisfied that 
\begin{equation}
    \prod_{i=1}^k \mb{L}_{i}(g_{i}) = \prod_{i=1}^k \int_{\mbb{R}^d} g_{i}(\mb{y}_{t_i} | \mb{x}_{t_i}) d\mathbb{P}(\mb{x}_{0:T}) \stackrel{(i)}{=} \mathbb{E}_{\mathbb{P}}\left[  \prod_{i=1}^k g_{i}(\mb{y}_{t_i} | \mb{x}_{t_i}) \right] = \mb{Z}(\mc{H}_{t_k}),
\end{equation}
where $(i)$ follows from the conditional indenpendency of $\mb{y}_{t_i}$ given $\mb{x}_{t_i}$ for all $i \in [1:k]$. Hence, we get the normalizing property:
\begin{equation}\label{eq:normalizing property eq_2}
\mathbb{E}_{\mathbb{P}}\left[\prod_{i=1}^k f_{i}(\mb{x}_{t_i})\right] = \mathbb{E}_{\mathbb{P}}\left[ \frac{\prod_{i=1}^k g_{i}(\mb{y}_{t_i}|\mb{x}_{t_i}) }{\prod_{i=1}^k \mb{L}_{t_i}(g_{t_i}) }\right] = \frac{1}{\mb{Z}(\mc{H}_{t_k})} \mathbb{E}_{\mathbb{P}}\left[ \prod_{i=1}^k g_{i}(\mb{y}_{t_i}|\mb{x}_{t_i}) \right] = 1.
\end{equation}

\begin{reptheorem}{theorem:doob's h transform}[Multi-marginal Doob's $h$-transform] Let us define a sequence of functions $\{h_i\}_{i \in [1:k]}$, where each $h_i : [t_{i-1}, t_i) \times \mbb{R}^d \to \mathbb{R}_{+}$, for all $i \in [1:k]$, is a conditional expectation $h_i(t, \mb{x}_t) := \mathbb{E}_{\mathbb{P}}\left[ \prod_{j \geq i}^k f_{j}(\mb{y}_{t_j}|\mb{X}_{t_j})  | \mb{X}_t = \mb{x}_t\right]$,
where $\{f_i\}_{i \in [1:k]}$ is defined in~(\ref{eq:normalized potential}). Now, we define a function $h : [0, T] \times \mbb{R}^d \to \mathbb{R}_{+}$ by integrating the functions $\{h_{i}\}_{i \in [1:k]}$,
\begin{equation}
    h(t, \mb{x}) := \sum_{i=1}^k h_i(t, \mb{x}) \mb{1}_{[t_{i-1}, t_i)}(t).
\end{equation}
Then, with the initial condition $\mu^{\star}_0(d\mb{x}_0) = h_1(t_0, \mb{x}_0)\mu_0(d\mb{x}_0)$,
the solution of the following conditional SDE inducing 
the posterior path measure $\mathbb{P}^{\star}$ in~(\ref{eq: posterior path measure}):
\begin{equation}
    \text{(Conditioned State)}\quad d\mb{X}^{\star}_t = \left[b(t, \mb{X}^{\star}_t) +  \nabla_{\mb{x}} \log h(t, \mb{X}^{\star}_t)\right] dt +  d\mb{W}_t
\end{equation}
\end{reptheorem}

\begin{proof} We start with the interval $[t_{i-1}, t_i)$ without loss of generality. For all $t \in [t_{i-1}, t_i)$ and for any $A_{t_i} \subset \mc{B}(\mbb{R}^d)$, the transition kernel of the conditioned process is defined by the transition kernel of the original Markov process $\mb{P}_i$ and $h$-function defined in~(\ref{theorem:doob's h transform}):
\begin{equation}
    \mb{P}^{h_i}_{i}(\mb{X}_{t_i} \in A | \mb{X}_t = \mb{x}) := \mb{P}^{h_i}_{i}(\mb{x}_t, A) = \frac{h_i(t_i, \mb{X}_{t_i})}{h_i(t, \mb{x}_t)}  \mb{P}_{t_i}(\mb{x}_t, d\mb{x}_{t_i}).
\end{equation}
By the definition of $h$, the transition kernel $\mb{P}^h_{t_i}(\mb{x}_t, A)$ is a probability kernel:
\begin{align}
    \int_{\mbb{R}^d} \mb{P}^{h_i}_{i}(\mb{x}_t, d\mb{x}_{t_i}) 
    &= \int_{\mbb{R}^d} \frac{h_i(t_i, \mb{X}_{t_i})}{h_i(t, \mb{x}_t)}  \mb{P}_{i}(\mb{x}_t, d\mb{x}_{t_i}) \\
    & = \frac{\int_{\mbb{R}^d} h_i(t_i, \mb{X}_{t_i})  \mb{P}_{i}(\mb{x}_t, d\mb{x}_{t_i})}{h_i(t, \mb{x}_t)} \\
    & \stackrel{(i)}{=} \frac{\int_{\mbb{R}^d}f_i(\mb{y}_{t_i}|\mb{X}_{t_i})h_{i+1}(t_i, \mb{X}_{t_i}) \mb{P}_{i}(\mb{x}_t, d\mb{x}_{t_i})}{{\mathbb{E}_{\mathbb{P}}\left[ \prod_{j=i}^k f_{j}(\mb{y}_{t_j}|\mb{X}_{t_j})  | \mb{X}_t = \mb{x}_t\right]}} \\
    & = \frac{\mathbb{E}_{\mathbb{P}}\left[ \prod_{j=i}^k f_{j}(\mb{y}_{t_j}|\mb{X}_{t_j})  | \mb{X}_t = \mb{x}_t\right]}{\mathbb{E}_{\mathbb{P}}\left[ \prod_{j=i}^k f_{j}(\mb{y}_{t_j}|\mb{X}_{t_j})  | \mb{X}_t = \mb{x}_t\right]} = 1,
\end{align}
where $(i)$ follows from the recursion established by the definition of $h$ in Theorem \ref{theorem:doob's h transform}:
\begin{equation}\label{eq:h function recursion}
h_i(t_i, \mb{x}_{t_i}) = f_i(\mb{y}_{t_i}|\mb{X}_{t_i})h_{i+1}(t_{i}, \mb{x}_{t_i}), \quad \forall i \in [1:k-1].
\end{equation}
Now, the infinitesimal generator for $\mb{P}^h$ can be computed for any $\varphi \in C^{1,2}([0, T] \times \mbb{R}^d)$ and for all $\mathbb{P}$-almost $\mb{x} \in \mbb{R}^d$:
\begin{align}
    \mc{A}^{h_i}_t \varphi_t &= \lim_{s\downarrow 0} \frac{\mathbb{E}_{\mathbb{P}^h}\left[ \varphi(t_s, \mb{X}_{t+s}) | \mb{X}_t = \mb{x}\right] - \varphi(t, \mb{x})}{s} \\
    & = \lim_{s\downarrow 0} \frac{\mathbb{E}_{\mathbb{P}}\left[ [\varphi(t_s, \mb{X}_{t+s}) - \varphi(t, \mb{x})] \frac{\mb{P}^{h_i}_{i}(\mb{x}, d\mb{x}_{t+s})}{\mb{P}_{i}(\mb{x}, d\mb{x}_{t+s})}| \mb{X}_t = \mb{x}\right]}{s} \\
    & = \lim_{s\downarrow 0} \frac{\mathbb{E}_{\mathbb{P}}\left[ [\varphi(t_s, \mb{X}_{t+s}) - \varphi(t, \mb{x})]\frac{h_i(t+s, \mb{X}_{t+s})}{h_i(t, \mb{x})}| \mb{X}_t = \mb{x}\right]}{s} \\
    & = \lim_{s\downarrow 0} \frac{\mathbb{E}_{\mathbb{P}}\left[ [\varphi(t_s, \mb{X}_{t+s}) - \varphi(t, \mb{x})] \left[\frac{h_i(t+s, \mb{X}_{t+s}) - h_i(t, \mb{x})}{h_i(t, \mb{x})} + 1\right]| \mb{X}_t = \mb{x}\right]}{s} \\
    & \stackrel{(i)}{=} \mc{A}_t \varphi_t + \lim_{s\downarrow 0} \frac{\mathbb{E}_{\mathbb{P}}\left[ [\varphi(t_s, \mb{X}_{t+s}) - \varphi(t, \mb{x})] \left[h_i(t+s, \mb{X}_{t+s}) - h_i(t, \mb{x})\right]| \mb{X}_t = \mb{x}\right]}{s h_i(t, \mb{x})},\label{eq:h-transformed_appx}
\end{align}
where $(i)$ follows from the definition of the infinitesimal generator.
Now, we shall compute the second term of the RHS of the equation~(\ref{eq:h-transformed_appx}) adapted from~\citep{leonard2011stochastic}. By employing basic stochastic calculus, for a stochastic process $\varphi_t = \varphi(t, \mb{X}_t)$ and $h_{i, t} = h_i(t, \mb{X}_t)$,
\begin{align}
    & \varphi_{t+s} h_{i, t+s} = \varphi_{0} h_{i,0} + \int_0^{t+s} \varphi_u dh_{i,u} + \int_0^{t+s} h_{i,u} d\varphi_u + [\varphi, h_{i}]_{t+s} \label{eq:t+s_appx}\\
    & \varphi_{t} h_{i,t} = \varphi_{0} h_{i,0} + \int_0^{t} \varphi_u dh_{i,u} + \int_0^{t} h_{i,u} d\varphi_u + [\varphi, h_{i}]_{t}\label{eq:t_appx},
\end{align}
where $[\varphi, h_i]_t = \int_0^t d\varphi_t dh_{i,t}$ is quadratic variation of $\varphi$ and $h_i$.
By subtracting equation~(\ref{eq:t_appx}) from equation~(\ref{eq:t+s_appx}),  
\begin{equation}
    \varphi_{t+s}h_{i,t+s} - \varphi_{t}h_{i,t} =  \int_t^{t+s} \varphi_u dh_{i,u} + \int_t^{t+s} h_{i,u} d\varphi_u + [\varphi, h_{i}]_{t+s} - [\varphi, h_{i}]_{t}
\end{equation}
Applying integration by parts leads to the following equation
\begin{align}\label{eq:stochastic_intergration bh parts}
    (\varphi_{t+s} - \varphi_t) (h_{i,t+s} - h_{i,t}) &=  \varphi_{t+s}h_{i,t+s} - \varphi_{t}h_{i,t} - \varphi_t (h_{i,t+s} - h_{i,t}) - h_{i,t} (\varphi_{t+s} - \varphi_t) \\
    & = \int_t^{t+s} (\varphi_u - \varphi_t) dh_{i,u} + \int_t^{t+s} (h_{i,u} - h_{i,t}) d\varphi_u + [\varphi, h_i]_{t+s} - [\varphi, h_i]_{t} \nonumber,
\end{align}
where $ \varphi_t (h_{i,t+s} - h_{i,t}) = \int_t^{t+s} \varphi_t dh_{i,u}$. Moreover, by applying Itô's formula, we get
\begin{equation}
    d\varphi_t = \mc{A}_t \varphi_t dt + (\nabla_{\mb{x}} \varphi)^\top d\mb{W}_t, \quad dh_{i,t} = \mc{A}_t h_{i,t} dt + (\nabla_{\mb{x}} h_{i,t})^\top d\mb{W}_t.
\end{equation}
Therefore, since $\mb{W}_t$ is $\mathbb{P}$-martingale,
\begin{align}
    & \mathbb{E}_{\mathbb{P}}\left[ (\varphi_{t+s} - \varphi_t) (h_{i,t+s} - h_{i,t}) | \mb{X}_t =  \mb{x}\right] \\
    &= \mathbb{E}_{\mathbb{P}}\left[\int_s^{t+s} (\varphi_u - \varphi_t) \mc{A}_u h_{i,u} du + \int_s^{t+s} (h_{i,u} - h_{i,t}) \mc{A}_u \varphi_u du + [\varphi, h_i]_{t+s} - [\varphi, h_i]_{t} | \mb{X}_t =  \mb{x}\right]\\
    & = \underbrace{\mathbb{E}^{t, \mb{x}}_{\mathbb{P}}\left[ \int_t^{t+s} (\varphi_u - \varphi_t) \mathcal{A}_u h_{i,u} du \right]}_{\textbf{(A)}} + \underbrace{\mathbb{E}^{t, \mb{x}}_{\mathbb{P}}\left[ \int_t^{t+s} (h_{i,u} - h_{i,t}) \mathcal{A}_u \varphi_u du \right]}_{\textbf{(B)}} + \underbrace{\mathbb{E}^{t, \mb{x}}_{\mathbb{P}}\left[ [\varphi, h_i]_{t+s} - [\varphi, h_i]_{t} \right]}_{\textbf{(C)}} \notag.
\end{align}

For a first term $\textbf{(A)}$, the Hölder's inequality with $1/p + 1/q=1$ and $p, q \geq 1$ yields,
\begin{align}
    \mathbb{E}^{t, \mb{x}}_{\mathbb{P}}\left[ \int_t^{t+s} (\varphi_u - \varphi_t) \mathcal{A}_u h_{i,u} du \right] &\leq \left(\mathbb{E}^{t, \mb{x}}_{\mathbb{P}}\left[\int_t^{t+s} |\varphi_u - \varphi_t|^q du \right]\right)^{1/q} \left(\mathbb{E}^{t, \mb{x}}_{\mathbb{P}}\left[\int_t^{t+s} | \mathcal{A}_t h_{i,u}|^p du \right]\right)^{1/p} \\
    & = \left(\mathbb{E}^{t, \mb{x}}_{\mathbb{P}}\left[\int_t^{t+s} |\varphi_u - \varphi_t|^q du \right]\right)^{1/q} \left(\int_t^{t+s} \mathbb{E}^{t, \mb{x}}_{\mathbb{P}}\left[| \mathcal{A}_t h_{i,u}|^p \right]du \right)^{1/p}
\end{align}
Given the bounded and continuous function $\varphi$, the dominated convergence theorem yields $(\lim_{s \downarrow 0} \mathbb{E}^{t, \mb{x}}_{\mathbb{P}} \left[ \int_t^{t+s} |\varphi_u - \varphi_t|^q du \right])^{1/q} =  (\mathbb{E}^{t, \mb{x}}_{\mathbb{P}}  \left[ \lim_{s \downarrow 0} \int_t^{t+s} |\varphi_u - \varphi_t|^q du \right])^{1/q} = 0$ and since $h_i \in C^{1,2}([t_{i-1}, t_i), \mbb{R}^d)$ and boundedness of $b$ in Assumptions~\ref{sec:assumptions}, following inequality holds
\begin{equation}
    |\mathcal{A}_u h_{i,u}|^p \leq | \partial_t h_{i,t} |^p + |(\nabla_\mb{x} h_{i,t}^T) b |^p + |\frac{1}{2}\textit{Trace}\left[ \nabla_{\mb{x}\mb{x}} h_{i,t} \right]|^p < \infty,
\end{equation}
for any $u \in [t_{i-1}, t_i)$ and $\mathbb{P}$ almost surely. In other words, $\sup_{u \in [t, t+s]} \mathbb{E}^{t, \mb{x}}_{\mathbb{P}}\left[| \mathcal{A}_u h_{i,u}|^p \right] < \infty, \forall t \in[0, T] , s>0, \text{and}\; p>1$. Therefore we get $\lim_{s \downarrow 0}\mathbb{E}^{t, \mb{x}}_{\mathbb{P}}\left[ \int_t^{t+s} (\varphi_u - \varphi_t) \mathcal{A}_u h_{i,u} du \right] = 0$
and we can get the similar result for the second term $\textbf{(B)}$, $\ie \lim_{s \downarrow 0}\mathbb{E}^{t, \mb{x}}_{\mathbb{P}}\left[ \int_t^{t+s} (h_{i,u} - h_{i,t}) \mathcal{A}_u \varphi_u du \right] = 0$. 

For the last term $\textbf{(C)}$, by the  definition of the quadratic variation of $\varphi$ and $h_i$ yields:
\begin{equation}
    \mathbb{E}^{t, \mb{x}}_{\mathbb{P}}\left[[\varphi, h_i]_{t+s} - [\varphi, h_i]_{t} \right]  = \mathbb{E}^{t, \mb{x}}_{\mathbb{P}}\left[\int_t^{t+s} d\varphi_u dh_{i,u}\right]= \mathbb{E}^{t, \mb{x}}_{\mathbb{P}} \left[ \int_t^{t+s} (\nabla_{\mb{x}} \varphi_u)^\top \nabla_{\mb{x}} h_{i,u} du \right]
\end{equation}

Subsequently, by taking the limit from~(\ref{eq:h-transformed_appx}), we get the following result:
\begin{align}
    & \lim_{s\downarrow 0} \frac{\mathbb{E}_{\mathbb{P}}\left[ [\varphi(t_s, \mb{X}_{t+s}) - \varphi(t, \mb{x})] \left[h_i(t+s, \mb{X}_{t+s}) - h_i(t, \mb{x})\right]| \mb{X}_t = \mb{x}\right]}{s h_i(t, \mb{x})} \\
    & = \lim_{s\downarrow 0} \frac{\mathbb{E}_{\mathbb{P}} \left[ \int_t^{t+s} (\nabla_{\mb{x}} \varphi(u, \mb{X}_u))^\top \nabla_\mb{x} h_i(u, \mb{X}_u) du | \mb{X}_t = \mb{x}\right]}{s h_i(t, \mb{x})} \\
    & = (\nabla_{\mb{x}} \varphi(t, \mb{X}_t))^\top \nabla_{\mb{x}} \log h_i(t, \mb{X}_t),
\end{align}
Therefore, the infinitesimal generator for  $\mb{P}^{h_i}_i$ is defined by:
\begin{equation}
    \mc{A}^{h_i}_t \varphi_t = \mc{A}_t \varphi_t + (\nabla_{\mb{x}}\varphi_t)^\top \nabla_{\mb{x}} \log h_{i,t}
\end{equation}
which shows that the conditioned SDE for an interval $[t_{i-1}, t_i)$ is given by
\begin{equation}
    d\mb{X}^h_t = \left[b(t, \mb{X}_t) + \nabla_{\mb{x}}\log h_i(t, \mb{X}_t) \right]dt + d\mb{W}_t
\end{equation}
Hence, integrating the generators over the entire interval yields:
\begin{align}
    \mc{A}^h_t \varphi 
    & = \mc{A}_t \varphi_t + \sum_{i=1}^k \left[ (\nabla_{\mb{x}}\varphi_t)^\top \nabla_{\mb{x}} \log h_{i,t}\right] \mb{1}_{[t_{i-1}, t_i)}(t)  \\
    & = \mc{A}_t \varphi_t + (\nabla_{\mb{x}}\varphi_t)^\top \nabla_{\mb{x}} \log h_t.
\end{align}
It implies that the conditioned SDE for the entire interval $[0, T]$ is given by:
\begin{equation}
    d\mb{X}^h_t = \left[b(t, \mb{X}_t) + \nabla_{\mb{x}}\log h(t, \mb{X}_t) \right]dt + d\mb{W}_t.
\end{equation}

Now, assume that $\mb{X}^h_t \sim \mu^{\star}_0(\mb{x})$ where $\mu^{\star}_0(\mb{x})$ is absolutely continuous with $\mu_0(\mb{x})$. Then, the path measure induced by $\mb{X}^h_t$ can be computed as:

\begin{align}\label{eq:continuous extension_1}
    d\mathbb{P}^h(\mb{x}_{0:T}) &= d\mu^{\star}_0(\mb{x}_0) \prod_{i=1}^k \left[\prod_{j=1}^N \mb{P}_{i(j)}^{h_i}(\mb{x}_{t_{i(j-1)}}, d\mb{x}_{t_i(j)})\right]  \\
    & =d\mu^{\star}_0(\mb{x}_0) \prod_{i=1}^k \frac{h_{i}(t_i,  \mb{x}_{t_{i}})}{h_i(t_{i-1},  \mb{x}_{t_{i-1}})} \left[\prod_{j=1}^N \mb{P}_{i(j)}(\mb{x}_{t_{i(j-1)}}, d\mb{x}_{t_i(j)})\right] \\
    & = d\mu^{\star}_0(\mb{x}_0) \prod_{i=1}^k \frac{h_{i+1}(t_i,  \mb{x}_{t_{i}}) f_i(\mb{y}_{t_i}|\mb{x}_{t_i})}{h_i(t_{i-1},  \mb{x}_{t_{i-1}})} \left[\prod_{j=1}^N \mb{P}_{i(j)}(\mb{x}_{t_{i(j-1)}}, d\mb{x}_{t_i(j)})\right] \\
    & \stackrel{N \uparrow \infty}{=}  \frac{d\mu^{\star}_0}{d\mu_0}(\mb{x}_0)\frac{h_{k+1}(t_k, \mb{x}_{t_k})}{h_1(t_0, \mb{x}_0)}\prod_{i=1}^k f_i(\mb{y}_{t_i}|\mb{x}_{t_i}) d\mathbb{P}(\mb{x}_{0:T})
\end{align}
where, for all $i \in [1:k]$, we define a increasing sequence $\{i(j)\}_{j \in [0:N]}$ with $i(0) = i-1$, $i(1) = i - 1 + \frac{1}{N}$ and $i(N) = i$. Hence, for a $d\mu_0^{\star}(\mb{x}_0) = h_1(t_0, \mb{x}_0)d\mu_0(\mb{x}_0)$ and $h_{k+1}=1$ yields 
\begin{align}
    d\mathbb{P}^h(\mb{x}_{0:T}) &= \prod_{i=1}^k f_i(\mb{y}_{t_i}|\mb{x}_{t_i}) d\mathbb{P}(\mb{x}_{0:T}) \\
    & = \frac{1}{\mb{Z}(\mc{H}_{t_k})}\prod_{i=1}^k g_i(\mb{y}_{t_i}|\mb{x}_{t_i}) d\mathbb{P}(\mb{x}_{0:T}) \\
    & = d\mathbb{P}^{\star}(\mb{x}_{0:T}). \label{eq:continuous extension_2}
\end{align}
It concludes the proof.
\end{proof}

\subsection{Proof of Theorem~\ref{theorem:dynamic programming principle}}

\begin{reptheorem}{theorem:dynamic programming principle}[Dynamic Programming Principle] Let us consider a sequence of left continuous functions $\{\mc{V}_i\}_{i\in[1:k+1]}$, where each $\mc{V}_i \in C^{1,2}([t_{i-1}, t_i) \times \mbb{R}^d)$
\begin{equation}
    \mc{V}_i(t, \mb{x}_t) := \min_{\alpha \in \mbb{A}} \mathbb{E}_{\mathbb{P}^{\alpha}} \left[\int_{t_{i-1}}^{t_i}  \frac{1}{2}\norm{\alpha_s}^2 ds - \log f_{i}(\mb{y}_{t_i} | \mb{X}^{\alpha}_{t_i}) + \mc{V}_{i+1}(t_{i}, \mb{X}^{\alpha}_{t_i}) | \mb{X}_{t} = \mb{x}_t \right],
\end{equation}
for all $i \in [1:k]$ and $\mc{V}_{k+1}=0$. Then, for any $0 \leq t \leq u \leq T$, the value function $\mc{V}$ for the cost function in~(\ref{eq:cost function}) satisfying the recursion defined as follows:
\begin{equation}
    \mc{V}(t, \mb{x}_t) = \min_{\alpha \in \mbb{A}}\mbb{E}_{\mathbb{P}^{\alpha}}\left[ \int_t^{t_{I(u)}}\frac{1}{2}\norm{\alpha_s}^2 ds - \hspace{-6mm} \sum_{i: \{t \leq t_i \leq t_{I(u)}\}}  \hspace{-6mm} \log f_i(\mb{y}_{t_i} | \mb{X}^{\alpha}_{t_i}) + \mc{V}_{I(u)+1}(t_{I(u)}, \mb{X}^{\alpha}_{t_{I(u)}})   | \mb{X}^{\alpha}_t = \mb{x}_t\right],
\end{equation}
with the indexing function $I(u)= \max \{i \in [1:k]| t_i \leq u\}.$
\end{reptheorem}

\begin{proof} Following the approach used in the proof of the standard dynamic programming principle with the flow property induced by Markov control~\citep{van2007stochastic}, we can apply similar methods to our cost function. We start the proof by establishing the recursion of $\mc{J}$. Let us define the sequence of left continuous cost functions $\{\mc{J}_i\}_{i\in[1:k+1]}$:
\begin{equation}
    \mc{J}_i(t, \mb{x}_t, \alpha) := \mathbb{E}^{t, \mb{x}_t}_{\mathbb{P}^{\alpha}} \left[\int_{t_{i-1}}^{t_i}  \frac{1}{2}\norm{\alpha_s}^2 ds - \log f_{i}(\mb{y}_{t_i} | \mb{X}^{\alpha}_{t_i}) + \mc{J}_{i+1}(t_{i}, \mb{X}^{\alpha}_{t_i}, \alpha) \right],
\end{equation}
where we denote $\mathbb{E}^{\mb{t}, \mb{x}}_{\mathbb{P}}[\cdot] = \mathbb{E}_{\mathbb{P}}[\cdot | \mb{X}_t=\mb{x}]$, for all $i \in [1:k]$ and $\mc{J}_{k+1}=0$. Since $\mathbb{P}^{\alpha}$ is Markov process, it satisfying following recursion, for any $0 \leq t \leq u \leq T$,
\begin{equation}
    \mc{J}(t, \mb{x}_t, \alpha) = \mathbb{E}^{t, \mb{x}_t}_{\mathbb{P}^{\alpha}}\left[ \int_t^{t_{I(u)}}\frac{1}{2}\norm{\alpha_s}^2 ds - \hspace{-5mm} \sum_{i: \{t \leq t_i \leq t_{I(u)}\}}  \hspace{-5mm} \log f_i(\mb{y}_{t_i} | \mb{X}^{\alpha}_{t_i}) + \mc{J}_{I(u)+1}(t_{I(u)}, \mb{X}^{\alpha}_{t_{I(u)}}, \alpha)\right],
\end{equation}
with the indexing function $I(u)= \max \{i \in [1:k]| t_i \leq u\}$.

For any $\epsilon > 0$, there exists a control $\alpha' \in \mbb{A}[t, T]$ such that
\begin{align}
    & \mc{V}(t, \mb{x}) + \epsilon  \geq \mc{J}(t, \mb{x}, \alpha')\\
    &  = \mathbb{E}^{t, \mb{x}_t}_{\mathbb{P}^{\alpha'}}\left[ \int_t^{t_{I(u)}}\frac{1}{2}\norm{\alpha'_s}^2 ds - \hspace{-5mm} \sum_{i: \{t \leq t_i \leq t_{I(u)}\}}  \hspace{-5mm} \log f_i(\mb{y}_{t_i} | \mb{X}^{\alpha'}_{t_i}) + \mc{J}_{I(u)+1}(t_{I(u)}, \mb{X}^{\alpha'}_{t_{I(u)}}, \alpha') \right] \\
    & \geq \mathbb{E}^{t, \mb{x}_t}_{\mathbb{P}^{\alpha'}}\left[ \int_t^{t_{I(u)}}\frac{1}{2}\norm{\alpha'_s}^2 ds - \hspace{-5mm} \sum_{i: \{t \leq t_i \leq t_{I(u)}\}}  \hspace{-5mm} \log f_i(\mb{y}_{t_i} | \mb{X}^{\alpha'}_{t_i}) + \mc{V}_{I(u)+1}(t_{I(u)}, \mb{X}^{\alpha'}_{t_{I(u)}}) \right] \\
    & \geq \min_{\alpha' \in \mbb{A}[t, T]} \mathbb{E}^{t, \mb{x}_t}_{\mathbb{P}^{\alpha'}}\left[ \int_t^{t_{I(u)}}\frac{1}{2}\norm{\alpha'_s}^2 ds - \hspace{-5mm} \sum_{i: \{t \leq t_i \leq t_{I(u)}\}}  \hspace{-5mm} \log f_i(\mb{y}_{t_i} | \mb{X}^{\alpha'}_{t_i}) + \mc{V}_{I(u)+1}(t_{I(u)}, \mb{X}^{\alpha'}_{t_{I(u)}}) \right]
\end{align}
Since $\epsilon$ was arbitrary, limiting $\epsilon \to 0$ we get:
\begin{equation}\label{eq:DPP ineq 1}
    \mc{V}(t, \mb{x}) \geq \min_{\alpha' \in \mbb{A}[t, T]} \mathbb{E}^{t, \mb{x}_t}_{\mathbb{P}^{\alpha'}}\left[ \int_t^{t_{I(u)}}\frac{1}{2}\norm{\alpha'_s}^2 ds - \hspace{-5mm} \sum_{i: \{t \leq t_i \leq t_{I(u)}\}}  \hspace{-5mm} \log f_i(\mb{y}_{t_i} | \mb{X}^{\alpha'}_{t_i}) + \mc{V}_{I(u)+1}(t_{I(u)}, \mb{X}^{\alpha'}_{t_{I(u)}}) \right].
\end{equation}
For the reverse direction, consider the control $\tilde{\alpha} \in \mbb{A}[t, T]$ obtained from integrating:
\begin{equation}
    \tilde{\alpha}_s := \begin{cases}
    \alpha^1_s, \quad s \in [t, t_{I(u)}) \\
    \alpha^2_s \quad s \in [t_{I(u)}, T].
    \end{cases}
\end{equation}
Then, by following the definition of the value function
\begin{align}
    & \mc{J}(t, \mb{x}, \tilde{\alpha}) \geq \min_{\alpha^2 \in \mbb{A}[t_{I(u)}, T]} \mc{J}(t, \mb{x}, \tilde{\alpha}) \\
    & = \mathbb{E}^{t, \mb{x}_t}_{\mathbb{P}^{\alpha^1}}\left[ \int_t^{t_{I(u)}}\frac{1}{2}\norm{\alpha^1_s}^2 ds - \hspace{-5mm} \sum_{i: \{t \leq t_i \leq t_{I(u)}\}}  \hspace{-5mm} \log f_i(\mb{y}_{t_i} | \mb{X}^{\alpha^1}_{t_i}) + \mc{V}_{I(u)+1}(t_{I(u)}, \mb{X}^{\alpha^1}_{t_{I(u)}}) \right] \\
    & \geq \min_{\alpha^1 \in \mbb{A}[t, t_{I(u)})}\mathbb{E}^{t, \mb{x}_t}_{\mathbb{P}^{\alpha^1}}\left[ \int_t^{t_{I(u)}}\frac{1}{2}\norm{\alpha^1_s}^2 ds - \hspace{-5mm} \sum_{i: \{t \leq t_i \leq t_{I(u)}\}}  \hspace{-5mm} \log f_i(\mb{y}_{t_i} | \mb{X}^{\alpha^1}_{t_i}) + \mc{V}_{I(u)+1}(t_{I(u)}, \mb{X}^{\alpha^1}_{t_{I(u)}}) \right] \\ \label{eq:DPP ineq 2_1}
    & = \min_{\tilde{\alpha} \in \mbb{A}[t, T]} \mathbb{E}^{t, \mb{x}_t}_{\mathbb{P}^{\tilde{\alpha}}}\left[ \int_t^{t_{I(u)}}\frac{1}{2}\norm{\tilde{\alpha}_s}^2 ds - \hspace{-5mm} \sum_{i: \{t \leq t_i \leq t_{I(u)}\}}  \hspace{-5mm} \log f_i(\mb{y}_{t_i} | \mb{X}^{\tilde{\alpha}}_{t_i}) + \mc{V}_{I(u)+1}(t_{I(u)}, \mb{X}^{\tilde{\alpha}}_{t_{I(u)}}) \right] \\ 
    & \geq \mc{V}(t, \mb{x}). \label{eq:DPP ineq 2_2}
\end{align}
Combining both inequalities in~(\ref{eq:DPP ineq 1}, \ref{eq:DPP ineq 2_1}-\ref{eq:DPP ineq 2_2}), we arrive at the desired result:
\begin{equation}
    \mc{V}(t, \mb{x}) = \min_{\alpha \in \mbb{A}}\mbb{E}_{\mathbb{P}^{\alpha}}\left[ \int_t^{t_{I(u)}}\frac{1}{2}\norm{\alpha_s}^2 ds - \hspace{-6mm} \sum_{i: \{t \leq t_i \leq t_{I(u)}\}}  \hspace{-6mm} \log f_i(\mb{y}_{t_i} | \mb{X}^{\alpha}_{t_i}) + \mc{V}_{I(u)+1}(t_{I(u)}, \mb{X}^{\alpha}_{t_{I(u)}})   | \mb{X}^{\alpha}_t = \mb{x}_t\right].
\end{equation}
This concludes the proof.
\end{proof}

\subsection{Proof of Theorem~\ref{theorem:verification theorem}}

\begin{reptheorem}{theorem:verification theorem}[Verification Theorem] Suppose there exist a sequence of left continuous functions $\mc{V}_i(t, \mb{x}) \in C^{1,2}([t_{i-1}, t_i), \mbb{R}^d)$, for all $i \in [1:k]$,  satisfying the following Hamiltonian-Jacobi-Bellman (HJB) equation:
    \begin{align}
        & \partial_t \mc{V}_{i, t} + \mc{A}_t\mc{V}_{i, t} + \min_{\alpha \in \mbb{A}}\left[ (\nabla_{\mb{x}}\mc{V}_{i, t})^\top \alpha_{i, t} + \frac{1}{2}\norm{\alpha_{i, t}}^2 \right] = 0, \quad t_{i-1} \leq t < t_{i}\\ 
        & \mc{V}_i(t_i, \mb{x}) = -\log f_i(\mb{y}_{t_i} | \mb{x}) + \mc{V}_{i+1}(t_i, \mb{x}), \quad t = t_i, \quad \forall i \in [1:k],
    \end{align}
where a minimum is attained by $\alpha^{\star}_i(t, \mb{x}) = \nabla_{\mb{x}}\mc{V}_i(t, \mb{x})$. Now, define a function $\alpha : [0, t_k] \times \mbb{R}^d \to \mbb{R}^d$ by integrating the optimal controls $\{\alpha_i\}_{i \in \{1, \cdots, k\}}$,
\begin{equation}
    \alpha^{\star}(t, \mb{x}) := \sum_{i=1}^k \alpha^{\star}_i(t, \mb{x})\mb{1}_{[t_{i-1}, t_i)}(t)
\end{equation}
Then, $\mc{J}(t, \mb{x}_t, \alpha^{\star}) \leq \mc{J}(t, \mb{x}_t, \alpha)$ holds for any $(t, \mb{x}_t) \in [0, T] \times \mbb{R}^d$ and $\alpha \in \mbb{A}$. In other words, $\alpha^{\star}$ is optimal control for $\mc{V}$ in~(\ref{eq:value function}). 
\end{reptheorem}

\begin{proof} Without loss of generality, consider  $t \in [t_{i-1}, t_i)$. By applying the Itô's formula to the value function $\mc{V}$ and taking expectation with respect to $\mathbb{P}^{\alpha}$, we obtain
\begin{equation}\label{eq:VT eq 1}
    \mathbb{E}^{t, \mb{x}_t}_{\mathbb{P}^{\alpha}} \left[\mc{V}_i(t_i, \mb{X}^{\alpha}_{t_i})\right] = \mc{V}_i(t, \mb{x}) + \mathbb{E}^{t, \mb{x}_t}_{\mathbb{P}^{\alpha}}\left[\int_t^{t_i} \left(\partial_t \mc{V}_{i, s} + \mc{A}_t \mc{V}_{i, s} + (\nabla_{\mb{x}} \mc{V}_{i, s})^\top  \alpha_{i, s} \right) ds \right],
\end{equation}
where we denote $\mathbb{E}^{t, \mb{x}}_{\mathbb{P}}[\cdot] = \mathbb{E}_{\mathbb{P}}[\cdot | \mb{X}_t=\mb{x}]$. By adding the Lagrangian term $\mathbb{E}^{t, \mb{x}_t}_{\mathbb{P}^{\alpha}}\left[\int_t^{t_i} \frac{1}{2}\norm{\alpha_{i, s}}^2 ds \right]$ to both sides of (\ref{eq:VT eq 1}), we get for the LHS of (\ref{eq:VT eq 1})
\begin{align}    
    \text{LHS} &= \mathbb{E}^{t, \mb{x}_t}_{\mathbb{P}^{\alpha}} \left[\mc{V}_i(t_i, \mb{X}^{\alpha}_{t_i})\right]  + \mathbb{E}^{t, \mb{x}_t}_{\mathbb{P}^{\alpha}}\left[\int_t^{t_i} \frac{1}{2}\norm{\alpha_{i, s}}^2 ds \right] \\
    & =  \mathbb{E}^{t, \mb{x}_t}_{\mathbb{P}^{\alpha}} \left[ \mc{V}_i(t_i, \mb{X}^{\alpha}_{t_i}) + \int_t^{t_i} \frac{1}{2}\norm{\alpha_{i, s}}^2 ds \right] \\
    & \stackrel{(i)}{=} \mathbb{E}^{t, \mb{x}_t}_{\mathbb{P}^{\alpha}} \left[  \int_t^{t_i} \frac{1}{2}\norm{\alpha_{i, s}}^2 ds  -\log f_i(\mb{y}_{t_i} | \mb{X}^{\alpha}_{t_i}) + \mc{V}_{i+1}(t_i, \mb{X}^{\alpha}_{t_i}) \right] \\
    & = \mc{J}_i(t, \mb{x}, \alpha),
\end{align}
where $(i)$ follows from the definition of HJB equation in~(\ref{eq:HJB PDE_2}). Now for the RHS of (\ref{eq:VT eq 1}), we have:
\begin{align}
     & \text{RHS} = \mc{V}_i(t, \mb{x}) + \mathbb{E}^{t, \mb{x}_t}_{\mathbb{P}^{\alpha}}\left[\int_t^{t_i} \left(\partial_t \mc{V}_{i, s} + \mc{A}_t \mc{V}_{i, s} + (\nabla_{\mb{x}} \mc{V}_{i, s})^\top  \alpha_{i, s} \right) ds \right] + \mathbb{E}^{t, \mb{x}_t}_{\mathbb{P}^{\alpha}}\left[\int_t^{t_i} \frac{1}{2}\norm{\alpha_{i, s}}^2 ds \right] \\
    &  = \mc{V}_i(t, \mb{x}) + \mathbb{E}^{t, \mb{x}_t}_{\mathbb{P}^{\alpha}}\left[\int_t^{t_i} \left(\partial_t \mc{V}_{i, s} + \mc{A}_t \mc{V}_{i, s} + \left[ (\nabla_{\mb{x}} \mc{V}_{i, s})^\top  \alpha_{i, s} + \frac{1}{2}\norm{\alpha_{i, s}}^2\right] \right) ds \right] \\
    & \stackrel{(i)}{=} \mc{V}_i(t, \mb{x}) + \mathbb{E}^{t, \mb{x}_t}_{\mathbb{P}^{\alpha}}\left[\int_t^{t_i} \left(\left[ (\nabla_{\mb{x}} \mc{V}_{i, s})^\top  \alpha_{i, s} + \frac{1}{2}\norm{\alpha_{i, s}}^2\right] - \min_{\alpha \in \mbb{A}}\left[ (\nabla_{\mb{x}}\mc{V}_{i, s})^\top \alpha_{i, s} + \frac{1}{2}\norm{\alpha_{i, s}}^2 \right] \right) ds \right],
\end{align}
where $(i)$ follows from the definition of HJB equation in~(\ref{eq:HJB PDE_1}).
Therefore, we get the following result:
\begin{align}
    & \mc{J}_i(t, \mb{x}, \alpha) = \mc{V}_i(t, \mb{x}) \nonumber \\ 
     & \hspace{9mm} + \mathbb{E}^{t, \mb{x}_t}_{\mathbb{P}^{\alpha}}\left[\int_t^{t_i} \left(\left[ (\nabla_{\mb{x}} \mc{V}_{i, s})^\top  \alpha_{i, s} + \frac{1}{2}\norm{\alpha_{i, s}}^2\right] - \min_{\alpha \in \mbb{A}}\left[ (\nabla_{\mb{x}}\mc{V}_{i, s})^\top \alpha_{i, s} + \frac{1}{2}\norm{\alpha_{i, s}}^2 \right] \right) ds \right].
\end{align}
Due to the fact that
\[\int_t^{t_i} \left(\left[ (\nabla_{\mb{x}} \mc{V}_{i, s})^\top  \alpha_{i, s} + \frac{1}{2}\norm{\alpha_{i, s}}^2\right] - \min_{\alpha \in \mbb{A}}\left[ (\nabla_{\mb{x}}\mc{V}_{i, s})^\top \alpha_{i, s} + \frac{1}{2}\norm{\alpha_{i, s}}^2 \right] \right) ds \geq 0,\]
for all $t \in [t_{i-1}, t_i)$ and $\mathbb{P}^{\alpha}$ almost $\mb{x}$, we conclude that $\mc{J}_i(t, \mb{x}, \alpha) \geq \mc{V}_i(t, \mb{x})$, where the equality holds for $\alpha^{\star}_{i, t} = \min_{\alpha \in \mbb{A}}\left[ (\nabla_{\mb{x}}\mc{V}_{i, t})^\top \alpha_{i, t} + \frac{1}{2}\norm{\alpha_{i, t}}^2 \right] = -\nabla_{\mb{x}} \mc{V}_{i, t}$. Additionaly, it implies that $\mc{J}_i(t, \mb{x}, \alpha^{\star}) = \mc{V}_i(t, \mb{x})$. Subsequently, for any $t \in [t_{i-1}, t_i)$, the recursion in (\ref{eq:value function}) yields:
\begin{align}
    \mc{V}(t, \mb{x}_t) &= \min_{\alpha \in \mbb{A}}\mbb{E}^{t, \mb{x}_t}_{\mathbb{P}^{\alpha}}\left[ \int_t^{t_{I(u)}}\frac{1}{2}\norm{\alpha_s}^2 ds - \hspace{-6mm} \sum_{i: \{t \leq t_i \leq t_{I(u)}\}}  \hspace{-6mm} \log f_i(\mb{y}_{t_i} | \mb{X}^{\alpha}_{t_i}) + \mc{V}_{I(u)+1}(t_{I(u)}, \mb{X}^{\alpha}_{t_{I(u)}})\right] \\
    & = \min_{\alpha \in \mbb{A}}\mbb{E}^{t, \mb{x}_t}_{\mathbb{P}^{\alpha}}\left[ \int_t^{t_i}\frac{1}{2}\norm{\alpha_{i, s}}^2 ds - \log f_i(\mb{y}_{t_i} | \mb{X}^{\alpha}_{t_i}) + \mc{V}_{i+1}(t_{i}, \mb{X}^{\alpha}_{t_{i}}) \right] \\
    & =  \mc{V}_i(t, \mb{x}_t).
\end{align}
This implies that the optimal control for the value function $\mc{V}$ in~(\ref{eq:value function}) over the interval $t \in [t_{i-1}, t_i)$ is $\alpha^{\star}_i$.
Finally, $\mc{V}$ in~(\ref{eq:value function}) can be represented as the integrated form :
\begin{equation}\label{eq:integrated value function}
    \mc{V}(t, \mb{x}_t) = \sum_{i=1}^k \mc{V}_i(t, \mb{x}_t)\mb{1}_{[t_{i-1}, t_i)}.
\end{equation}
Therefore, the optimal control for $\mc{V}$ in~(\ref{eq:value function}) becomes $\alpha(t, \mb{x}) = \sum_{i=1}^k \alpha^{\star}_i(t, \mb{x})\mb{1}_{[t_{i-1}, t_i)}(t)$.
\end{proof}

\subsection{Proof of Lemma~\ref{lemma:hopf-cole transformation}}
We first restate the Feynman-Kac formula~\citep{bernt, baldi2017stochastic}, which gives a probabilistic representation of the solution to certain PDEs using expectations of stochastic processes. It relies on the fact that the conditional expectation is martingale process.
\begin{lemma}[The Feynman-Kac formula]\label{lemma:feynman-kac formula}
Let us define $f \in C^2(\mbb{R}^d)$ and $g \in C(\mathbb{R}^d)$. Then, a function $h(t, \mb{x}_t) = \mathbb{E}_{\mathbb{P}}\left[e^{-\int_t^T f(s, \mb{X}_s)ds} g(\mb{X}_T) | \mb{X}_t = \mb{x}_t\right]$ is a solution of the following linear PDE:
    \begin{align}
        & \partial_t h_t + \mc{A}_t h_t -f h_t= 0, \quad 0 \leq t < T, \\
        & h(t, \mb{x}) = g(\mb{X}_T), \quad t = T.
    \end{align}
\end{lemma}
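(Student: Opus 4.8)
The plan is to establish the two assertions by different routes: the terminal condition follows by direct substitution, while the PDE is obtained by recognizing an exponentially discounted version of $h$ along the diffusion as a genuine $\mathbb{P}$-martingale and reading off the vanishing of its drift through It\^o's formula. The guiding principle is that the Markov property turns $h(t, \mb{X}_t)$ (suitably discounted) into a conditional expectation of a single fixed random variable, hence a martingale, after which smoothness plus It\^o pins down the equation it must satisfy.

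First I would dispatch the terminal condition. Setting $t = T$ in the definition gives $h(T, \mb{x}) = \mathbb{E}^{T, \mb{x}}_{\mathbb{P}}\big[e^{-\int_T^T f(s, \mb{X}_s)\,ds}\, g(\mb{X}_T)\big] = \mathbb{E}^{T, \mb{x}}_{\mathbb{P}}[g(\mb{X}_T)] = g(\mb{x})$, since the integral over $[T,T]$ vanishes and $\mb{X}_T = \mb{x}$ under the conditioning $\mb{X}_T = \mb{x}$.

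For the PDE, I would introduce the discount factor $D_t := \exp\!\big(-\int_0^t f(s, \mb{X}_s)\,ds\big)$ and set $M_t := D_t\, h(t, \mb{X}_t)$. Using the Markov property of $\mb{X}$ and the definition of $h$, and pulling the $\mathcal{F}_t$-measurable factor $D_t$ inside, one checks that $M_t = \mathbb{E}_{\mathbb{P}}\big[e^{-\int_0^T f(s, \mb{X}_s)\,ds}\, g(\mb{X}_T) \mid \mathcal{F}_t\big]$, which is the conditional expectation of a fixed integrable random variable and hence an $\mathcal{F}_t$-martingale under $\mathbb{P}$. Assuming $h \in C^{1,2}([0,T]\times \mbb{R}^d)$, I would apply It\^o's formula (Lemma~\ref{lemma:ito_formula}) to $h(t, \mb{X}_t)$ and combine it with $dD_t = -f(t, \mb{X}_t)\, D_t\, dt$ via the product rule; because $D_t$ has finite variation the cross-variation term drops out, yielding
\[
dM_t = D_t\big[\partial_t h + \mc{A}_t h - f\, h\big](t, \mb{X}_t)\, dt + D_t\, (\nabla_{\mb{x}} h)^\top \sigma\, d\mb{W}_t.
\]
Since $M_t$ is a martingale, its finite-variation (drift) part must be identically zero $\mathbb{P}$-a.s.; as $D_t > 0$, this forces $\partial_t h + \mc{A}_t h - f\, h = 0$ for all $0 \le t < T$, which is the claimed equation.

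The main obstacle is justifying the regularity $h \in C^{1,2}$ that the It\^o step presupposes; this does not follow from $f \in C^2$ and $g \in C$ alone without extra structure on the generator $\mc{A}_t$ (e.g.\ uniform ellipticity or a H\"ormander-type hypoellipticity together with smoothness of $b$ and $\sigma$). I would either invoke standard parabolic regularity theory under the running assumptions on $b$ and $\alpha$, or sidestep the point with the verification variant: assume a classical solution $u \in C^{1,2}$ of the PDE exists, apply the same It\^o computation to $s \mapsto e^{-\int_t^s f(r,\mb{X}_r)\,dr}\, u(s, \mb{X}_s)$ on $[t,T]$ to show it is a local martingale, upgrade it to a true martingale using the $L^2$ growth and integrability bounds on the coefficients, and take conditional expectations at $s = T$ to obtain $u(t, \mb{x}) = \mathbb{E}^{t, \mb{x}}_{\mathbb{P}}\big[e^{-\int_t^T f\,ds}\, g(\mb{X}_T)\big] = h(t, \mb{x})$. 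The only genuinely technical bookkeeping in that variant is the localization (stopping the process when $|\mb{X}|$ exits a ball of radius $n$, then passing $n \to \infty$ by dominated convergence) needed to discard the stochastic-integral term and confirm the martingale property.
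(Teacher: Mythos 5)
Your proof is correct and follows essentially the same route as the paper's: represent a discounted version of $h$ along the diffusion as a conditional expectation of a fixed random variable (hence a martingale), apply It\^o's formula, and conclude the PDE from the vanishing of the drift, with the terminal condition by direct substitution. If anything, your version is the more careful rendering of the argument --- your discount factor $D_t = e^{-\int_0^t f(s,\mb{X}_s)\,ds}$ is $\mathcal{F}_t$-adapted (the paper's $e^{-\int_t^T f\,ds}$ prefactor is not, and its differential carries a sign slip), and your explicit treatment of the $C^{1,2}$ regularity assumption and the verification/localization variant addresses a gap the paper's proof leaves implicit.
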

\begin{proof}
Define the process $\mb{Y}_t = e^{-\int_t^T f(s, \mb{X}_s) ds} h(t, \mb{X}_t)$. Since $h$ is a conditional expectation with respect to $\mathbb{P}$, implies that $\mb{Y}_t$ is martingale process. By applinyg Itô formula, we have:
\begin{equation}\label{eq:Feynman Kac eq_1}
    d\mb{Y}_t = -f(t, \mb{X}_t) e^{-\int_t^T f(s, \mb{X}_s) ds} h(t, \mb{X}_t) dt + e^{-\int_t^T f(s, \mb{X}_s) ds} dh(t, \mb{X}_t).
\end{equation}
Next, we apply Itô formula to $h(t, \mb{X}_t)$:
\begin{equation}\label{eq:Feynman Kac eq_2}
    dh(t, \mb{X}_t) = \left( \frac{\partial h}{\partial t} + \mc{A}_t h_t \right) dt + \nabla_{\mb{x}} h(t, \mb{X}_t)^\top d\mb{W}_t,
\end{equation}
Thus, by substituting equation (\ref{eq:Feynman Kac eq_2}) into equation (\ref{eq:Feynman Kac eq_1}), we get
\begin{equation}
    d\mb{Y}_t = -f(t, \mb{X}_t) \mb{Y}_t dt + e^{-\int_t^T f(s, \mb{X}_s) ds} \left( \frac{\partial h}{\partial t} + \mc{A}_t h_t \right) dt + e^{-\int_t^T f(s, \mb{X}_s) ds} \nabla_{\mb{x}} h(t, \mb{X}_t)^\top  d\mb{W}_t.
\end{equation}
For $\mb{Y}_t$ to be a martingale process, it will have zero drift. Therefore, it implies that
\begin{equation}
    \frac{\partial h}{\partial t}(t, \mb{X}_t) + \mc{A}_t h(t, \mb{X}_t) -f(t, \mb{X}_t) h(t, \mb{X}_t)= 0,
\end{equation}
where $h(T, \mb{X}_T) = g(\mb{X}_T)$ by definition. It concludes the proof.
\end{proof}

Now let us provide the proof of the Lemma~\ref{lemma:hopf-cole transformation}.

\begin{replemma}{lemma:hopf-cole transformation}[Hopf-Cole Transformation] The $h$ function satisfying the following linear PDE:
    \begin{align}\label{eq:h PDE_1_appx}
        & \partial_t h_{i, t} + \mc{A}_t h_{i,t} = 0, \quad t_{i-1} \leq t < t_{i}\\
        & h_i(t_i, \mb{x}) = f_i(\mb{y}_{t_i} | \mb{x})h_{i+1}(t_i, \mb{x}), \quad t = t_i, \quad \forall i \in [1:k].\label{eq:h PDE_2_appx}
    \end{align}
Moreover, for a logarithm transformation $\mc{V} = -\log h$, $\mc{V}$ satisfying the HJB equation in~(\ref{eq:HJB PDE_1}-\ref{eq:HJB PDE_2}). 
\end{replemma}

\begin{proof} The linaer PDE presented in~(\ref{eq:h PDE_1_appx}-\ref{eq:h PDE_2_appx}) can be directly derived from the function $h_i(t, \mb{x}_t) = \mathbb{E}_{\mathbb{P}}\left[ \prod_{j \geq i-1}^k f_{j}(\mb{X}_{t_j})  | \mb{X}_t = \mb{x}_t\right]$. This is done by applying the Feynman-Kac formula in Lemma~\ref{lemma:feynman-kac formula} with $f := 0$ and $g = \prod_{j = i-1}^k f_{j}(\mb{X}_{t_j})$. Now, let us consider the function $\mc{V}_i(t, \mb{x}) = -\log h_i(t, \mb{x})$ (or $h_i(t, \mb{x}) = e^{- \mc{V}_i(t, \mb{x})}$) for all $i \in [1:k]$ and compute 
\begin{equation}
    \partial_t h_{i, t} = -h_{i, t} \partial_t \mc{V}_{i, t}, \;\; \nabla_{\mb{x}} h_{i, t} = -h_{i, t} \nabla_{\mb{x}} \mc{V}_{i, t}, \;\;  \nabla_{\mb{xx}} h_{i, t} = h_{i, t} (\norm{\nabla_{\mb{x}} \mc{V}_{i, t}}^2 - \nabla_{\mb{xx}} \mc{V}_{i, t}).
\end{equation}
Then, it is straightforward to compute:
\begin{align}
    h_{i, t} \partial_t \mc{V}_{i, t} & =  -\partial_t h_{i, t} \stackrel{(i)}{=} \mc{A}_t h_{i, t} \\
    & = (\nabla_{\mb{x}} h_{i, t})^{\top} b_t + \frac{1}{2}\textit{Trace}\left[\nabla_{\mb{xx}} h_{i, t}\right] \\
    & = (- h_{i, t}\nabla_{\mb{x}} \mc{V}_{i, t})^{\top} b_t + \frac{1}{2}\textit{Trace}\left[h_{i, t} (\norm{\nabla_{\mb{x}} \mc{V}_{i, t}}^2 - h_{i, t} \nabla_{\mb{xx}} \mc{V}_{i, t})\right] \\
    & = (- h_{i, t}\nabla_{\mb{x}} \mc{V}_{i, t})^{\top} b_t + \frac{1}{2}\textit{Trace}\left[h_{i, t} \norm{\nabla_{\mb{x}} \mc{V}_{i, t}}^2\right] - \frac{1}{2} \textit{Trace}\left[h_{i, t} \nabla_{\mb{xx}} \mc{V}_{i, t} \right]\label{eq:hopf-cole eq_1},
\end{align}
where $(i)$ follows from (\ref{eq:h PDE_1_appx}). Now, we can simplify (\ref{eq:hopf-cole eq_1}) by dividing both sides with $h >0$:
\begin{align}
    \partial_t \mc{V}_{i, t} &= (- \nabla_{\mb{x}} \mc{V}_{i, t})^{\top} b_t + \frac{1}{2}\textit{Trace}\left[\norm{\nabla_{\mb{x}} \mc{V}_{i, t}}^2\right] - \frac{1}{2} \textit{Trace}\left[ \nabla_{\mb{xx}} \mc{V}_{i, t} \right] \\
    & = -\mc{A}_t \mc{V}_{i, t} + \frac{1}{2} \norm{\nabla_{\mb{x}} \mc{V}_{i, t}}^2
\end{align}
Therefore, combining the above results, we have
\begin{equation}
     \partial_t \mc{V}_{i, t} + \mc{A}_t \mc{V}_{i, t} - \frac{1}{2} \norm{\nabla_{\mb{x}} \mc{V}_{i, t}}^2=0, \quad \mc{V}_{i}(t_i, \mb{x}) = -\log f_i(\mb{y}_{t_i}|\mb{x}) + \mc{V}_{i+1}(t_i, \mb{x}).
\end{equation}
Since $\min_{\alpha \in \mbb{A}}\left[ (\nabla_{\mb{x}}\mc{V}_{i, t})^\top \alpha_{i, t} + \frac{1}{2}\norm{\alpha_{i, t}}^2 \right] = - \frac{1}{2} \norm{\nabla_{\mb{x}} \mc{V}_{i, t}}^2$,  this concludes the proof.
\end{proof}

\subsection{Proof of Corollary~\ref{corollary:optimal control}}

\begin{repcorollary}{corollary:optimal control}[Optimal Control]
For optimal control $\alpha^{\star}$ induced by the cost function~(\ref{eq:cost function}) with dynamics (\ref{eq:controlled dynamics}), it satisfies $\alpha^{\star} = \nabla_{\mb{x}} \log h$. In other words, we can simulate the conditional SDEs in (\ref{eq:conditioned SDE}) by simulating  the controlled SDE (\ref{eq:controlled dynamics}) with optimal control $\alpha^{\star}$.
\end{repcorollary}

\begin{proof} 
Lemma~\ref{lemma:hopf-cole transformation} implies that we can obtain the relation $- \mathcal{V}_i(t, \mb{x}) =  \log h_i(t, \mb{x})$. Moreover, by following the definition of the optimal control $\alpha^{\star}$ in~(\ref{eq:optimal control}) and the value function $\mc{V}$ in~(\ref{eq:integrated value function}), it suggest that $\alpha^{\star}(t, \mb{x}) = - \nabla_{\mb{x}} \mathcal{V}(t, \mb{x})$ for all $t \in [0, T]$. Finally, combining the definition of the $h$ function in~(\ref{eq:h function}) and the results from Lemma~\ref{lemma:hopf-cole transformation}, we can conclude that $\alpha^{\star}(t, \mb{x}) = - \nabla_{\mb{x}} \mathcal{V}(t, \mb{x}) = \nabla_{\mb{x}} \log h(t, \mb{x})$ for all $t \in [0, T]$.
\end{proof}

\subsection{Proof of Theorem~\ref{theorem:tight variational bound}}
The Donsker–Varadhan variational principle provides a variational formula for the large deviations of functionals of Brownian motion, often related to free-energy minimization problems~\citep{boue1998variational}. Moreover, through Girsanov's theorem, this principle extends to a wide range of Markov processes, including Itô diffusion processes~\citep{hartmann2017variational, tzen2019neural}.

\begin{lemma}[Donsker–Varadhan Variational Principle]\label{lemma:Donsker–Varadhan Variational Principle} For a bounded and measurable functions $\mc{W} : C([0, T], \mbb{R}^d) \to \mbb{R}$, following relation holds:
    \begin{equation}
        -\log \mbb{E}_{\mb{X} \sim \mbb{P}}\left[e^{-\mc{W}(\mb{X}_{0:T})}\right] = \min_{\mbb{Q} \ll \mbb{P}} \left[\mbb{E}_{\mb{Y} \sim \mbb{Q}}\left[\mc{W}(\mb{Y}_{0:T}) \right] + D_{\text{KL}}(\mbb{Q} | \mbb{P})\right]
    \end{equation}
\end{lemma}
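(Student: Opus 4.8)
The plan is to prove the identity via the standard \emph{Gibbs change of measure} argument, exhibiting an explicit minimizer. Since $\mc{W}$ is bounded, $e^{-\mc{W}}$ is bounded above and below by positive constants, so the partition function $\mb{Z} := \mbb{E}_{\mbb{P}}[e^{-\mc{W}(\mb{X}_{0:T})}]$ lies in $(0,\infty)$ and $-\log \mb{Z}$ is finite. I would first define the tilted (Gibbs) path measure $\mbb{Q}^{\star}$ through its Radon--Nikodym derivative
\begin{equation}
    \frac{d\mbb{Q}^{\star}}{d\mbb{P}}(\mb{X}_{0:T}) = \frac{e^{-\mc{W}(\mb{X}_{0:T})}}{\mb{Z}},
\end{equation}
and verify that it is a probability measure (normalization is immediate from the definition of $\mb{Z}$) with $\mbb{Q}^{\star} \ll \mbb{P}$.

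The core step is an exact decomposition of the objective. For any competitor $\mbb{Q} \ll \mbb{P}$ with $D_{\text{KL}}(\mbb{Q}|\mbb{P}) < \infty$, I would observe that $\mbb{Q} \ll \mbb{Q}^{\star}$ (because $d\mbb{Q}^{\star}/d\mbb{P}$ is strictly positive), and apply the chain rule for Radon--Nikodym derivatives to write $\log \tfrac{d\mbb{Q}}{d\mbb{P}} = \log \tfrac{d\mbb{Q}}{d\mbb{Q}^{\star}} - \mc{W} - \log \mb{Z}$. Taking expectations under $\mbb{Q}$ and rearranging yields the key identity
\begin{equation}
    \mbb{E}_{\mb{Y} \sim \mbb{Q}}[\mc{W}(\mb{Y}_{0:T})] + D_{\text{KL}}(\mbb{Q}|\mbb{P}) = D_{\text{KL}}(\mbb{Q}|\mbb{Q}^{\star}) - \log \mb{Z}.
\end{equation}
Boundedness of $\mc{W}$ guarantees that $\mbb{E}_{\mbb{Q}}[\mc{W}]$ is finite, so every term above is well defined.

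With this identity the conclusion is immediate. Since $D_{\text{KL}}(\mbb{Q}|\mbb{Q}^{\star}) \geq 0$ by Gibbs' inequality, with equality if and only if $\mbb{Q} = \mbb{Q}^{\star}$, the right-hand side is minimized at $\mbb{Q} = \mbb{Q}^{\star}$, where it equals $-\log \mb{Z}$. Competitors with $D_{\text{KL}}(\mbb{Q}|\mbb{P}) = +\infty$ only enlarge the objective and therefore cannot affect the minimum. This establishes $\min_{\mbb{Q} \ll \mbb{P}}\big[\mbb{E}_{\mbb{Q}}[\mc{W}] + D_{\text{KL}}(\mbb{Q}|\mbb{P})\big] = -\log \mb{Z} = -\log \mbb{E}_{\mbb{P}}[e^{-\mc{W}}]$, as claimed, and simultaneously identifies $\mbb{Q}^{\star}$ as the unique minimizer.

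The main obstacle is measure-theoretic bookkeeping rather than analytic difficulty: I must ensure the factorization $\frac{d\mbb{Q}}{d\mbb{P}} = \frac{d\mbb{Q}}{d\mbb{Q}^{\star}}\frac{d\mbb{Q}^{\star}}{d\mbb{P}}$ holds $\mbb{Q}$-almost surely and that each expectation is integrable, both of which are secured by the boundedness of $\mc{W}$ keeping $d\mbb{Q}^{\star}/d\mbb{P}$ bounded away from $0$ and $\infty$. Notably, no Girsanov argument or SDE-specific structure is required here, since the statement is a general fact about the pair of path measures $(\mbb{P},\mbb{Q})$ on $C([0,T],\mbb{R}^d)$; the diffusion-specific content enters only later when $\mc{W}$ is instantiated and Girsanov is invoked to express $D_{\text{KL}}(\mbb{Q}|\mbb{P})$ as a control cost.
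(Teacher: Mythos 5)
Your proof is correct, and it takes a recognizably different route from the paper's. The paper proves the bound by a change of measure followed by Jensen's inequality: it rewrites $\mbb{E}_{\mbb{P}}[e^{-\mc{W}}]$ as $\mbb{E}_{\mbb{Q}}\bigl[e^{-\mc{W}}\tfrac{d\mbb{P}}{d\mbb{Q}}\bigr]$, pulls $-\log$ inside via convexity, and then separately identifies the equality case of Jensen as the Gibbs density $\tfrac{d\mbb{Q}}{d\mbb{P}} = e^{-\mc{W}}/\mb{Z}$. You instead construct the Gibbs measure $\mbb{Q}^{\star}$ up front and prove the \emph{exact} identity $\mbb{E}_{\mbb{Q}}[\mc{W}] + D_{\text{KL}}(\mbb{Q}\,|\,\mbb{P}) = D_{\text{KL}}(\mbb{Q}\,|\,\mbb{Q}^{\star}) - \log\mb{Z}$ via the chain rule for Radon--Nikodym derivatives, so that both the inequality and its tightness follow at once from nonnegativity of KL, with uniqueness of the minimizer for free. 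Your version buys some rigor the paper's argument glosses over: the paper's first step writes $\tfrac{d\mbb{P}}{d\mbb{Q}}$ for an arbitrary competitor $\mbb{Q} \ll \mbb{P}$, which implicitly presumes $\mbb{P} \ll \mbb{Q}$ as well (otherwise a Lebesgue decomposition is needed and the first ``equality'' is really an inequality), whereas you only ever use $\mbb{Q} \ll \mbb{P}$ together with the equivalence $\mbb{Q}^{\star} \sim \mbb{P}$, which boundedness of $\mc{W}$ guarantees. You also handle the $D_{\text{KL}}(\mbb{Q}\,|\,\mbb{P}) = +\infty$ competitors explicitly. What the paper's Jensen route buys in exchange is brevity and a form that generalizes immediately to one-sided bounds when no minimizer exists (e.g., unbounded $\mc{W}$ with only the inequality needed, as in the application in Theorem~\ref{theorem:tight variational bound}). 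Your observation that no Girsanov or SDE structure is needed at this stage matches the paper, which likewise invokes Girsanov only later when instantiating $\mc{W}$ and computing $D_{\text{KL}}(\mathbb{P}^{\alpha}\,|\,\mathbb{P})$ as a control cost.
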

\begin{proof}
    The proof relies on the change of measure and the Jensen's inequality:
    \begin{align}
        -\log \mbb{E}_{\mb{X} \sim \mbb{P}}\left[e^{-\mc{W}(\mb{X}_{0:T})}\right] &= -\log \mbb{E}_{\mb{Y} \sim \mbb{Q}}\left[e^{-\mc{W}(\mb{Y}_{0:T})} \frac{d\mbb{P}} {d\mbb{Q}}(\mb{Y}_{0:T})\right] \\
        & \leq \mbb{E}_{\mb{Y} \sim \mbb{Q}} \left[ \mc{W}(\mb{Y}_{0:T}) -\log \frac{d\mbb{P}} {d\mbb{Q}}(\mb{Y}_{0:T}) \right] \\
        & = \mbb{E}_{\mb{Y} \sim \mbb{Q}} \left[ \mc{W}(\mb{Y}_{0:T}) \right] + D_{\text{KL}}(\mbb{Q} | \mbb{P}),
    \end{align}
where the equality holds if and only if
\begin{equation}
     \frac{d\mbb{Q}} {d\mbb{P}}(\mb{Y}_{0:T}) =  e^{-\log \mbb{E}_{\mb{X} \sim \mbb{P}}\left[e^{-\mc{W}(\mb{X}_{0:T})}\right] - \mc{W}(\mb{Y}_{0:T})}.
\end{equation}
It concludes the proof.
\end{proof}

\begin{reptheorem}{theorem:tight variational bound}[Tight Variational Bound] Let us assume that the path measure $\mathbb{P}^{\alpha}$ induced by (\ref{eq:controlled dynamics}) for any $\alpha \in \mbb{A}$ satisfies $D_{\text{KL}}(\mathbb{P}^{\alpha} | \mathbb{P}^{\star}) < \infty$. Then, for a cost function $\mc{J}$ in (\ref{eq:cost function}) and $\mu^{\star}_0$ in (\ref{eq:conditioned SDE}), it holds:
    \begin{align}
        D_{\text{KL}}(\mathbb{P}^{\alpha} | \mathbb{P}^{\star}) = D_{\text{KL}}(\mu_0 | \mu^{\star}_0) + \mathbb{E}_{\mb{x}_0 \sim \mu_0}\left[ \mc{J}(0, \mb{x}_0, \alpha )\right] = \mc{L}(\alpha) + \log g(\mc{H}_{t_k}) \geq 0 ,
    \end{align}
\vspace{-2mm}
where the objective function $\mc{L}(\alpha)$ (negative ELBO) is given by:
    \begin{equation}
        \mc{L}(\alpha) = \mathbb{E}_{\mathbb{P}^{\alpha}}\left[\int_0^T \frac{1}{2}\norm{\alpha(s, \mb{X}^{\alpha}_s)}^2 ds - \sum_i^k \log g_i(\mb{y}_{t_i} | \mb{X}^{\alpha}_{t_i})\right] \geq -\log g(\mc{H}_{t_k}).
    \end{equation}
Moreover, assume that $\mc{L}(\alpha)$ has a global minimum $\alpha^{\star} = \argmin_{\alpha \in \mbb{A}}\mc{L}(\alpha)$. Then  the equality holds in (\ref{eq:objective function}) $\ie \mc{L}(\alpha^{\star}) = -\log g(\mc{H}_{t_k})$ and $\mu^{\star}_0 = \mu_0$ almost everywhere with respect to $\mu_0$.
\end{reptheorem}

\begin{proof} We begin by deriving the KL-divergence between $\mathbb{P}^{\alpha}$ and $\mathbb{P}^{\star}$:
\begin{align}
    D_{\text{KL}}(\mathbb{P}^{\alpha} | \mathbb{P}^{\star}) & \stackrel{(i)}{=} D_{\text{KL}}(\mu_0 | \mu^{\star}_0) + \mathbb{E}_{\mb{x}_0 \sim \mu_0}\left[ D_{\text{KL}}(\mathbb{P}^{\alpha}(\cdot|\mb{X}^{\alpha}_0) | \mathbb{P}^{\star}(\cdot|\mb{X}^{\alpha}_0) ) | \mb{X}^{\star}_0 = \mb{x}_0  \right] \\
     =& D_{\text{KL}}(\mu_0 | \mu^{\star}_0) + \mathbb{E}_{\mb{x}_0 \sim \mu_0}\mathbb{E}_{\mathbb{P}^{\alpha}}\left[\log \frac{d\mathbb{P}^{\alpha}}{d\mathbb{P}^{\star} }(\mb{X}^{\alpha}_{0:T})  | \mb{X}^{\alpha}_0 = \mb{x}_0  \right]   \\
     =& D_{\text{KL}}(\mu_0 | \mu^{\star}_0) + \mathbb{E}_{\mb{x}_0 \sim \mu_0}\mathbb{E}_{\mathbb{P}^{\alpha} }\left[\log \frac{d\mathbb{P}^{\alpha} }{d\mathbb{P}}(\mb{X}^{\alpha}_{0:T}) + \log\frac{d\mathbb{P} }{d\mathbb{P}^{\star} }(\mb{X}^{\alpha}_{0:T}) | \mb{X}^{\alpha}_0 = \mb{x}_0  \right], \label{eq:kl derivation eq_1}
\end{align}
where $(i)$ follows from the disintegration theorem~\citep{leonard2013survey}. For the second term of the RHS of~(\ref{eq:kl derivation eq_1}), applying the Girsanov's theorem in~\ref{theorem:girsanov}, we have:
\begin{align}\label{eq:kl derivation eq_2}
   \mathbb{E}^{0, \mb{x}_0}_{\mathbb{P}^{\alpha}} \left[ \log \frac{d\mathbb{P}^{\alpha} }{d\mathbb{P}}(\mb{X}^{\alpha}_{0:T}) \right] = \mathbb{E}^{0, \mb{x}_0}_{\mathbb{P}^{\alpha}} \left[\int_0^T \alpha_t d\tilde{\mb{W}}_t + \int_0^T \frac{1}{2} \norm{\alpha_t}^2 dt \right] = \mathbb{E}^{0, \mb{x}_0}_{\mathbb{P}^{\alpha}} \left[ \int_0^T \frac{1}{2} \norm{\alpha_t}^2 dt \right].
\end{align}
Moreover, by the definition of $\mathbb{P}^{\star}$, we get:
\begin{equation}\label{eq:kl derivation eq_3}
    \mathbb{E}^{0, \mb{x}_0}_{\mathbb{P}^{\alpha}} \left[ \log  \frac{d\mathbb{P}}{d\mathbb{P}^{\star}}(\mb{X}^{\alpha}_{0:T}) \right] = \mathbb{E}^{0, \mb{x}_0}_{\mathbb{P}^{\alpha}} \left[  -\sum_{i=1}^k \log f_i(\mb{y}_{t_i} | \mb{X}^{\alpha}_{t_i}) \right].
\end{equation} 
Combining the results from (\ref{eq:kl derivation eq_2}) and (\ref{eq:kl derivation eq_3}), we obtain:
\begin{align}
    \mathbb{E}^{0, \mb{x}_0}_{\mathbb{P}^{\alpha} }\left[\log \frac{d\mathbb{P}^{\alpha} }{d\mathbb{P}}(\mb{X}^{\alpha}_{0:T}) + \log\frac{d\mathbb{P}}{d\mathbb{P}^{\star}}(\mb{X}^{\alpha}_{0:T}) \right] 
    &= \mathbb{E}^{0, \mb{x}_0}_{\mathbb{P}^{\alpha} }\left[ \int_0^T 
 \frac{1}{2} \norm{\alpha(t, \mb{X}^{\alpha}_t)}^2 dt  -\sum_{i=1}^k \log f_i(\mb{y}_{t_i} | \mb{X}^{\alpha}_{t_i})  \right] \\
    & = \mathbb{E}^{0, \mb{x}_0}_{\mathbb{P}^{\alpha} }\left[ \int_0^T \frac{1}{2} \norm{\alpha(t, \mb{X}^{\alpha}_t)}^2 dt  -\sum_{i=1}^k \log f_i(\mb{y}_{t_i} | \mb{X}^{\alpha}_{t_i})\right] \\
    &= \mc{J}(0, \mb{x}_0, \alpha).
\end{align}
Moreover, by the definition of $\{f_i\}_{i \in [1:k]}$, we get
\begin{align}
    \mc{J}(0, \mb{x}_0, \alpha) &= \mathbb{E}_{\mathbb{P}^{\alpha}}\left[ \int_0^T \frac{1}{2} \norm{\alpha(t, \mb{X}^{\alpha}_t)}^2 dt   - \sum_{i=1}^k  \log f_{i}(\mb{y}_{t_i} | \mb{X}_{t_i}) | \mb{X}_0^{\alpha} = \mb{x}_0\right] \\
    & \stackrel{(i)}{=} \mathbb{E}_{\mathbb{P}^{\alpha}}\left[ \int_0^T \frac{1}{2} \norm{\alpha(t, \mb{X}^{\alpha}_t)}^2 dt   - \sum_{i=1}^k  \log g_{i}(\mb{y}_{t_i} | \mb{X}_{t_i}) | \mb{X}_0^{\alpha} = \mb{x}_0\right] + \log \mb{Z}(\mc{H}_{t_k}),
\end{align}
where $(i)$ follows from normalizing property in (\ref{eq:normalizing property eq_2}). Hence, it result the that:
\begin{align}
     D_{\text{KL}}(\mathbb{P}^{\alpha} | \mathbb{P}^{\star}) &= D_{\text{KL}}(\mu_0 | \mu^{\star}_0) + \mathbb{E}_{\mb{x}_0 \sim \mu_0}\left[\mc{J}(0, \mb{x}_0, \alpha) \right] \\
     & = D_{\text{KL}}(\mu_0 | \mu^{\star}_0) + \mc{L}(\alpha) + \log \mb{Z}(\mc{H}_{t_k}).
\end{align}
For a KL-divergence term $D_{\text{KL}}(\mu_0 | \mu^{\star}_0)$, since $d\mu^{\star}_0(\mb{x}_0) = h_1(t_0, \mb{x}_0)d\mu_0(\mb{x}_0)$, we obtain
\begin{align}\label{eq: tight variational bound eq_1}
     D_{\text{KL}}(\mu_0 | \mu^{\star}_0) &= \mbb{E}_{\mb{x}_0 \sim \mu_0}\left[\log \frac{d\mu_0}{d\mu^{\star}_0}(\mb{x}_0)\right]  = \mbb{E}_{\mb{x}_0 \sim \mu_0}\left[-\log h_1(0, \mb{x}_0)\right] \\
     & = \mbb{E}_{\mb{x}_0 \sim \mu_0}\left[\mc{V}(0, \mb{x}_0)\right] \\
     & = \mbb{E}_{\mb{x}_0 \sim \mu_0}\left[\mc{J}(0, \mb{x}_0, \alpha^{\star})\right] \\
     & \stackrel{(i)}{=} \mbb{E}_{\mb{x}_0 \sim \mu_0}\left[\tilde{\mc{J}}(0, \mb{x}_0, \alpha^{\star})\right] + \log \mb{Z}(\mc{H}_{t_k}) \\
     & \stackrel{(ii)}{=} \mc{L}(\alpha^{\star}) + \log \mb{Z}(\mc{H}_{t_k}),
\end{align}
where $(i)$ follows from:
\begin{align}
    \mc{J}(0, \mb{x}_0, \alpha) &= \mathbb{E}_{\mathbb{P}^{\alpha}}\left[\int_0^T \frac{1}{2}\norm{\alpha(s, \mb{X}^{\alpha}_s)}^2 ds - \sum_i^k \log f_i(\mb{y}_{t_i} | \mb{X}^{\alpha}_{t_i}) | \mb{X}^{\alpha}_0  = \mb{x}_0 \right]  \\
    & = \underbrace{\mathbb{E}_{\mathbb{P}^{\alpha}}\left[\int_0^T \frac{1}{2}\norm{\alpha(s, \mb{X}^{\alpha}_s)}^2 ds - \sum_i^k \log g_i(\mb{y}_{t_i} | \mb{X}^{\alpha}_{t_i}) | \mb{X}^{\alpha}_0  = \mb{x}_0 \right]}_{\tilde{\mc{J}}(0, \mb{x}_0, \alpha)} + \log \mb{Z}(\mc{H}_{t_k}).
\end{align}
Thus, we obtain $\alpha^{\star} = \argmin_{\alpha \in \mbb{A}} \mc{J}(0, \mb{x}_0, \alpha) = \argmin_{\alpha \in \mbb{A}} \tilde{\mc{J}}(0, \mb{x}_0, \alpha)$ as $\log \mb{Z}(\mc{H}_{t_k})$ is constant. It result that $\mc{J}(0, \mb{x}_0, \alpha^{\star}) = \tilde{\mc{J}}(0, \mb{x}_0, \alpha^{\star}) + \log \mb{Z}(\mc{H}_{t_k})$. Additionally, $(ii)$ follows from
\begin{equation}
    \mbb{E}_{\mb{x}_0 \sim \mu_0}\left[\min_{\alpha \in \mbb{A}}\tilde{\mc{J}}(0, \mb{x}_0, \alpha)\right] =  \min_{\alpha \in \mbb{A}} \mbb{E}_{\mb{x}_0 \sim \mu_0}\left[\tilde{\mc{J}}(0, \mb{x}_0, \alpha)\right] = \min_{\alpha \in \mbb{A}}\mc{L}(\alpha) = \mc{L}(\alpha^{\star})
\end{equation}
since the minimization is independent of the initial condition, as implied by the disintegration theorem~\citep{leonard2013survey}. Hence, since the Donsker–Varadhan variational principle in Lemma \ref{lemma:Donsker–Varadhan Variational Principle} with $\mc{W}(\mb{X}^{\alpha}_{0:T}) = -\sum_{i=1}^k  \log g_{i}(\mb{y}_{t_i} | \mb{X}^{\alpha}_{t_i})$ yields
\begin{equation}\label{eq:donsker-varadhan variational principle}
    \mathbb{E}_{\mathbb{P}^{\alpha}}\left[ \int_0^T \frac{1}{2} \norm{\alpha(t, \mb{X}^{\alpha}_t)}^2 dt  - \sum_{i=1}^k  \log g_{i}(\mb{y}_{t_i} | \mb{X}^{\alpha}_{t_i})\right] + \log \mathbb{E}_{\mbb{P}}\left[\prod_{i=1}^k g_{i}(\mb{y}_{t_i} | \mb{X}^{\alpha}_{t_i}) \right]\geq 0,
\end{equation} 
it implies that $D_{\text{KL}}(\mu_0 | \mu^{\star}_0) = \mc{L}(\alpha^{\star}) + \log \mb{Z}(\mc{H}_{t_k}) = 0$ for the optimal control $\alpha^{\star}$. In other words,  $\mu_0^{\star} = \mu_0$ almost everywhere with respect to $\mu_0$ and from the variational bound
\begin{align}
     D_{\text{KL}}(\mathbb{P}^{\alpha} | \mathbb{P}^{\star}) = \mc{L}(\alpha) + \log \mb{Z}(\mc{H}_{t_k}) \geq 0,
\end{align}
the equality holds if and only if $\alpha \to \alpha^{\star}$.  It concludes the proof.
\end{proof}

\subsection{Derivation of Amortized ELBO in~(\ref{eq:ELBO final}).}
Let $\mb{o}_{0:T}$ is given time-series data. Then, for an auxiliary variable $\mb{y}_{0:T} \sim q_{\phi}(\mb{y}_{0:T} | \mb{o}_{0:T})$, the ELBO is given as  
\begin{align}
    & \log p_{\psi} (\mb{o}_{0:T}) \geq \mathbb{E}_{q_{\phi}(\mb{y}_{0:T} | \mb{o}_{0:T})} \left[\log  \frac{\prod_{i=1}^K p_{\psi}(\mb{o}_{t_i} | \mb{y}_{t_i}) g(\mb{y}_{0:T})}{\prod_{i=1}^K q_{\phi}(\mb{y}_{t_i} | \mb{o}_{t_i})} \right] \\
    & = \mathbb{E}_{q_{\phi}(\mb{y}_{0:T} | \mb{o}_{0:T})} \left[
    \sum_{i=1}^K \log p_{\psi}(\mb{o}_{t_i} | \mb{y}_{t_i})
    + \log g(\mb{y}_{0:T}) - \log \prod_{i=1}^K q_{\phi}(\mb{y}_{t_i} | \mb{o}_{t_i}) \right]  \\
    & \geq \mathbb{E}_{q_{\phi}(\mb{y}_{0:T} | \mb{o}_{0:T})} \left[
    \sum_{i=1}^K \log p_{\psi}(\mb{o}_{t_i} | \mb{y}_{t_i})
    - \mc{L}(\alpha) - \log \prod_{i=1}^K q_{\phi}(\mb{y}_{t_i} | \mb{o}_{t_i}) \right] \\
    & = \mathbb{E}_{q_{\phi}(\mb{y}_{0:T} | \mb{o}_{0:T})} \left[
    \sum_{i=1}^K \log p_{\psi}(\mb{o}_{t_i} | \mb{y}_{t_i})
    - \mc{L}(\alpha) -  \sum_{i=1}^K \log q_{\phi}(\mb{y}_{t_i} | \mb{o}_{t_i}) \right]  \\
    & \stackrel{(i)}{\geq} \mathbb{E}_{q_{\phi}(\mb{y}_{0:T} | \mb{o}_{0:T})} \left[
    \sum_{i=1}^K \log p_{\psi}(\mb{o}_{t_i} | \mb{y}_{t_i})
    - \mc{L}(\alpha)  \right],
\end{align}
where $(i)$ follows from $\mathbb{E}_{q_{\phi}(\mb{y}_{0:T} | \mb{o}_{0:T})} \left[ - \sum_{i=1}^K \log q_{\phi}(\mb{y}_{t_i} | \mb{o}_{t_i})\right] = C \geq 0$ since $q_{\phi}$ is Gaussian distribution with constant covariance matrix. 

\subsection{Proof of Theorem~\ref{theorem:simulation free estimation}}\label{sec:simulation free estimation}

\begin{reptheorem}{theorem:simulation free estimation}[Simulation-free estimation] Let us consider sequences of SPD matrices $\{\mb{A}_i\}_{i \in [1:k]}$ that admit the eigen-decomposition $\mb{A}_i = \mb{E} \mb{D}_i \mb{E}^{\top}$ with $\mb{E} \in \mbb{R}^{d \times d}$ and $\mb{D}_i \in \text{diag}(\mbb{R}^d) \succeq 0$ for all $i \in [1:k]$, control vectors $\{\alpha_i\}_{i \in [1:k]}$ and following control-affine SDEs for all $i \in [1:k]$:
    \begin{equation}\label{eq:Simulation-free eq_1_appx}
        d\mb{X}_t  = \left[-\mb{A}_i\mb{X}_t + \alpha_i \right]dt + \sigma d\mb{W}_t, \quad t \in [t_{i-1}, t_i).
    \end{equation}
Then, with $\mb{X}_0 \sim \mc{N}(\mb{m}_0, \mb{\Sigma}_0)$, the solution of~(\ref{eq:Simulation-free eq_1_appx}) is a Gaussian process  $\mathcal{N}(\mb{m}_{t_i}, \mb{\Sigma}_{t_i})$ with:
\begin{align*}
    & \mb{m}_{t_i} = \mb{E}\left(e^{-\sum_{j=1}^{i} (t_{j} - t_{j-1}) \mb{D}_j} \hat{\mb{m}}_{t_0} - \sum_{k=1}^{i} e^{-\sum_{j=k}^{i} (t_j - t_{j-1}) \mb{D}_j} \mb{D}^{-1}_k \left( \mb{I} - e^{(t_{k} - t_{k-1}) \mb{D}_k}\right) \hat{\alpha}_k \right), \\
 & \mb{\Sigma}_{t_i} =\mb{E}\left(e^{-2\sum_{j=1}^{i} (t_{j} - t_{j-1}) \mb{D}_j} \hat{\mb{\Sigma}}_{t_0} - \frac{1}{2}\sum_{k=1}^{i} e^{-2\sum_{j=k}^{i} (t_j - t_{j-1}) \mb{D}_j} \mb{D}^{-1}_k \left( \mb{I} - e^{2(t_{k} - t_{k-1}) \mb{D}_k}\right) \right) \mb{E}^{\top},
\end{align*}
where $\hat{\mb{m}}_{t_i} = \mb{E}^{\top} \mb{m}_{t_i}, \hat{\mb{\Sigma}}_{t_i} = \mb{E}^{\top} \mb{\Sigma}_{t_i} \mb{E}$ and $\hat{\alpha}_i = \mb{E}^{\top} \alpha_i$ for all $i \in [1:k]$.
\end{reptheorem}

\begin{proof} Note that since $\{\mb{A}_i\}_{i \in [1:k]}$ are SPD matrices, we can apply the transformation outlined in Remark~\ref{remark:diagonal transform} and express the original dynamics~(\ref{eq:Simulation-free eq_1_appx}) in the projected form using the eigenbasis $\mb{E}$. Then, for any $t \in [t_{i}, t_{i+1})$, the solution to~(\ref{eq:Simulation-free eq_1_appx}) at time $t$ is given as
    \begin{equation}\label{eq:SDE solution}
        \hat{\mb{X}}_t = e^{-\Delta_{i}(t)\mb{D}_i} \left(\hat{\mb{X}}_{t_i} + \int_{t_i}^t e^{\Delta_{i}(s)\mb{D}_i} \hat{\alpha}_i ds + \int_{t_i}^t e^{\Delta_{i}(s)\mb{D}_i} d\hat{\mb{W}}_s \right), \Delta_i(t) = \begin{cases}
        t - t_i, \;\text{for } t > t_i \\
        0, \;\text{for } t \leq t_i,
        \end{cases}
    \end{equation}
where $\hat{\mb{m}}_{t_i} = \mb{E}^{\top} \mb{m}_{t_i}, \hat{\mb{\Sigma}}_{t_i} = \mb{E}^{\top} \mb{\Sigma}_{t_i} \mb{E}$ and $\hat{\alpha}_i = \mb{E}^{\top} \alpha_i$ for all $i \in [1:k]$ and $\hat{\mb{W}}_t = \mb{E}^{\top} \mb{W}_t$.
Given that we have defined $\mb{X}_0 \sim \mc{N}(\mb{m}_0, \mb{\Sigma}_0)$, $\hat{\mb{X}}_{t_i} = \mb{E}^{\top} \mb{X}_{t_i}$ is a Gaussian process for any $i \in [1:k]$. 
The first two moments of Gaussian process can be computed from~(\ref{eq:SDE solution}). First, since $\mb{D}_i$ is diagonal, the integral can be computed as $\int_{t_i}^t e^{\Delta_{i}(s)\mb{D}_i}ds = -\mb{D}_i^{-1}(\mb{I} - e^{\Delta_i(t)\mb{D}_i})$ and $\mb{M}_i(t) := \int_{t_i}^t e^{\Delta_{i}(s)\mb{D}_i} d\hat{\mb{W}}_s$ is a martingale process with respect to $\mathbb{P}^{\alpha}$ $\ie \mathbb{E}_{\mathbb{P}^{\alpha}}\left[\mb{M}_i(t)\right] = 0$. Hence, since $\hat{\alpha}_i$ is time-invariant vector, the mean $\mathbb{E}_{\mathbb{P}}[\hat{\mb{X}}_{t}] = \hat{\mb{m}}_{t}$ for $t \in [t_{i}, t_{i+1})$ can be computed as
\begin{equation}
    \hat{\mb{m}}_t = e^{-\Delta_{i}(t)\mb{D}_i} \hat{\mb{m}}_{t_i} - e^{-\Delta_{i}(t)\mb{D}_i}\mb{D}_i^{-1}(\mb{I} - e^{\Delta_{i}(t)\mb{D}_i})\hat{\alpha}_i.\label{eq:Mean recurrence}
\end{equation}
Secondly, for a covariance $\mathbb{E}_{\mathbb{P}}[(\hat{\mb{X}}_{t} - \hat{\mb{m}}_{t}
) (\hat{\mb{X}}_{t} - \hat{\mb{m}}_{t}
)^\top] = \hat{\mb{\Sigma}}_{t}$, we can compute
\begin{align}
    & \hat{\mb{\Sigma}}_{t}  = \mathbb{E}_{\mathbb{P}^{\alpha}}\left[ e^{-2\Delta_{i}(t)\mb{D}_i}\left(\hat{\mb{X}}_{t_i} - \hat{\mb{m}}_{t_i}
 + \mb{M}_i(t) \right)\left(\hat{\mb{X}}_{t_i} - \hat{\mb{m}}_{t_i}
 + \mb{M}_i(t) \right)^\top\right] \\
& \stackrel{(i)}{=} e^{-2\Delta_{i}(t)\mb{D}_i}\mathbb{E}_{\mathbb{P}^{\alpha}}\left[ (\hat{\mb{X}}_{t_i} - \hat{\mb{m}}_{t_i}
) (\hat{\mb{X}}_{t_i} - \hat{\mb{m}}_{t_i}
)^\top + \norm{\mb{M}_i(t)}_2^2 \right] \\
& \stackrel{(ii)}{=} e^{-2\Delta_{i}(t)\mb{D}_i} \hat{\mb{\Sigma}}_{t_i} -\frac{1}{2}e^{-2\Delta_i(t)\mb{D}_i}\mb{D}_i^{-1}(\mb{I} - e^{2\Delta_i(t)\mb{D}_i}),\label{eq:Covariance recurrence}
\end{align}
where $(i)$ follows from the fact that $\mb{M}_i(t)$ is a martingale and we use Itô isometry in $(ii)$:
\begin{equation}
    \mathbb{E}_{\mathbb{P}^{\alpha}}\left[\norm{\mb{M}_i(t)}_2^2 \right] = \mathbb{E}_{\mathbb{P}^{\alpha}}\left[\int_{t_i}^t \norm{e^{\Delta_{i}(s)\mb{D}_i}}_2^2 ds \right]=  -\frac{1}{2}e^{-2\Delta_i(t)\mb{D}_i}\mb{D}_i^{-1}(\mb{I} - e^{2\Delta_i(t)\mb{D}_i}).
\end{equation}
Hence, we get the Gaussian law of $\hat{\mb{X}}_t$ at time $t \in [t_i, t_{i+1})$, $\mathcal{N}(\hat{\mb{m}}_{t}, \hat{\mb{\Sigma}}_t).$
Furthermore, given recurrence forms of mean~(\ref{eq:Mean recurrence}) and covariance~(\ref{eq:Covariance recurrence}), the first two moments of Gaussian distribution for each time steps $t_i$ can be computed sequentially. For a mean $\hat{\mb{m}}_{t_i}$ we have,
\begin{align}
     \hat{\mb{m}}_{t_1} &= e^{-\Delta_{0}(t_{1}) \mb{D}_1} \hat{\mb{m}}_{t_0} - e^{-\Delta_{0}(t_1)\mb{D}} \mb{D}_1^{-1}(\mb{I} - e^{\Delta_{0}(t_1)\mb{D}_1})\hat{\alpha}_1 \\
    \hat{\mb{m}}_{t_2} &= e^{-  \sum_{j=1}^{2} \Delta_{j-1}(t_j) \mb{D}_j} \hat{\mb{m}}_{t_0} \\
    & - e^{-\sum_{j=1}^{2} \Delta_{j-1}(t_j) \mb{D}_j}\mb{D}_1^{-1}(\mb{I} - e^{\Delta_{0}(t_1)\mb{D}_1})\hat{\alpha}_1 - e^{-\Delta_1(t_2) \mb{D}_2}\mb{D}_2^{-1}(\mb{I} - e^{\Delta_{1}(t_2)\mb{D}_2})\hat{\alpha}_2 \\
    & \vdots \\
    \hat{\mb{m}}_{t_{i}} &= e^{-  \sum_{j=1}^{i} \Delta_{j-1}(t_j) \mb{D}_j} \hat{\mb{m}}_{t_0} - \sum_{k=1}^{i} e^{-\sum_{j=k}^{i} \Delta_{j-1}(t_j) \mb{D}_j} \mb{D}^{-1}_k \left( \mb{I} - e^{\Delta_{k-1}(t_k) \mb{D}_k}\right) \hat{\alpha}_k \label{eq:Mean Projected}
\end{align}
Moreover, for a covariance $\hat{\mb{\Sigma}}_{t_i}$, similar calculation yields
\begin{align}
     \hat{\mb{\Sigma}}_{t_1} &= e^{-2\Delta_{0}(t_{1}) \mb{D}_1} \hat{\mb{\Sigma}}_{t_0} - \frac{1}{2} e^{-2\Delta_{0}(t_1)\mb{D}} \mb{D}_1^{-1}(\mb{I} - e^{2\Delta_{0}(t_1)\mb{D}_1}) \\
    \hat{\mb{\Sigma}}_{t_2} &= e^{-  2\sum_{j=1}^{2} \Delta_{j-1}(t_j) \mb{D}_j} \hat{\mb{\Sigma}}_{t_0} \\
    & - \frac{1}{2}  e^{-2\sum_{j=1}^{2} \Delta_{j-1}(t_j) \mb{D}_j}\mb{D}_1^{-1}(\mb{I} -   e^{2\Delta_{0}(t_1)\mb{D}_1}) - \frac{1}{2} e^{-2\Delta_1(t_2) \mb{D}_2}\mb{D}_2^{-1}(\mb{I} - e^{2\Delta_{1}(t_2)\mb{D}_2}) \\
    & \vdots \\
    \hat{\mb{\Sigma}}_{t_{i}} &= e^{-  2 \sum_{j=1}^{i} \Delta_{j-1}(t_j) \mb{D}_j} \hat{\mb{\Sigma}}_{t_0} - \frac{1}{2} 
 \sum_{k=1}^{i} e^{-2 \sum_{j=k}^{i} \Delta_{j-1}(t_j) \mb{D}_j} \mb{D}^{-1}_k \left( \mb{I} - e^{2\Delta_{k-1}(t_k) \mb{D}_k}\right) \label{eq:Covariance Projected}
\end{align}
Now, since $\mb{D}_i = \mb{E} \mb{A}_i \mb{E}^{\top}$ and the orthonormality of $\mb{E}$, we can express the mean and covariance in the original canonical basis. For the mean, we get
\begin{align}
    & \mb{E} \hat{\mb{m}}_{t_i} = \mb{E} \left(  e^{-  \sum_{j=1}^{i} \Delta_{j-1}(t_j) \mb{D}_j} \hat{\mb{m}}_{t_0} - \sum_{k=1}^{i} e^{-\sum_{j=k}^{i} \Delta_{j-1}(t_j) \mb{D}_j} \mb{D}^{-1}_k \left( \mb{I} - e^{\Delta_{k-1}(t_k) \mb{D}_k}\right) \hat{\alpha}_k  \right) \\
    & \stackrel{(i)}{=} \mb{E} \left(  e^{-  \sum_{j=1}^{i} \Delta_{j-1}(t_j) \mb{D}_j} \mb{E}^{\top} \mb{m}_{t_0} - \sum_{k=1}^{i} e^{-\sum_{j=k}^{i} \Delta_{j-1}(t_j) \mb{D}_j} \mb{E}^{\top} \mb{A}^{-1}_k \mb{E} \left( \mb{I} - e^{\Delta_{k-1}(t_k) \mb{D}_k}\right) \mb{E}^{\top} \alpha_k  \right) \\
    & \stackrel{(ii)}{=}   e^{-  \sum_{j=1}^{i} \Delta_{j-1}(t_j) \mb{A}_j} \mb{m}_{t_0} - \sum_{k=1}^{i} e^{-\sum_{j=k}^{i} \Delta_{j-1}(t_j) \mb{A}_j}  \mb{A}^{-1}_k \left( \mb{I} - e^{\Delta_{k-1}(t_k) \mb{A}_k}\right) \alpha_k \label{eq:mean original} \\
    & =  \mb{m}_{t_i},
\end{align}
where $(i)$ follows from $\hat{\mb{m}}_{t_0} = \mb{E}^{\top} \mb{m}_{t_0}$, and $\hat{\alpha}_i = \mb{E}^{\top} \alpha_i$ for all $i \in [1:k]$ and $(ii)$ follows from $\mb{D}^{-1}_i = \mb{E}^{\top} \mb{A}^{-1}_i \mb{E}$ and $e^{-\mb{D}^{-1}_i} = \mb{E}^{\top} e^{-\mb{A}^{-1}_i} \mb{E}$.  Similarly, for the covariance, we get
\begin{align}
    & \mb{E} \hat{\mb{\Sigma}}_{t_i} \mb{E}^{\top} = \mb{E} \left(  e^{-  2\sum_{j=1}^{i} \Delta_{j-1}(t_j) \mb{D}_j} \hat{\mb{\Sigma}}_{t_0} - \frac{1}{2} \sum_{k=1}^{i} e^{-2\sum_{j=k}^{i} \Delta_{j-1}(t_j) \mb{D}_j} \mb{D}^{-1}_k \left( \mb{I} - e^{2\Delta_{k-1}(t_k) \mb{D}_k}\right)  \right)  \mb{E}^{\top}  \\
    & \stackrel{(i)}{=} \mb{E} \left(  e^{-  2\sum_{j=1}^{i} \Delta_{j-1}(t_j) \mb{D}_j} \mb{E}^{\top} \mb{\Sigma}_{t_0} \mb{E} - \frac{1}{2}\sum_{k=1}^{i} e^{-2\sum_{j=k}^{i} \Delta_{j-1}(t_j) \mb{D}_j} \mb{E}^{\top} \mb{A}^{-1}_k \mb{E} \left( \mb{I} - e^{2\Delta_{k-1}(t_k) \mb{D}_k}\right)  \right) \mb{E}^{\top} \\
    & = e^{- 2\sum_{j=1}^{i} \Delta_{j-1}(t_j) \mb{A}_j} \mb{\Sigma}_{t_0} - \frac{1}{2}\sum_{k=1}^{i} e^{-2\sum_{j=k}^{i} \Delta_{j-1}(t_j) \mb{A}_j}  \mb{A}^{-1}_k \left( \mb{I} - e^{2\Delta_{k-1}(t_k) \mb{A}_k}\right) \label{eq:covariance original}  \\
    & =  \mb{\Sigma}_{t_i},
\end{align}
where $(i)$ follows from $\hat{\mb{\Sigma}}_{t_0} = \mb{E}^{\top} \mb{\Sigma}_{t_0} \mb{E}$. By applying the procedure from (\ref{eq:Mean recurrence}-\ref{eq:Covariance recurrence}) to the original SDE~(\ref{eq:Simulation-free eq_1_appx}), we recover the mean and covariance expressions in (\ref{eq:mean original}) and (\ref{eq:covariance original}). This shows that the transformed mean and covariance in (\ref{eq:Mean Projected}) and (\ref{eq:Covariance Projected}) can indeed be projected back into the original basis, completing the proof.
\end{proof}

\section{Parallel Scan}\label{sec:parallel scan}
\begin{algorithm}[!t]
\caption{Parallel Scan for Mean and Covariance}\label{alg:parallel_scan}
\begin{algorithmic}[1]
\STATE \textbf{Input. } Given time stamps $\mathcal{T} = \{t_1, t_2, \ldots, t_K\}$, initial mean $\mb{m}_{t_0}$ and covariance $\mb{\Sigma}_{t_0}$, control policies $\{\hat{\alpha}_i\}_{i \in [1:k]}$, matrices $\{\mb{D}\}_{i \in [1:k]}$.
    \STATE Compute $\{\Delta_{i}(t_i), \hat{\mb{D}}_i, \hat{\mb{C}}_i, \bar{\mb{D}}_i, \bar{\mb{C}}_i\}_{i \in [1:k]}$
    \STATE Set $\{\mathbf{M}_i\}_{i \in [1:k]}= \{\left(\hat{\mathbf{D}}_{i}, \hat{\mathbf{C}}_{i} \hat{\alpha}_{i}\right)\}_{i \in [1:k]}$ and $\{\mathbf{S}_i\}_{i \in [1:k]} = \{\left(\bar{\mathbf{D}}_{i}, \bar{\mathbf{C}}_{i} \mb{1}\right)\}_{i \in [1:k]}$.
    \STATE Parallel Scan $\{\mathbf{M}'_i, \mathbf{S}'_i\}_{i \in [1:k]} = \texttt{ParallelScan}(\{\mathbf{M}_i, \mathbf{S}_i\}_{i \in [1:k]}, \otimes)$ 
    \STATE $\Rightarrow$ Algorithm~\ref{alg:parallel_scan_procedure} for \texttt{ParallelScan}
    \STATE \textbf{for} $i = 1$ to $K$ \textbf{do in parallel}
        \STATE \hspace{2mm} $\mb{m}_{t_i} = {\mathbf{M}'}_i^{(1)} \mb{m}_{t_0} + {\mathbf{M}'_i}^{(2)}$
        \STATE \hspace{2mm} $\mb{\Sigma}_{t_i} = {\mathbf{S}'}_i^{(1)} \mb{\Sigma}_{t_0} + {\mathbf{S}'_i}^{(2)}$
    \STATE \textbf{end for}
\STATE \textbf{Return} $\mb{m}_{t \in \mathcal{T}}$, $\mb{\Sigma}_{t \in \mathcal{T}}$
\end{algorithmic}
\end{algorithm}

Given an associative operator $\otimes$ and a sequence of elements $[s_{t_1}, \cdots s_{t_K}]$, the parallel scan algorithm~\citep{blelloch1990prefix} computes the all-prefix-sum which returns the sequence $[s_{t_1}, (s_{t_1} \otimes s_{t_2}), \cdots, (s_{t_1} \otimes s_{t_2} \otimes \cdots \otimes s_{t_K})]$ in $\mc{O}(\log K)$ time. Since we have verified that moments $\{\hat{\mb{m}}_{t_i}, \hat{\mb{\Sigma}}_{t_i}\}_{i\in[1:k]}$ of the controlled distributions can be estimated by the recurrences in~(\ref{eq:Mean recurrence},\ref{eq:Covariance recurrence}):
\begin{align}
    & \hat{\mb{m}}_{t_i} = \hat{\mb{D}}_{i}\hat{\mb{m}}_{t_{i-1}} + \hat{\mb{C}}_{i} \hat{\alpha}_{i} \\
    & \hat{\mb{\Sigma}}_{t_i} = \bar{\mb{D}}_{i} \hat{\mb{\Sigma}}_{t_{i-1}} + \bar{\mb{C}}_{i} \mb{1},
\end{align}
where, we define 
\begin{align}\label{eq:Ds and Cs}
    & \hat{\mb{D}}_{i} = e^{-\Delta_{i-1}(t_i)\mb{D}_{i}}, \quad \hat{\mb{C}}_{i} 
 = -e^{-\Delta_{i-1}(t_i)\mb{D}_{i}}\mb{D}_{i}^{-1}(\mb{I} - e^{\Delta_{i-1}(t_i)\mb{D}_{i}}) \\
 &  \bar{\mb{D}}_{i} = e^{-2\Delta_{i-1}(t_i)\mb{D}_{i}}, \quad \bar{\mb{C}}_{i} = -\frac{1}{2}e^{-2\Delta_{i-1}(t_i)\mb{D}_i}\mb{D}_{i}^{-1}(\mb{I} - e^{2\Delta_{i-1}(t_i)\mb{D}_{i}}),
\end{align}
and $\mb{1} = (1, \cdots, 1) \in \mathbb{R}^d$. For a parallel scan, we will define the sequence of tuple $\{\mb{M}_i\}_{i\in[1:k]}$, such that each element is $\mb{M}_i = (\hat{\mb{D}}_{i}, \hat{\mb{C}}_{i} \alpha_{i})$ and $\{\mb{S}_i\}_{i\in[1:k]}$, such that each element is $\mb{S}_i = (\bar{\mb{D}}_{i}, \bar{\mb{C}}_{i}\mb{1})$ for $\{\mb{m}_{t_i}\}_{i\in[1:k]}$ and $\{\mb{\Sigma}_{t_i}\}_{i\in[1:k]}$, respectively.
 
Now, let us define  a binary operator $\otimes$:
\begin{align}
    & \mb{M}_s \otimes \mb{M}_{t} = (\hat{\mb{D}}_{t} \circ \hat{\mb{D}}_{s}, \hat{\mb{D}}_{t} \circ \hat{\mb{C}}_{s} \hat{\alpha}_{s} + \hat{\mb{C}}_{t} \hat{\alpha}_{t}) \\
    & \mb{S}_s \otimes \mb{S}_{t} = (\bar{\mb{D}}_{t} \circ \bar{\mb{D}}_{s}, \bar{\mb{D}}_{t} \circ \bar{\mb{C}}_{s} \mb{1} + \bar{\mb{C}}_{t} \mb{1})
\end{align}
We can verify that $\otimes$ is associative operator since it satisfying:
\begin{align}
     ( \mb{M}_{s} \otimes \mb{M}_{t} ) \otimes \mb{M}_{u} &= (\hat{\mb{D}}_{t} \circ \hat{\mb{D}}_{s}, \hat{\mb{D}}_{t} \circ \hat{\mb{C}}_{s} \hat{\alpha}_{s} + \hat{\mb{C}}_{t} \hat{\alpha}_{t}) \otimes (\hat{\mb{D}}_{u}, \hat{\mb{C}}_{u}\hat{\alpha}_{u}) \\
    & = \left(\hat{\mb{D}}_u \circ \left(\hat{\mb{D}}_t \circ \hat{\mb{D}}_s\right), \hat{\mb{D}}_u \circ \left(\hat{\mb{D}}_t \circ \hat{\mb{C}}_s \hat{\alpha}_s + \hat{\mb{C}}_t \hat{\alpha}_t \right)  + \hat{\mb{C}}_{u} \hat{\alpha}_{u} \right)\\
    & = \left(\left( \hat{\mb{D}}_u \circ \hat{\mb{D}}_t \right) \circ \hat{\mb{D}}_s, \left( \hat{\mb{D}}_u \circ \hat{\mb{D}}_t \right) \circ \hat{\mb{C}}_s \hat{\alpha}_s + \hat{\mb{D}}_u \circ  \hat{\mb{C}}_t \hat{\alpha}_t   + \hat{\mb{C}}_{u} \hat{\alpha}_{u} \right)\\
    & =\mb{M}_{s} \otimes (\mb{M}_{t} \otimes \mb{M}_{u} )
\end{align}
Thus we get $ ( \mb{M}_{s} \otimes \mb{M}_{t} ) \otimes \mb{M}_{u} = \mb{M}_{s} \otimes (\mb{M}_{t} \otimes \mb{M}_{u} )$. Moreover, we can get similar results for $\mb{S}_i$:
\begin{align}
     ( \mb{S}_{s} \otimes \mb{S}_{t} ) \otimes \mb{S}_{u} &= (\bar{\mb{D}}_{t} \circ \bar{\mb{D}}_{s}, \bar{\mb{D}}_{t} \circ \bar{\mb{C}}_{s} \mb{1} + \bar{\mb{C}}_{t} \mb{1}) \otimes (\bar{\mb{D}}_{u}, \bar{\mb{C}}_{u}\mb{1}) \\
    & = \left(\bar{\mb{D}}_u \circ \left(\bar{\mb{D}}_t \circ \bar{\mb{D}}_s\right), \bar{\mb{D}}_u \circ \left(\bar{\mb{D}}_t \circ \bar{\mb{C}}_s \mb{1} + \bar{\mb{C}}_t \mb{1} \right)  + \bar{\mb{C}}_{u} \mb{1} \right)\\
    & = \left(\left( \bar{\mb{D}}_u \circ \bar{\mb{D}}_t \right) \circ \bar{\mb{D}}_s, \left( \bar{\mb{D}}_u \circ \bar{\mb{D}}_t \right) \circ \bar{\mb{C}}_s \mb{1} + \bar{\mb{D}}_u \circ  \bar{\mb{C}}_t \mb{1}   + \bar{\mb{C}}_{u} \mb{1} \right)\\
    & =\mb{S}_{s} \otimes (\mb{S}_{t} \otimes \mb{S}_{u} )
\end{align}
Now, both means and covariances along the interval $\{\mb{m}_{t_i}\}_{i \in [1:k]}$ and $\{\mb{\Sigma}_{t_i}\}_{i \in [1:k]}$ can be computed (parallel in time $K$) using a parallel scan algorithm described in Algorithm~\ref{alg:parallel_scan}.

\begin{algorithm}[!t]
\caption{\texttt{ParallelScan}}\label{alg:parallel_scan_procedure}
\begin{algorithmic}[1]
\STATE \textbf{Input. } Sequence of tuples $\{\mathbf{F}_1, \mathbf{F}_2, \ldots, \mathbf{F}_K\}$, associative operator $\otimes$.
    \STATE \texttt{Up-Sweep Stage}.
    \FOR{$d = 0$ to $\lceil \log_2 K \rceil - 1$}
        \STATE{\textbf{for} each sub-tree of height $d$ \textbf{do in parallel}}
            \STATE \hspace{2mm} Let $i = 2^{d+1}k + 2^{d+1} - 1$ for $k = 0, 1, \ldots$
            \STATE \hspace{2mm} \textbf{if} $i < K$
                \STATE \hspace{4mm} $\mathbf{F}_i = \mathbf{F}_{i - 2^d} \otimes \mathbf{F}_i$
            \STATE \hspace{2mm} \textbf{end if}
    \STATE{\textbf{end for}}
    \ENDFOR
    
    \STATE \texttt{Down-Sweep Stage}.
    \STATE $\mathbf{F}_K = \mathbf{I}$, where $\mathbf{I}$ is the identity element for $\otimes$.
    \FOR{$d = \lceil \log_2 K \rceil - 1$ to $0$}
        \STATE{\textbf{for} each sub-tree of height $d$ \textbf{do in parallel}}
            \STATE \hspace{2mm} Let $i = 2^{d+1}k + 2^{d+1} - 1$ for $k = 0, 1, \ldots$
            \STATE \hspace{2mm} \textbf{if} $i < K$
                \STATE \hspace{4mm} $\mathbf{F}_{i - 2^d} = \mathbf{F}_{i - 2^d} \otimes \mathbf{F}_i$
                \STATE \hspace{4mm} $\mathbf{F}_i = \mathbf{F}_{i - 2^d}$
            \STATE \hspace{2mm} \textbf{end iF}
    \STATE{\textbf{end for}}
    \ENDFOR
    \STATE{\textbf{for} $i=1$ to $K$ \textbf{do in parallel}}
    \STATE \hspace{2mm} $\mathbf{F}'_i = \mathbf{F}_1 \otimes \mathbf{F}_2 \otimes \cdots \otimes \mathbf{F}_i$
    \STATE{\textbf{end for}}
    \STATE \textbf{Return}  Scanned sequence $\{\mathbf{F}'_1, \mathbf{F}'_2, \ldots, \mathbf{F}'_K\}$.
\end{algorithmic}
\end{algorithm}
\vspace{-4mm}

\section{Implementation Details}\label{sec:implementation details}

\subsection{Datasets}\label{sec:datasets}

\paragraph{Human Activity} The Human Activity dataset\footnote{\url{https://doi.org/10.24432/C57G8X} under CC BY 4.0} consists of time series data collected from five individuals performing different activities. Following the preprocessing steps described in~\citep{rubanova2019latent}, we obtained $6,554$ sequences, each with $211$ time points and a fixed sequence length of 50 irregularly sampled time stamps. The time range was rescaled to $[0, 1]$. The classification task involves assigning each time point to one of seven categories: ``walking", ``falling", ``lying", ``sitting", ``standing up", ``on all fours", or ``sitting on the ground". The dataset was split into 4,194 sequences for training, 1,049 for validation, and 1,311 for testing.

\begin{figure}[!h]
\centering
\includegraphics[width=1.\textwidth,]{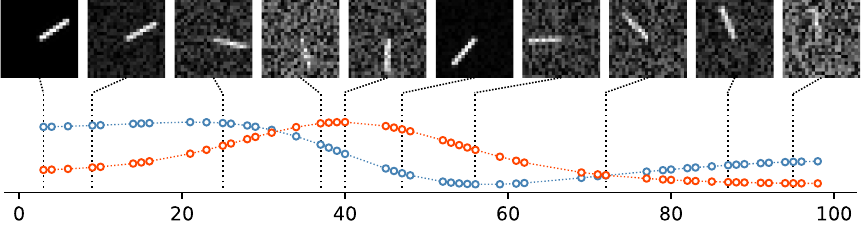}
\caption{\textbf{Example of the pendulum sequence}. \textbf{(Up)} The input image sequences $\{\mb{o}\}_{t \in \mc{T}}$ observed at irregular time stamps. \textbf{(Down)} The angular values of $\textcolor[RGB]{106, 153, 195}{\text{sin}(\theta_t)}$ and $\textcolor[RGB]{255, 96, 38}{\text{cos}(\theta_t)}$ where $\theta_t$ represents the angle of the pendulum at time $t \in [0, 100]$, are used as regression targets.}
\end{figure}

\paragraph{Pendulum} The pendulum images were algorithmically generated through numerical simulation as outlined in~\citep{becker2019recurrent}. We followed the setup described in~\citep{schirmer2022modeling}, where 4,000 image sequences were generated. Each sequence consists of 50 time stamps, irregularly sampled from $T=100$, with each image being a $24 \times 24$ pixel representation. The sequences was further corrupted by a correlated noise process, as detailed in~\citep{becker2019recurrent}. For our experiments, we used 2,000 sequences for training and 1,000 sequences for validation and testing.

\paragraph{USHCN} The USHCN dataset~\citep{menne2015long}\footnote{\url{https://data.ess-dive.lbl.gov/view/doi:10.3334/CDIAC/CLI.NDP019} under CC BY 4.0} includes daily measurements from $1,218$ weather stations across the US, covering five variables: precipitation, snowfall, snow depth, and minimum and maximum temperature. We follow the pre-processing steps outlined in~\citep{de2019gru}, but select a subset of $1,168$ stations over a four-year period starting from $1990$, consistent with~\citep{schirmer2022modeling}. Moreover, we make the time series irregular by subsampling $50\%$ of the time points and randomly removing $20\%$ of the measurements. We normalize the features to lie within the range $[0, 1]$ and split into $60\%$ for training, $20\%$ for validation, and $20\%$ for testing.

\paragraph{Physionet} The Physionet dataset~\citep{silva2012predicting}\footnote{\url{https://physionet.org/content/challenge-2012/1.0.0/} under ODC-BY 1.0} contains $8000$ multivariate clinical time-series obtained from the intensive care unit (ICU). Each time-series includes various clinical features recorded during the first $48$ hours after the patient's admission to the ICU. We preprocess the data as in~\citep{rubanova2019latent}. Although the dataset contains a total of $41$ measurements, we eliminate $4$ static features, $\ie$ age, gender, height, and ICU-type, leaving $37$ time-varying features. We round the time-steps to $6$-minute intervals, following~\citep{schirmer2022modeling}. We normalize the features to lie within the range $[0, 1]$ and split into $60\%$ for training, $20\%$ for validation, and $20\%$ for testing.

\begin{table}[h!]
\caption{Training Hyper-parameters}
\label{table:training_hyperparameters}
\centering
 \resizebox{\textwidth}{!}{
\begin{tabular}{c|ccccccc}
\toprule
\textbf{Dataset} & \textbf{Learning Rate} & \textbf{Train Epoch} &\textbf{Time Scale} & \textbf{Batch Size} & \textbf{$\mbb{R}^d$} & \textbf{$\#$ of base matrices} (L) & \textbf{$\#$ of parameters} \\ 
\midrule
Human Activity    & $1 \times 10^{-3}$ & 400 & 1/221 & 256 & 288 & 256 & 1.65M \\
Pendulum  & $1 \times 10^{-3}$ & 500 & 0.1 & 50 & 20 & 15 & 19.6K \\
USHCN & $1 \times 10^{-3}$ & 500 & 0.2 & 50  & 20 & 20 & 18.5K \\
Physionet & $1 \times 10^{-3}$& 500 & 0.3 & 100 & 24 & 20 & 28.5K \\
\bottomrule
\end{tabular} }
\end{table}

\subsection{Masking Scheme}\label{sec:masking scheme}
Figure~\ref{figure:masked_attention} provides a detailed illustration of the masked attention mechanism described in the assimilation schemes introduced in Section~\ref{sec:efficient modeling}.
\begin{figure}[h]
\centering
\includegraphics[width=1.\textwidth,]{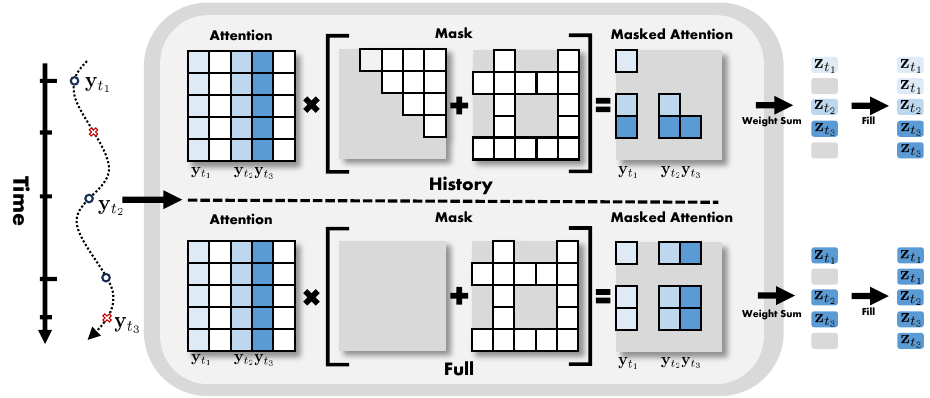}
\caption{\textbf{Illustraion of Masked Attention}. Given the observed time stamps $\mc{T} = \{t_i\}_{i=1}^3$ ($\circ$) and the unseen time stamps $\textcolor{red}{\mc{T}_u}$ (\textcolor{red}{$\times$}), attention scores are computed based on the latent observations $\{\mb{y}_t\}_{t \in \mc{T}}$. \textbf{(Up)} The mask of the \textbf{History assimilation scheme} consists of two components: one masks attention scores related to future time stamps, and the other masks those related to unseen time stamps. \textbf{(Down)} The mask of the \textbf{Full assimilation scheme} only blocks the attention scores corresponding to unseen time stamps. Using these masks, masked attention is calculated. For \textbf{History assimilation scheme}, the latent variables $\mb{z}_{t_i}$ include information up to time $t_i$, while for \textbf{Full assimilation scheme}, $\mb{z}_{t_i}$ incorporates all available information. Finally, latent variables corresponding to unseen time stamps are filled with the nearest past latent variable value.}
\end{figure}\label{figure:masked_attention}
\vspace{-4mm}

\begin{algorithm}[t]
\caption{Training ACSSM}
\begin{algorithmic}[H]
    \STATE \textbf{Input. } Time-series $\mb{o}_{t \in \mc{T}'}$ observed over the entire time stamps $\mc{T}' = \mc{T} \cup \mc{T}_u = \{t_1, \cdots, t_{k'}\}$ with the observed time stamps $\mc{T}$ and unseen time stamps $\mc{T}_u$, encoder neural network $q_{\psi}$, decoder neural network $p_{\phi}$, trainable latent parameters $(\mb{m}_0, \mb{\Sigma}_0)$ and neural networks $\mb{f}_{\theta}$ and $\mb{T}_{\theta}$.
    \FOR{$m=1, \cdots, M$}
        \STATE Compute $q_{\phi}(\mb{y}_{t \in \mc{T}} | \mb{o}_{t \in \mc{T}})$ by using~(\ref{eq:encoder}) on observed time stamps $\mc{T}$.
        \STATE Sample latent observations $\mb{y}_{t \in \mc{T}} \sim q_{\phi}(\mb{y}_{t \in \mc{T}} | \mb{o}_{t \in \mc{T}})$
        \STATE \textbf{Parallel computation of objective function $\mathbf{\mc{L}(\alpha)}$ in~(\ref{eq:objective function})}
        \IF{\textit{history assimilation}}
        \STATE  Estimate latent variables $\mb{z}_{t \in \mc{T}'}$ such that $\mb{z}_t = \mb{T}_{\theta}(\mc{H}_t)$ illustrated in Fig~(\ref{figure:masked_attention})
        \ELSIF{\textit{full assimilation}}
        \STATE Estimate latent variables $\mb{z}_{t \in \mc{T}'}$ such that $\mb{z}_t = \mb{T}_{\theta}(\mc{H}_T)$ illustrated in Fig~(\ref{figure:masked_attention})
        \ENDIF
        \STATE Compute $\mb{A}_i = \mb{f}_{\theta}(\mb{z}_{t_i})$ and $\alpha_i = \mb{B}_{\theta}\mb{z}_{t_i}$ for all $i \in [1:k']$ by using~(\ref{eq:control functions}).
        \STATE \vspace{-5mm}
        \hspace{33em}\raisebox{.5\baselineskip}[0pt][0pt]{$\left.\rule{0pt}{5.\baselineskip}\right\}\ \mbox{\textbf{do in parallel}}$}
        \STATE Estimate $\{\mb{m}_t, \mb{\Sigma}_t\}_{t \in \mc{T}'}$ by parallel scan algorithm described in Appendix~\ref{sec:parallel scan}.
        \STATE Sample $\mb{X}^{\alpha}_{t \in \mc{T'}} \stackrel{i.i.d}{\sim} \otimes_{t \in \mc{T}'} \mc{N}(\mb{m}_t, \mb{\Sigma}_t)$ on entire time stamps $\mc{T}'$.
        \STATE Compute $\tilde{\mc{L}}(\theta) = \mathbb{E}\left[ \sum_{i=1}^{k} \left( \frac{(t_i - t_{i-1})}{2}\norm{\alpha_i}^2 -  \log g_i(\mb{y}_{t_i} | \mb{X}^{\alpha}_{t_i}\right)\right]$ by using~(\ref{eq:objective function})
        \STATE Sample latent predictions $\tilde{\mb{y}}_{t \in \mc{T}'} \sim g(\mb{y}_{t \in \mc{T}'} | \mb{X}^{\alpha}_{t \in \mc{T}'})$ on entire time stamps $\mc{T}'$.
        \STATE Compute $p_{\psi}(\mb{o}_{t \in \mc{T}'} | \tilde{\mb{y}}_{t \in \mc{T}})$ by using~(\ref{eq:decoder}) on entire time stamps $\mc{T}'$
        \STATE Optimize $\text{ELBO}(\psi, \phi, \theta)$ by using~(\ref{eq:ELBO final}) with gradient descent.
    \ENDFOR
\end{algorithmic}\label{algorithm:training}
\end{algorithm}

\begin{algorithm}[t]
\caption{Inference with ACSSM}
\begin{algorithmic}[H]
    \STATE \textbf{Input. } Time-series $\mb{o}_{t \in \mc{T}}$ observed only on the observed time stamps $\mc{T}$, unseen time stamps $\mc{T}_u$ trained model parameters $\{\phi, \psi, \theta\}$.
        \STATE \textbf{Parallel Sampling of latent dynamics}
        \STATE Sample latent observations $\mb{y}_{t \in \mc{T}} \sim q_{\phi}(\mb{y}_{t \in \mc{T}} | \mb{o}_{t \in \mc{T}})$ by using~(\ref{eq:encoder}) on observed time stamps $\mc{T}$.
        \IF{\textit{history assimilation}}
        \STATE  Estimate latent variables $\mb{z}_{t \in \mc{T}'}$ such that $\mb{z}_t = \mb{T}_{\theta}(\mc{H}_t)$ illustrated in Fig~(\ref{figure:masked_attention})
        \ELSIF{\textit{full assimilation}}
        \STATE Estimate latent variables $\mb{z}_{t \in \mc{T}'}$ such that $\mb{z}_t = \mb{T}_{\theta}(\mc{H}_T)$ illustrated in Fig~(\ref{figure:masked_attention})
        \ENDIF
        \STATE Compute $\mb{A}_i = \mb{f}_{\theta}(\mb{z}_{t_i})$ and $\alpha_i = \mb{B}_{\theta}\mb{z}_{t_i}$ for all $i \in [1:k']$ by using~(\ref{eq:control functions}).    
        \STATE \vspace{-5mm}
        \hspace{33em}\raisebox{.5\baselineskip}[0pt][0pt]{$\left.\rule{0pt}{4.7\baselineskip}\right\}\ \mbox{\textbf{do in parallel}}$}
        \STATE Estimate $\{\mb{m}_t, \mb{\Sigma}_t\}_{t \in \mc{T}'}$ with parallel scan algorithm described in Appendix~\ref{sec:parallel scan}.
        \STATE Sample $N$ trajectories $\mb{X}^{\alpha, n}_{t \in \mc{T'}} \stackrel{i.i.d}{\sim} \otimes_{t \in \mc{T}} \mc{N}(\mb{m}_t, \mb{\Sigma}_t)$ on entire time stamps $\mc{T}'$.
        \STATE Sample latent predictions $\tilde{\mb{y}}^n_{t \in \mc{T}'} \sim g(\mb{y}_{t \in \mc{T}'} | \mb{X}^{\alpha, n}_{t \in \mc{T}'})$ on entire time stamps $\mc{T}'$.
        \STATE Sample predictions $\tilde{\mb{o}}^n_{t \in \mc{T}'} \sim p_{\psi}(\mb{o}_{t \in \mc{T}'} | \tilde{\mb{y}}^n_{t \in \mc{T}'})$ on entire time stamps $\mc{T}'$.
\end{algorithmic}\label{algorithm:inference}
\end{algorithm}

\subsection{Training details}
\paragraph{Training} For all experiments, except for human activity classification, we followed the same experimental setup as CRU~\citep{schirmer2022modeling}\footnote{\url{https://github.com/boschresearch/Continuous-Recurrent-Units} under AGPL-3.0 license}. For the human activity classification task, we used the setup described in mTAND~\citep{shukla2021multi}\footnote{\url{https://github.com/reml-lab/mTAN} under MIT license}. For a fair comparison, we kept the number of model parameters similar to mTAND for the Human Activity dataset and CRU for the other datasets. The model was trained using the Adam optimizer~\citep{diederik2014adam} in all experiments. For the per-point classification and regression tasks, we applied a weight decay of $1 \times 10^{-2}$ and applied gradient clipping for classification task, while no weight decay was used for other tasks. Additionally, to prevent overfitting, we limited the training epochs to 200 for the Physionet extrapolation experiments. The remaining training hyper-parameters are detailed in Table~\ref{table:training_hyperparameters}.

To estimate the objective function (\ref{eq:ELBO final}), for the decoder $p_{\psi}(\cdot | \mb{y}) = \mc{N}(\mb{p}_{\psi}(\mb{y}), \mb{\Sigma}_p)$, we used a Gaussian likelihood with a fixed variance of $\mb{\Sigma}_p = 0.01 \cdot \mb{I}$, following the approach outlined in~\citep{rubanova2019latent} for the Pendulum, USHCN, and Physionet datasets. For the Human Activity dataset, we employed a categorical likelihood. Additionally, for $\mc{L}(\alpha)$, the initial conditions $(\mb{m}_0, \mb{\Sigma}_0)$ of the latent state were trainable parameters, initialized randomly. The covariance $\mb{\Sigma}_0$ was set using an exponential transformation, with a small constant ($\epsilon = 10^{-6}$) added to ensure positivity. The latent state transition matrix $\mb{A}(\cdot)$ was initialized as $\mb{A} = \sum_{l=1}^L \mb{E} \mb{D}_l \mb{E}^{\top}$, where $\mb{E}$ was initialized orthonormally following~\citep{lezcano2019trivializations}, and the diagonal matrices $\{\mb{D}_l\}_{l \in [1:L]}$ were initialized randomly and passed through a negative exponential to keep the values negative. For the potentials $\{g_i\}_{i \in [1:k]}$, we used a Gaussian likelihood $g_t(\cdot|\mb{x}) = \mc{N}(\cdot | r_t(\mb{m}_t, \mc{H}_t), \mb{\Sigma}_{g})$, where $r_t(\mb{x}) = \mb{M}_{\theta}\mb{m}_t + \mb{z}_t$ with a trainable matrix $\mb{M}_{\theta}$, transformer output $\mb{z}_t = \mb{T}_{\theta}(\mc{H}_t)$, and a covariance $\mb{\Sigma}_t$.

\paragraph{Architecture} 
 In all experiments except for the Pendulum dataset, the time series $\mb{o}_{0, T}$ was provided with the observation mask concatenated. We was used a dropout rate of 0.2 for a Human Activity, while no dropout rate was used for the other experiments. For our method, the networks used for each dataset are listed in below, where \texttt{d} is the dimension of latent space $\mathbb{R}^d$ as described in Table~\ref{table:training_hyperparameters}.
\begin{itemize}[leftmargin=0.5cm]
    \item {\textbf{Human Activity} (Input size, \texttt{I=12})
    \begin{itemize}[leftmargin=0.cm]
    \item Encoder network $\mb{q}_{\phi}$: $\texttt{Input(2$\times$I) $\to$ Linear(d) $\to$ ReLU() $\to$ Linear(d)}$
    \item Transformer $\mb{T}_{\theta}$:
    $\texttt{Input(d) $\to$ Masked Attn(d) $\to$ FFN(d) $\to$ GRU(d)}$
    \item Weight network $\mb{f}_{\theta}$:$\texttt{Input(d) $\to$ Linear(d) $\to$ Softmax()}$
    \item Decoder network $\mb{p}_{\psi}$: $\texttt{Input(d) $\to$ Linear(d)}$
    \end{itemize}}

    \item {\textbf{Pendulum} (Input size, \texttt{I=1$\times$24$\times$24})
    \begin{itemize}[leftmargin=0.cm]
    \item Encoder network $\mb{q}_{\phi}$: \texttt{Input(I) $\to$ Conv2d(1, 12, kernel$\_$size=5, stride=4, padding=2) $\to$ ReLU() $\to$ MaxPool2d(kernel$\_$size=2, stride=2 $\to$ Conv2d(12,12, kernel$\_$size=3, stride=2, padding=1) $\to$ ReLU() $\to$ MaxPool2d(kernel$\_$size=2, stride=2) $\to$ Flatten(108) $\to$ Linear(d)
    $\to$ ReLU()}
    \item Transformer $\mb{T}_{\theta}$:
    $\texttt{Input(d)} \to \texttt{4} \times [\texttt{Masked Attn(d)} \to \texttt{FFN(d)}] \to \texttt{GRU(d)}$
    \item Weight network $\mb{f}_{\theta}$:$\texttt{Input(d) $\to$ Linear(d) $\to$ Softmax()}$
    \item Decoder network $\mb{p}_{\psi}$: $\texttt{Input(d) $\to$ Linear(d) $\to$ ReLU() $\to$ Linear(d)}$
    \end{itemize}}

    \item {\textbf{USCHCN $\&$ Physionet} (Input size, \texttt{I=5} for USHCN and \texttt{I=37} for Physionet)
    \begin{itemize}[leftmargin=0.cm]
    \item Encoder network $\mb{q}_{\phi}$: \texttt{Input(2$\times$I) $\to$ Linear(d) $\to$ ReLU() $\to$ LayerNorm(d) $\to$ Linear(d) $\to$ ReLU() $\to$ LayerNorm(d) $\to$ Linear(d) $\to$ ReLU() $\to$ LayerNorm(d) $\to$ Linear(d)}
    \item Transformer $\mb{T}_{\theta}$:
    $\texttt{Input(d)} \to \texttt{4} \times [\texttt{Masked Attn(d)} \to \texttt{FFN(d)}] \to \texttt{GRU(d)}$
    \item Weight network $\mb{f}_{\theta}$:$\texttt{Input(d) $\to$ Linear(d) $\to$ Softmax()}$
    \item Decoder network $\mb{p}_{\psi}$: \texttt{Input(d) $\to$ Linear(d) $\to$ ReLU() $\to$ Linear(d) $\to$ ReLU() $\to$ Linear(d)$\to$ ReLU() $\to$ Linear(d)}
    \end{itemize}}
    \item {\texttt{FFN}
    \begin{itemize}[leftmargin=0.cm]
    \item \texttt{Input(d) $\to$ LayerNorm(d) $\to$ Linear(d) $\to$ GeLu() $\to$ Linear(d) $\to$ Residual(Input(d))}
    \end{itemize}}
    
    \item {\texttt{Masked Attn}
    \begin{itemize}[leftmargin=0.cm]
    \item \texttt{Input(Q, K, V, mask=M) $\to$ Normalize(Q) $\to$ Linear(Q) $\to$ Linear(K) $\to$ Linear(V) $\to$ Attention(Q, K) $\to$ Masking(M) $\to$ Softmax(d) $\to$ Dropout() $\to$ Matmul(V) $\to$ LayerNorm(d) $\to$ Linear(d) $\to$ Residual(Q)}
    \end{itemize}}
\end{itemize}

\end{document}